\theoremstyle{thmstyletwo}%
\newtheorem{theorem}{Theorem} %
\newtheorem{lemma}{Lemma} %
\newtheorem{remark}{Remark}
\newtheorem{defi}{Definition}
\numberwithin{equation}{section}
\newcommand{\rank}{\operatorname{rank}}
\DeclareMathAlphabet{\mathcal}{OMS}{cmsy}{m}{n}
\DeclareSymbolFont{largesymbols}{OMX}{cmex}{m}{n}
\newcommand{\B}{\mathcal{B}}
\newcommand{\D}{\mathcal{D}}
\newcommand{\E}{\mathcal{E}}
\newcommand{\G}{\mathcal{G}}
\newcommand{\I}{\mathcal{I}}
\newcommand{\mL}{\mathcal{L}}
\newcommand{\mO}{\mathcal{O}}
\newcommand{\U}{\mathcal{U}}
\newcommand{\V}{\mathcal{V}}
\newcommand{\mS}{\mathcal{S}}
\newcommand{\W}{\mathcal{W}}
\newcommand{\X}{\mathcal{X}}
\newcommand{\Y}{\mathcal{Y}}
\newcommand{\Z}{\mathcal{Z}}
\newcommand{\R}{\mathbb{R}}
\newcommand{\bB}{\mathbb{B}}
\newcommand{\bD}{\mathbb{D}}
\newcommand{\bC}{\mathbb{C}}
\newcommand{\mC}{\mathcal{C}}
\newcommand{\mH}{\mathcal{H}}
\begin{document}

\DOI{DOI HERE}
\copyrightyear{2024}
\vol{00}
\pubyear{2021}
\access{Advance Access Publication Date: Day Month Year}
\appnotes{Paper}
\firstpage{1}


\title[Hyperspectral Anomaly Detection]{Spectral-Spatial Extraction through Layered Tensor Decomposition for Hyperspectral Anomaly Detection}

\author{Quan Yu\ORCID{0000-0002-8051-7477}
\address{\orgdiv{School of Mathematics}, \orgname{Hunan University}, \orgaddress{\postcode{410082}, \state{Changsha}, \country{China}}}}
\author{Yu-Hong Dai\ORCID{0000-0002-6932-9512}
\address{\orgdiv{State Key Laboratory of Mathematical Sciences, Academy of Mathematics and Systems Science}, \orgname{Chinese Academy of Sciences}, \orgaddress{\postcode{100190}, \state{Beijing}, \country{China}}}
\address{\orgdiv{ School of Mathematical Sciences}, \orgname{ University of Chinese Academy of Sciences}, \orgaddress{\postcode{100049}, \state{Beijing}, \country{China}}}}
\author{Minru Bai*\ORCID{0000-0002-9960-6138}
	\address{\orgdiv{School of Mathematics}, \orgname{Hunan University}, \orgaddress{\postcode{410082}, \state{Changsha}, \country{China}}}}

\authormark{Author Name et al.}

\corresp[*]{Corresponding author: \href{email:email-id.com}{minru-bai@hnu.edu.cn}}

\received{Date}{0}{Year}
\revised{Date}{0}{Year}
\accepted{Date}{0}{Year}

\abstract{Low rank tensor representation (LRTR) methods are very useful for hyperspectral anomaly detection (HAD). To overcome the limitations that they often overlook spectral anomaly and rely on large-scale matrix singular value decomposition, we first apply non-negative matrix factorization (NMF) to alleviate spectral dimensionality redundancy and extract spectral anomaly and then employ LRTR to extract spatial anomaly while mitigating spatial redundancy, yielding a highly efffcient  layered tensor decomposition (LTD) framework for HAD. An iterative algorithm based on proximal alternating minimization is developed to solve the proposed LTD model, with convergence guarantees provided. Moreover, we introduce a rank reduction strategy with validation mechanism that adaptively reduces data size while preventing excessive reduction. Theoretically, we rigorously establish the equivalence between the tensor tubal rank and tensor group sparsity regularization (TGSR) and, under mild conditions, demonstrate that the relaxed formulation of TGSR shares the same global minimizers and optimal values as its original counterpart. Experimental results on the Airport-Beach-Urban and MVTec datasets demonstrate that our approach outperforms state-of-the-art methods in the HAD task.}

\keywords{Anomaly detection; layered tensor decomposition; non-negative matrix factorization; low rank tensor representation; group sparsity regularization.}
 

\maketitle

\section{Introduction}
Hyperspectral image (HSI) integrates 2D spatial information with detailed spectral data, capturing the fundamental characteristics of objects. This integration offers substantial advantages in applications such as denoising \cite{TLZ23}, classification \cite{HGY21}, image fusion \cite{HYXS20,BCE18}, and anomaly detection \cite{YB24}. Recently, hyperspectral anomaly detection (HAD) has garnered significant attention due to its critical role in public safety and defense applications. The objective of HAD is to detect and isolate anomalous objects from their background. To address this problem, a variety of approaches have been developed, including statistic based methods, deep learning based methods and low rank representation based methods.


One of the most commonly used statistical approaches for anomaly detection is the Reed-Xiaoli (RX) detector \cite{RY90}, which models background pixels using a multivariate Gaussian distribution. The RX detector calculates the mean and covariance matrix of the background and then determines the Mahalanobis distance between a test pixel and the background mean to identify anomalies based on a predefined threshold. Although the RX detector is simple and effective, its exclusive reliance on the covariance matrix limits its ability to capture finer relationships within HSI data. To address this limitation, several variants of the RX detector have been introduced, such as kernel RX \cite{KN05,ZKAE16}, subspace RX \cite{Sch04}, and local RX \cite{MGGP13}. However, despite these enhancements, these methods still rely on manually crafted distribution models, which may struggle to capture the complexity and variability of real-world backgrounds.

Recent studies have highlighted deep learning based methods for effective HAD, which utilize deep networks to mine and interpret higher order information contained in HSI. These methods can be categorized into unsupervised, supervised, and self-supervised learning. Unsupervised learning methods, such as autoencoder (AE) detector \cite{BCKA15} and robust graph AE (RGAE) detector \cite{FMM22}, learn background representations or anomalies by analyzing the inherent structure of the data without requiring labeled data. Supervised learning methods, such as convolutional neural network based detection (CNND) \cite{LWD17}, use labeled data to classify each pixel as background or anomalous. Self-supervised learning methods, like blind-spot self-supervised learning network (BS$^3$LNet) \cite{GWZ23} and pixel-shuffle downsampling blind-spot reconstruction network (PDBSNet) \cite{WZG23}, generate their own labels to distinguish between similar and dissimilar data pairs, thereby enhancing anomaly detection. However, these methods often demand extensive training times and involve intricate tuning of network architectures and training parameters.

Beyond the statistical and deep learning methods mentioned above, recent research has focused on the low rank representation (LRR) of the background and the sparsity of anomaly. With the maturation of matrix theory, many researchers first convert HSI into a two-dimensional matrix and then use LRR for anomaly detection. A widely adopted approach is robust principal component analysis (RPCA), which decomposes HSI into a low rank matrix and a sparse anomaly matrix \cite{CYKC13}. Subsequently, a series of improved methods have been proposed, such as those addressing noise \cite{SLL14} and incorporating Mahalanobis distance metrics \cite{ZDZW16}. The above methods struggle to distinguish weak anomalies. To address this limitation, a method based on LRR that uses a fixed background dictionary to construct multiple subspaces was proposed \cite{ZGZ21}. Additionally, \cite{MZH22} enhanced the robustness and detection performance by employing a learned background dictionary. 
HSI can be treated as a third order tensor with one spectral and two spatial dimensions. Matrix based methods often overlook this multi-dimensional structure, potentially degrading detection performance. Consequently, methods based on low rank tensor representation (LRTR) of the background and anomaly sparsity have rapidly developed for HAD. Chen et al. \cite{CYW18} extended matrix RPCA to tensors, proposing tensor principal component analysis (TPCA) to effectively distinguish between the background and targets. Similarly, many methods construct tensor background dictionary to enhance representation quality \cite{WWH23,YB24}. Additionally, some regularizations, such as total variation regularized \cite{FCY23}, are considered to improve anomaly detection quality.

Until now, the LRTR based HAD method has faced two main drawbacks. First, due to the redundancy in the spectral dimension of HSI, some algorithms perform orthogonal transformations such as TPCA and maximum noise fraction (MNF) transformation \cite{WWH23,HWL23} to reduce the spectral dimension and computational load, followed by anomaly detection on the preprocessed data. This approach results in the loss of spectral domain anomaly and divides the original data processing into two stages, significantly compromising effectiveness. Second, extracting spatial domain anomaly often involves the singular value decomposition (SVD) of large matrices, leading to low computational efficiency.

To address the aforementioned limitations, we propose an HAD model based on layered tensor decomposition, incorporating non-negative matrix factorization (NMF) and low rank tensor representation (LRTR), named LTD. LTD efficiently detects both spectral and spatial anomaly within a unified framework, as illustrated in Figure \ref{fig:flow}. First, we use NMF to reduce the spectral dimension and computational load. Notably, this process not only yields the coefficient tensor but also identifies spectral anomaly, which are often overlooked by other models. Then, we apply LRTR to capture the spatial low rank nature of the coefficient tensor, during which spatial anomaly is detected. Finally, we integrate the spectral and spatial anomalies to achieve a superior anomaly detection result. In the LRTR process, we demonstrate the equivalence between tensor tubal rank and tensor group sparsity regularization (TGSR). Additionally, we show that TGSR and its relaxed formulation share the same global minimizers and optimal values under certain mild conditions. Leveraging these insights, the relaxed formulation of TGSR is employed to characterize the rank of the coefficient tensor. This regularization decomposes the coefficient tensor into two smaller tensors, whose sizes are directly related to the rank of the coefficient tensor. Building on the aforementioned decompositions and the established theory related to group sparse support sets, we developed a rank reduction strategy with validation mechanism.
This strategy adaptively and dynamically reduces the size of these two tensors, thereby improving computational efficiency and eliminating the reliance on large-scale matrix SVD. Additionally, it ensures the reduction process is appropriate, preventing excessively low ranks that could compromise computational performance.  In summary, our main contributions include:

\begin{itemize}
	\item[(1)] We develop a layered tensor decomposition (LTD) framework that synergistically integrates NMF and LRTR to reduce the spectral and spatial dimensions of HSI data, respectively. This framework decomposes large-scale HSI data into the product of multiple smaller tensors, thereby enabling a holistic decomposition that substantially enhances algorithmic performance.
	
	\item[(2)] We propose a unified framework that harnesses LTD to simultaneously capture both spectral and spatial domain anomalies. To our knowledge, this is the first LRR based approach that integrates these anomaly types within a single framework, yielding exceptional detection performance.
	
	\item[(3)] We demonstrate the equivalence between tensor tubal rank and tensor group sparsity regularization (TGSR). Additionally, we show that TGSR and its relaxed formulation share the same global minimizers and optimal values under certain mild conditions. Furthermore, we develop theories related to group sparse support sets. 

	\item[(4)] We propose a rank reduction strategy with validation mechanism that adaptively and dynamically reduces tensor size. This approach ensures the reduction process is appropriate, correcting any erroneous reductions to prevent excessively low ranks that could compromise performance.
	
	\item[(5)] We propose a proximal alternating minimization algorithm to solve the proposed model and provide a comprehensive convergence analysis.
\end{itemize}
Experimental results on Airport-Beach-Urban and MVTec databases demonstrate the advantages of our method.

\section{Preliminaries}
In this section, we present the notations and preliminaries utilized throughout this article. The sets of real numbers and complex numbers are denoted by $\mathbb{R}$ and $\mathbb{C}$, respectively. Meanwhile, some basic notations are summarized in Table \ref{tab:Notation}.
\begin{table}[htbp]
	\centering
	\caption{Notations.}
	\begin{tabularx}{\textwidth}{p{3cm} X}
		\hline Notation           &       Description \\
		\hline   $[n]$            &  the set   $\{1,2, \ldots, n\}$ \\
		$x$, $\bm{x}$, $X$, $\mathcal{X}$ & scalar, vector, matrix, tensor \\
		$\X(:,:,k)$ or $X^{(k)}$ & $k$-th frontal slice of a third order tensor $\X$ \\
		$\mathcal{X}\left(i, j,k\right)$ or $\mathcal{X}_{ijk}$ & the $(i, j, k)$-th index value of a third order tensor $\X$ \\
		$\|\X\|$ & the Frobenius norm of $\X$, which is defined as $\sqrt{\sum_{i j k}|\X_{i j k}|^2}$ \\
		$\left\langle \X, \Y \right\rangle $ & the inner product of two same-sized third order tensors $\mathcal{X}$ and $\mathcal{Y}$, which is defined as $\sum_{i j k}\X_{i j k}\Y_{i j k}$ \\
		$X_{(3)}$ & the mode-$3$ unfolding of a third order tensor $\mathcal{X}$ (Definition \ref{def:mode}) \\
		$\times_3$ & the mode-$3$ tensor-matrix product (Definition \ref{def:tmp})\\
		$\X^T$ & the conjugate transpose of a third order tensor $\mathcal{X}$ (Definition \ref{def:ct}) \\
		$\X*\Y$ & tensor product between two third order tensors $\mathcal{X}$ and $\mathcal{Y}$ (Definition \ref{def:t-product}) \\
		$\left\|X\right\|_2$ & the spectral norm of a matrix $X$, which is the largest singular value \\
		$\delta_{\mathbb X}\left(\X\right)$  & the indicator function of a subset $\mathbb X$, which is $0$ if $\X\in {\mathbb X}$ and $\infty$ if $\X\notin {\mathbb X}$ \\
		$\X \ge 0$ & the tensor $\X$ with all elements being non-negative \\
		$\I$  & the identity tensor (Definition \ref{def:it}) \\
		 $\Gamma(\X)$   & the group sparse support set of $\X$, which is defined as  $\{ j \mid\|\X(:,j,:)\| \neq 0\}$\\
		\hline
	\end{tabularx}
	\label{tab:Notation}%
\end{table}%

 Next, we introduce some essential definitions related to tensor operations. Consider a third order tensor $\X \in \R^{n_1\times n_2 \times n_3}$. The Discrete Fourier transform (DFT) along its third mode, denoted as $\bar{\X}$, is computed in MATLAB using $\bar{\mathcal{X}}=\operatorname{fft}(\mathcal{X},[~], 3)$. To reverse this transformation, the inverse DFT is applied using $\mathcal{X}=\operatorname{ifft}(\bar{\mathcal{X}}, [~], 3)$.
By defining the block circulant matrix \( \operatorname{bcirc}(\mathcal{X}) \) as
$$
\operatorname{bcirc}(\mathcal{X})=\left[\begin{array}{cccc}
	X^{(1)} & X^{\left(n_3\right)} & \cdots & X^{(2)} \\
	X^{(2)} & X^{(1)} & \cdots & X^{(3)} \\
	\vdots & \vdots & \ddots & \vdots \\
	X^{\left(n_3\right)} & X^{\left(n_3-1\right)} & \cdots & X^{(1)}
\end{array}\right],
$$
we can introduce the t-product between two third order tensors. This, in turn, allows us to propose a new tensor decomposition framework known as T-SVD.

\begin{defi}(t-product \cite{KM11}). \label{def:t-product}
	The t-product of $\mathcal{X} \in \mathbb{R}^{n_1 \times n_2 \times n_3}$ and $\mathcal{Y} \in \mathbb{R}^{n_2 \times n_4 \times n_3}$ is defined as:
	$$
	\mathcal{X} * \mathcal{Y} = \operatorname{fold}(\operatorname{bcirc}(\mathcal{X}) \cdot \operatorname{unfold}(\mathcal{Y})) \in \mathbb{R}^{n_1 \times n_4 \times n_3},
	$$
	where $\operatorname{unfold}(\mathcal{Y}) = [Y^{(1)}; Y^{(2)}; \ldots; Y^{(n_3)}] \in \mathbb{R}^{n_2 n_3 \times n_4}$. The inverse operation, $\operatorname{fold}$, is defined such that $\operatorname{fold}(\operatorname{unfold}(\mathcal{Y})) = \mathcal{Y}$.
\end{defi}

\begin{theorem}(T-SVD \cite{KM11}).
	 Let $\X$ be an $n_1\times n_2 \times n_3$ real-valued tensor. Then it can be factorized as
	 $ \X = \U * \mS * \V^T $,
	 where $\mathcal{U} \in \mathbb{R}^{n_1 \times n_1 \times n_3}$ and $\mathcal{V} \in \mathbb{R}^{n_2 \times n_2 \times n_3}$ are orthogonal tensors, $\mathcal{S} \in \mathbb{R}^{n_1 \times n_2 \times n_3}$ is an f-diagonal tensor.
\end{theorem}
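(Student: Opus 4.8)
The plan is to pass to the Fourier domain along the third mode, where the t-product linearizes into a facewise matrix product, and then assemble the decomposition from ordinary matrix SVDs of the frontal slices. The whole argument rests on the block-diagonalization identity for $\operatorname{bcirc}(\X)$: the normalized DFT matrix $F_{n_3}$ satisfies that $(F_{n_3}\otimes I_{n_1})\operatorname{bcirc}(\X)(F_{n_3}^{-1}\otimes I_{n_2})$ is the block diagonal matrix whose $k$-th diagonal block is the frontal slice $\bar X^{(k)}$ of $\bar\X=\operatorname{fft}(\X,[~],3)$. From Definition \ref{def:t-product} this immediately gives that the t-product $\X*\Y$ corresponds, in the Fourier domain, to the facewise products $\bar X^{(k)}\bar Y^{(k)}$ for $k\in[n_3]$; that orthogonality of a tensor $\U$ (i.e. $\U*\U^T=\U^T*\U=\I$) is equivalent to each $\bar U^{(k)}$ being unitary; and that f-diagonality is preserved by the DFT, since the transform forms linear combinations across frontal slices and a linear combination of diagonal matrices is diagonal.

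With these correspondences in hand, I would take, for each $k\in[n_3]$, an ordinary complex SVD $\bar X^{(k)}=\bar U^{(k)}\bar S^{(k)}(\bar V^{(k)})^{*}$, where $\bar U^{(k)}\in\C^{n_1\times n_1}$ and $\bar V^{(k)}\in\C^{n_2\times n_2}$ are unitary and $\bar S^{(k)}\in\R^{n_1\times n_2}$ is diagonal with nonnegative entries. Stacking these frontal slices produces Fourier-domain tensors $\bar\U,\bar\mS,\bar\V$ for which $\bar X^{(k)}=\bar U^{(k)}\bar S^{(k)}(\bar V^{(k)})^{*}$ holds facewise, which is precisely the Fourier-domain form of $\X=\U*\mS*\V^T$. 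Applying the inverse DFT then yields tensors $\U,\mS,\V$ satisfying $\X=\U*\mS*\V^T$, with $\U,\V$ orthogonal and $\mS$ f-diagonal by the correspondences recorded above.

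The main obstacle is ensuring that $\U,\mS,\V$ come out genuinely real-valued rather than merely complex. Because $\X$ is real, its transform obeys the conjugate-symmetry relation $\bar X^{(k)}=\overline{\bar X^{(n_3-k+2)}}$ for $k=2,\dots,n_3$ (with the index $k=1$, and $k=n_3/2+1$ when $n_3$ is even, giving self-conjugate and hence real slices), so the frontal slices occur in conjugate pairs. An arbitrary slicewise SVD need not respect this symmetry, and the inverse DFT of an asymmetric family would be complex. The remedy is to choose the SVDs on the first half of the slices freely, take real SVDs on the self-conjugate slices, and then \emph{define} the SVD on each conjugate partner by conjugation, i.e. set $\bar U^{(n_3-k+2)}=\overline{\bar U^{(k)}}$, $\bar S^{(n_3-k+2)}=\bar S^{(k)}$, and $\bar V^{(n_3-k+2)}=\overline{\bar V^{(k)}}$. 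Since conjugating a unitary matrix gives a unitary matrix and conjugating a diagonal matrix gives a diagonal matrix, this preserves the SVD property and the facewise factorization, while forcing $\bar\U,\bar\mS,\bar\V$ to obey the same conjugate symmetry as $\bar\X$; their inverse DFTs $\U,\mS,\V$ are therefore real.

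In summary, the only delicate step is this reality argument enforcing conjugate symmetry across the chosen slicewise SVDs; once it is in place, the block-diagonalization identity reduces every remaining claim to the corresponding elementary fact about ordinary matrix SVD applied facewise, giving the factorization $\X=\U*\mS*\V^T$ with $\U,\V$ orthogonal and $\mS$ f-diagonal.
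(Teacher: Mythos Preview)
Your proof is correct and follows the standard argument for T-SVD due to Kilmer and Martin. Note, however, that the paper does not provide its own proof of this theorem: it is stated as a preliminary result with a citation to \cite{KM11}, so there is no in-paper proof to compare against. Your approach---block-diagonalizing $\operatorname{bcirc}(\X)$ via the DFT, performing facewise matrix SVDs, enforcing conjugate symmetry across paired slices to guarantee real-valued $\U,\mS,\V$, and inverting the DFT---is precisely the construction given in the original reference, and the reality argument you flag as the delicate step is indeed the only nontrivial point.
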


\begin{defi}(tensor tubal rank \cite{KBHH13}).
	Suppose that $\X \in \mathbb{R}^{n_1 \times n_2 \times n_3}$ with T-SVD $\X = \U * \mS * \V^T$. The tensor tubal rank is defined as $\rank_t\left(\X\right)=\#\{i: \mathcal{S}(i, i,:) \neq 0\}$.
\end{defi}
Some related concepts, including the $f$-diagonal tensor, the conjugate transpose, among others, are elucidated in Appendix \ref{app:pre}.

\begin{defi}(mode-3 unfolding \cite{KB09}).\label{def:mode}
Given a third order tensor $\mathcal{X} \in \mathbb{R}^{n_1 \times n_2 \times n_3}$, its mode-3 unfolding $X_{(3)}$ is an $n_3 \times n_1 n_2$ matrix, which satisfies
$
X_{(3)}\left(k, (i-1)n_2 + j\right) = \mathcal{X}\left(i, j, k\right)
$
for $i \in [n_1]$, $j \in [n_2]$, and $k \in [n_3]$.
\end{defi}

\begin{defi}(mode-3 tensor-matrix product \cite{KB09}).\label{def:tmp}
	Given a third order tensor $\mathcal{X} \in \mathbb{R}^{n_1 \times n_2 \times n_3}$ and a matrix $M \in \mathbb{R}^{p \times n_3}$, the mode-3 product of $\mathcal{X}$ and $M$, denoted as $\mathcal{X} \times_3 M \in \R^{n_1 \times n_2 \times p}$, is defined by $(\mathcal{X} \times_3 M)_{ijk} = \sum_{l=1}^{n_3} \mathcal{X}_{ijl} M_{kl}$,
	where $i \in [n_1]$, $j \in [n_2]$, and $k \in [p]$.
\end{defi}

\section{A layered tensor decomposition method}
In this section, we propose a layered tensor decomposition (LTD) method for hyperspectral anomaly detection (HAD), which incorporates non-negative matrix factorization (NMF) and low rank tensor representation (LRTR). First, the spectral anomaly map is obtained using NMF. Next, the spatial anomaly map is extracted through LRTR. Finally, the two complementary detection maps are adaptively fused to highlight anomaly. Figure \ref{fig:flow} shows the main framework of proposed method.
\begin{figure}[htbp]
	\centering
	\includegraphics[width=0.8\linewidth]{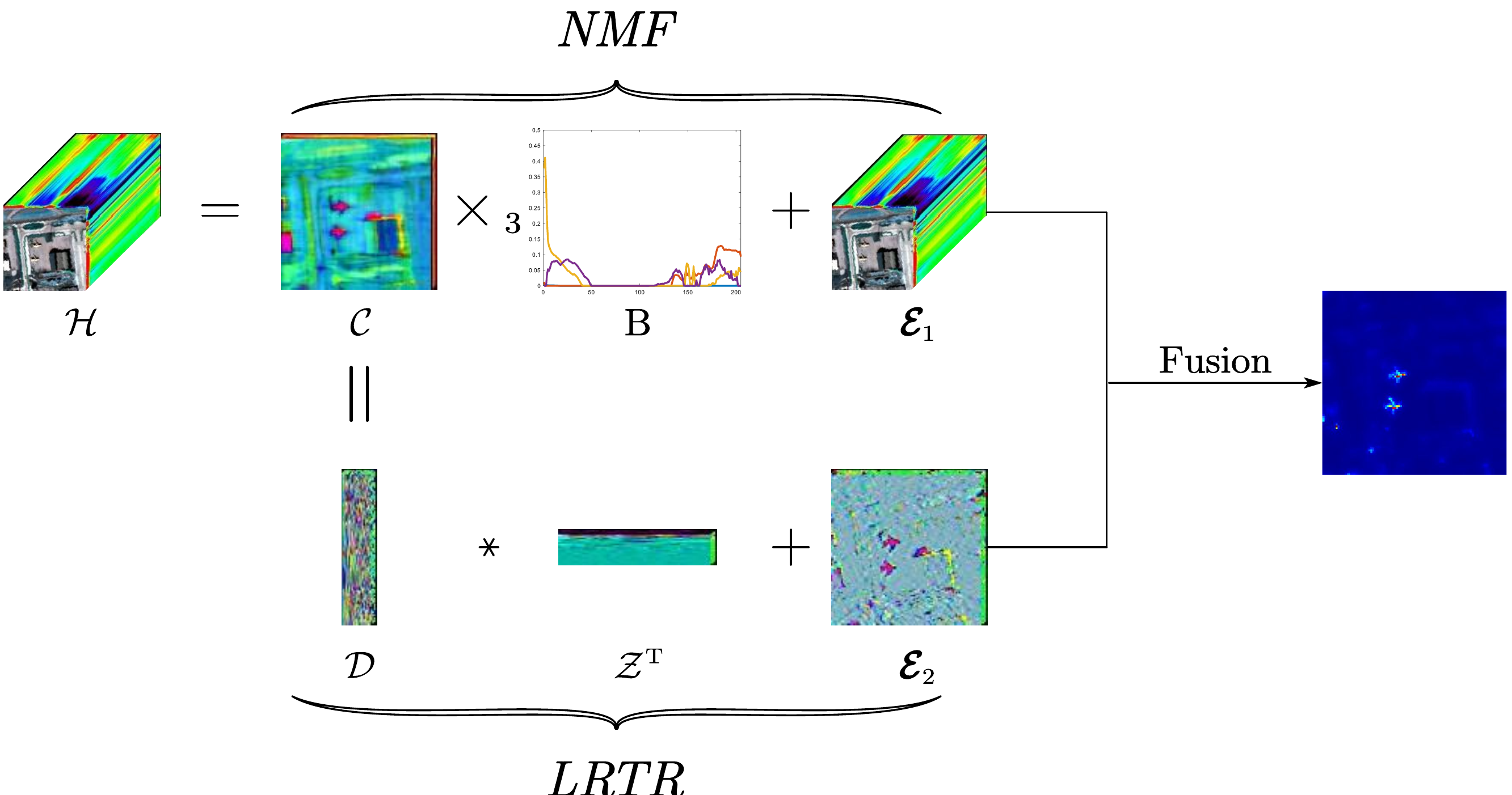}
	\caption{Flow chart of LTD.}
	\label{fig:flow}
\end{figure}

\subsection{Spectral extraction based NMF}\label{sec:NMF}
Hyperspectral image (HSI) data often contains a significant amount of redundancy due to the hundreds of highly correlated spectral bands, resulting in undesirable statistical and geometrical properties \cite{JL98,KMB07}. To overcome these drawbacks and reduce computational costs, dimensionality reduction (or band selection) in HSI data is typically applied first, followed by postprocessing tasks \cite{HWL23,WWH23}. Motivated by NMF, which aims to uncover latent and meaningful structures within the data, we decompose HSI data $\mH\in \R^{n_1\times n_2\times n_3}$ into the mode-$3$ multiplication of the coefficient tensor $\mC \in \R^{n_1\times n_2\times b}$ by the basis matrix $B\in \R^{n_3\times b}$:
\begin{equation*}\label{NMF}
	\mH=\mC\times_3 B, \quad \mC \ge 0,~ B \ge 0,
\end{equation*}
which is equivalent to the NMF: $H_{(3)}=BC_{(3)}$ \cite{KB09}. 

With the widespread application of NMF, numerous variants have been developed, including G-orthogonal NMF \cite{DLPP06}, Semi-NMF \cite{DLJ10} and Orth-NTF \cite{LGW23}. By imposing column unit constraints on each tube of the tensor $\mC$, we derive the one-side column unit NMF:
\begin{equation*}\label{NMF}
	\mH=\mC\times_3 B, \quad \mC \ge 0,~ B \ge 0,~\left\|\mC(i,j,:)\right\|=1,~\forall i\in[n_1],~ j\in [n_2].
\end{equation*}
Taking into account the one-side column unit NMF, we relax the nonnegative constraints on $\mC$. Moreover, considering that the above decomposition equality does not hold perfectly due to the presence of sparse anomaly in the spectral domain and Gaussian noise in HSI data, we propose the following optimization model:
\begin{equation}\label{Q:NMF}
	\begin{array}{cl}
		\min\limits_{\mC, B, \E_1}&  \frac{\lambda_1}{2}\left\|B\right\|^2+ \lambda_2\left\|\E_1\right\|_{11\phi}+\frac{\lambda_3}{2}\left\|\mH-\mC\times_3 B-\E_1\right\|^2   \\
		\mbox{\rm s.t.}& B \ge 0,~\left\|\mC(i,j,:)\right\|=1,~\forall i\in[n_1],~ j\in [n_2],
	\end{array}
\end{equation}
where $\|\E_1\|_{11\phi}: = \sum_{i=1}^{n_1}\sum_{j=1}^{n_2}\phi(\|\E_1(i,j,:)\|)$ with function $ \phi(\cdot):\R_+ \to \R_+ $. The parameter $\lambda_1>0$ constrains the size of the entries in $B$ to prevent excessively large values, which could induce instability in the results.
\begin{remark}
	In model \ref{Q:NMF}, we denote sparse anomaly in the spectral domain as $\E_1$ and employ the Frobenius norm to mitigate Gaussian noise in HSI data $\mH$.
\end{remark}
\begin{remark}
	In \cite{YB24}, it is demonstrated that $ \phi^{CapL1}(\cdot):=\min\{\cdot,1\} $ performs well in anomaly detection, both in terms of computational efficiency and effectiveness. Therefore, in this paper, we set $ \phi = \phi^{CapL1} $.
\end{remark}


\subsection{Spatial extraction based LRTR}
By applying NMF as described in Subsection~\ref{sec:NMF}, we relax the low rank constraint in the spectral dimension of HSI data $\mathcal{H}$, thereby generating the coefficient tensor $\mathcal{C}$. However, $\mC$ retains the low rank characteristics of the spatial dimensions of $\mH$. According to \cite[Lemma 6]{YZ22}, these low rank properties can be captured by the tensor tubal rank of $\mC$. Therefore, we propose the following optimization model:
\begin{equation}\label{Q:lr}
	\begin{array}{cl}
		\min\limits_{\mL, \E_2}& \lambda_4\rank_t\left(\mL\right) +\lambda_5\left\|\E_2\right\|_{11\phi}  \\
		\mbox{\rm s.t.}& \mC = \mL + \E_2,
	\end{array}
\end{equation}
where $ \E_2 $ represents sparse anomaly in the spatial domain.

Driven by the group sparsity structure evident in the coding of low rank matrices \cite{FDCU19,TQP22} and low rank tensors \cite{FDY22}, we represent $\mL$ within a subspace $\D$ and limit its rank by imposing group sparsity on its coefficients $\Z$. To justify the validity of this representation, we investigate the relationship between tensor tubal rank and group sparsity regularization.
\begin{theorem}\label{Thm:rank}
	For any tensor $\mL \in \mathbb{R}^{n_1 \times n_2 \times b}$, one has
	$$\rank_t\left(\mL\right)=\min \left\{\left\|\Z\right\|_{F,0}: \mL=\D*\Z^T, \D^T*\D=\I, \D\in\R^{n_1 \times r \times b}, \Z\in\R^{n_2 \times r \times b}\right\}, $$
	with $\left\|\Z\right\|_{F,0} = \sum_{j=1}^r\left\|\Z\left(:,j,:\right) \right\|^0$,
     for any $r$ that satisfies $ \rank_t\left(\mL\right) \le r \le \min\{n_1, n_2\} $.
\end{theorem}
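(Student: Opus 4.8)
The plan is to pass to the Fourier domain along the third mode, where the t-product linearizes into a frontal-slice-wise matrix product. Since the (normalized) DFT block-diagonalizes $\operatorname{bcirc}(\mL)$, writing $\bar{\mL}^{(k)}$ for the $k$-th frontal slice of $\bar{\mL}$, a t-product $\X*\Y$ transforms into the slicewise products $\bar{\X}^{(k)}\bar{\Y}^{(k)}$ for all $k\in[b]$, while tensor conjugate transposition corresponds to the matrix conjugate transpose $(\cdot)^{H}$ of each frontal slice. Carrying out the T-SVD then amounts to an ordinary SVD of each $\bar{\mL}^{(k)}$, and I would first record the standard characterization $\rank_t(\mL)=\max_{k\in[b]}\rank(\bar{\mL}^{(k)})$: the singular tube $\mS(i,i,:)$ is nonzero exactly when the $i$-th singular value of some $\bar{\mL}^{(k)}$ is positive, and since the singular values are ordered the index set $\{i:\mS(i,i,:)\neq 0\}$ equals $\{1,\dots,\max_k \rank(\bar{\mL}^{(k)})\}$. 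I would then prove the two inequalities separately.

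For the bound $\rank_t(\mL)\le \|\Z\|_{F,0}$, fix any admissible pair with $\mL=\D*\Z^{T}$ and $\D^{T}*\D=\I$. In the Fourier domain this reads $\bar{\mL}^{(k)}=\bar{\D}^{(k)}(\bar{\Z}^{(k)})^{H}$, while the orthogonality constraint becomes $(\bar{\D}^{(k)})^{H}\bar{\D}^{(k)}=I_r$, so each $\bar{\D}^{(k)}$ has orthonormal columns and hence full column rank; consequently $\rank(\bar{\mL}^{(k)})=\rank(\bar{\Z}^{(k)})$. The decisive observation is that the $j$-th column of $\bar{\Z}^{(k)}$ is the mode-3 DFT, evaluated at frequency $k$, of the lateral slice $\Z(:,j,:)$; hence every index $j\notin\Gamma(\Z)$ contributes a zero column for all $k$, giving $\rank(\bar{\Z}^{(k)})\le |\Gamma(\Z)|=\|\Z\|_{F,0}$. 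Maximizing over $k$ and invoking the characterization above yields $\rank_t(\mL)\le\|\Z\|_{F,0}$, and since the decomposition was arbitrary this bounds the minimum from below.

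For the reverse direction I would exhibit a minimizer attaining the value $t:=\rank_t(\mL)$. Taking the T-SVD $\mL=\U*\mS*\V^{T}$ and letting $\U_r$, $\V_r$, $\mS_r$ denote the subtensors formed by the first $r$ lateral slices (respectively the top-left $r\times r\times b$ block of $\mS$), I set $\D:=\U_r$ and define $\Z$ through $\Z^{T}:=\mS_r*\V_r^{T}$. The hypothesis $t\le r\le\min\{n_1,n_2\}$ is exactly what makes this well defined: $r\le n_1$ allows $r$ orthonormal lateral slices so that $\D^{T}*\D=\I$ follows from orthogonality of $\U$, while $r\le n_2$ permits truncating $\V$. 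Because $\mS(i,i,:)=0$ for $i>t$, one obtains $\D*\Z^{T}=\U_r*\mS_r*\V_r^{T}=\mL$; and because $\mS_r$ is f-diagonal with only its first $t$ diagonal tubes nonzero, the same Fourier-domain computation shows that the $j$-th horizontal slice of $\Z^{T}$, equivalently the lateral slice $\Z(:,j,:)$, vanishes for $j>t$, so $\|\Z\|_{F,0}\le t$. Thus the minimum is at most $\rank_t(\mL)$, and together with the first inequality this gives the claimed equality (independently of the admissible $r$).

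The genuinely routine parts are the two rank inequalities; the step requiring the most care is the bookkeeping that reconciles three distinct notions of size — the tubal rank counted by nonzero singular tubes, the frontal matrix ranks $\rank(\bar{\mL}^{(k)})$ in the Fourier domain, and the lateral support $\Gamma(\Z)$ — and in particular the verification that a zero lateral slice of $\Z$ forces an identically zero column across all frequencies $k$, so that support in the original domain controls column rank after the DFT. I would also double-check that the range $t\le r\le\min\{n_1,n_2\}$ is precisely what simultaneously guarantees existence of an admissible $\D$ and validity of the $\V$-truncation, confirming that the value of the minimization is the same for every $r$ in this range.
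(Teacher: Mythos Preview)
Your proposal is correct and follows the same two-step skeleton as the paper (lower bound $\rank_t(\mL)\le\|\Z\|_{F,0}$ for every feasible pair, then a T-SVD-based construction achieving equality), and your attainment construction $\D=\U(:,1{:}r,:)$, $\Z^T=\mS_r*\V_r^T$ is exactly the paper's.

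Where you differ is in the lower bound. The paper argues it purely at the tensor level: after permuting lateral slices it writes $\Z=[\Z_1,\mO]$ with $\Z_1\in\R^{n_2\times\hat r\times b}$, $\hat r=\|\Z\|_{F,0}$, notes that $\D*\Z^T=\D_1*\Z_1^T$, and then invokes an existing tubal-rank product inequality (their cited \cite[Lemma~2]{ZLLZ18}) to conclude $\rank_t(\mL)\le\rank_t(\Z_1)\le\hat r$. You instead pass to the Fourier domain and bound each $\rank(\bar\mL^{(k)})$ via the column support of $\bar\Z^{(k)}$, effectively re-deriving that lemma from scratch. Your route is more self-contained and makes explicit why lateral support controls tubal rank; the paper's is shorter but leans on the cited result. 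Either is fine, and your care about the admissible range $t\le r\le\min\{n_1,n_2\}$ is on point.
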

\begin{proof}
	See Appendix \ref{app:thm_rank} for the proof.
\end{proof}

\begin{theorem}\label{Thm:cap}
	There exists a $\bar{\nu} > 0$. For any function $\psi$ that satisfies $ \psi(0)=0 $, $x/\nu \leq \psi(x) < 1$ for $x \in (0, \nu)$ and $\psi(x) = 1$ for $x \ge \nu$, where $0 < \nu < \bar{\nu}$, the following two problems share the same global minimizers and optimal values:
	$$
	\left\{\begin{array}{l}
		(P_0) ~ \min \left\{\left\|\Z\right\|_{F,0}: \mL=\D*\Z^T, \D^T*\D=\I\right\}; \\
		(P_\psi)~  \min \left\{\left\|\Z\right\|_{F,1}^\psi: \mL=\D*\Z^T, \D^T*\D=\I\right\}.
	\end{array}\right.
     $$
\end{theorem}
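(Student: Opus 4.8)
The plan is to write $s:=\rank_t(\mL)$ and to prove that both problems have the common optimal value $s$, with $\bar\nu:=\sigma_s$ taken to be the smallest nonzero singular value of $\mL$, i.e. the smallest nonzero Frobenius norm among the singular tubes $\mS(i,i,:)$ in the T-SVD of $\mL$ (assume $\mL\neq 0$, so $s\ge 1$ and $\sigma_s>0$; the case $\mL=0$ is trivial). By Theorem~\ref{Thm:rank}, $(P_0)$ has optimal value $s$, and its global minimizers are exactly the feasible pairs $(\D,\Z)$ for which $\Z$ has precisely $s$ nonzero lateral groups $\Z(:,j,:)$. Throughout I write $z_j:=\|\Z(:,j,:)\|$, interpret $\|\Z\|_{F,1}^\psi:=\sum_{j=1}^r\psi(z_j)$, and use that $\D^T*\D=\I$ is norm preserving, $\|\D*\W\|=\|\W\|$ for every conformable $\W$ (orthonormality in the Fourier domain).

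First I would record two elementary consequences of the hypotheses on $\psi$: for every $x\ge 0$ one has $\min(x/\nu,1)\le\psi(x)$, while $\psi(0)=0$ and $\psi(x)\le 1$ for $x>0$. Summing over $j$ yields the sandwich $\sum_{j}\min(z_j/\nu,1)\le\|\Z\|_{F,1}^\psi\le\|\Z\|_{F,0}$ for every feasible $\Z$. The upper bound on the optimal value of $(P_\psi)$ is then immediate: taking the truncated T-SVD factors $\D=\U(:,1:r,:)$ and $\Z=\V*\mS(1:r,:,:)^{T}$ gives a feasible pair whose nonzero groups have norms equal to the top $s$ singular values, all $\ge\sigma_s>\nu$, so each contributes $\psi=1$ and the objective equals $s$. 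Hence the optimal value of $(P_\psi)$ is at most $s$.

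The crux is the matching lower bound $\|\Z\|_{F,1}^\psi\ge s$ for every feasible $\Z$, and I expect this to be the main obstacle. I would split the groups into the big ones $S_+=\{j:z_j\ge\nu\}$, of cardinality $m$, and the small nonzero ones $S_-=\{j:0<z_j<\nu\}$. Zeroing the small groups produces $\hat\Z$ with $m$ nonzero groups, so $\hat\mL:=\D*\hat\Z^T$ has tubal rank at most $m$; by norm preservation the approximation error is exactly $\|\mL-\hat\mL\|^2=\sum_{j\in S_-}z_j^2$. Invoking the optimality of the truncated T-SVD (the tensor Eckart--Young property) gives $\sum_{j\in S_-}z_j^2=\|\mL-\hat\mL\|^2\ge\sum_{i>m}\sigma_i^2\ge(s-m)\sigma_s^2$. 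Since each small group satisfies $z_j^2<\nu z_j$, we get $\sum_{j\in S_-}z_j>\tfrac1\nu\sum_{j\in S_-}z_j^2\ge(s-m)\sigma_s^2/\nu$, hence $\sum_{j\in S_-}\psi(z_j)\ge\sum_{j\in S_-}z_j/\nu>(s-m)\sigma_s^2/\nu^2>(s-m)$ because $\nu<\sigma_s$. Adding the $m$ big groups (each contributing $1$) gives $\|\Z\|_{F,1}^\psi>m+(s-m)=s$ when $m<s$, and trivially $\|\Z\|_{F,1}^\psi\ge m\ge s$ when $m\ge s$. Either way the value is at least $s$, so the two optimal values coincide and equal $s$.

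Finally I would read the equality cases off the same computation to match the minimizers. If a feasible $\Z$ attains value $s$, the strict inequality above forces $m\ge s$, whence $m=s$ and $S_-=\varnothing$; thus $\|\Z\|_{F,0}=s$ and $(\D,\Z)$ is a $(P_0)$-minimizer. Conversely, for a $(P_0)$-minimizer $\Z$ has exactly $s$ nonzero groups, and dropping the one of smallest norm leaves a tensor of tubal rank at most $s-1$ whose error equals that norm; by the Eckart--Young estimate this norm is $\ge\sigma_s>\nu$, so every nonzero group is big and the objective equals $s$, i.e. $(\D,\Z)$ minimizes $(P_\psi)$. The two obstacles worth isolating as lemmas are the norm-preservation identity under $\D^T*\D=\I$ and the truncation optimality of the T-SVD; once these are in place, the sandwich $\min(\cdot/\nu,1)\le\psi\le\mathbf 1_{\{\cdot\neq 0\}}$ together with the choice $\bar\nu=\sigma_s$ delivers both the equality of optimal values and the coincidence of global minimizers.
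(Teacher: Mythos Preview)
Your argument is correct. Structurally it parallels the paper's proof—both split the lateral groups of a feasible $\Z$ into ``big'' ones ($z_j\ge\nu$, say $m$ of them) and ``small'' ones ($0<z_j<\nu$), and show that when $m<s$ the small groups already contribute more than $s-m$ to $\|\Z\|_{F,1}^\psi$. The difference lies in the key tool. The paper works abstractly: it sets
\[
\bar\nu=\min_{1\le n\le s}\tfrac{1}{n}\operatorname{dist}(\Omega,Q^{s-n}),\qquad \Omega=\{\Z:\exists\,\D,\ \mL=\D*\Z^T,\ \D^T*\D=\I\},\quad Q^k=\{\Z:\|\Z\|_{F,0}\le k\},
\]
and bounds $\sum_{j\in S_-}z_j\ge\operatorname{dist}(\Omega,Q^{m})\ge(s-m)\bar\nu$ directly. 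You instead invoke the tensor Eckart--Young property together with the isometry $\|\D*\W\|=\|\W\|$, obtaining $\sum_{j\in S_-}z_j^2=\|\mL-\hat\mL\|^2\ge\sum_{i>m}\sigma_i^2\ge(s-m)\sigma_s^2$ and hence the explicit threshold $\bar\nu=\sigma_s$.

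Your route buys two things. First, the threshold is concrete and interpretable (the smallest nonzero singular-tube norm), whereas the paper's $\bar\nu$ is defined via an infimum whose positivity is asserted but not justified in the text; in fact, verifying $\operatorname{dist}(\Omega,Q^{s-n})>0$ ultimately requires exactly the isometry plus Eckart--Young step you make explicit. Second, your $\bar\nu$ is at least as large as the paper's: one checks that $\operatorname{dist}(\Omega,Q^{s-n})=\bigl(\sum_{i=s-n+1}^{s}\sigma_i^2\bigr)^{1/2}$, so the paper's minimum is attained at $n=1$ only when the tail singular values are sufficiently spread, and in general $\bar\nu_{\text{paper}}\le\sigma_s$. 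The paper's formulation, on the other hand, is slightly more portable: it would apply verbatim to any feasible set $\Omega$ disjoint from the low-sparsity strata $Q^{s-1}$, without reference to a singular-value structure.
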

\begin{proof}
	See Appendix \ref{app:thm_cap} for the proof.
\end{proof}

\begin{remark}
The relationships between tensor tubal rank and group sparsity regularization are shown in Figure \ref{fig:relationship}.
	\begin{figure}[!htbp]
		\centering
		\includegraphics[width=1\linewidth]{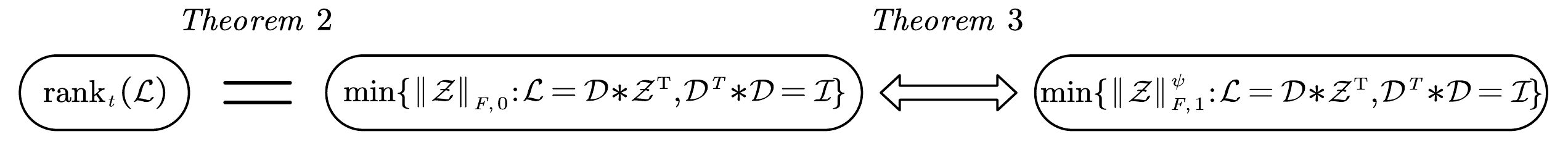}
		\caption{Relationships between tensor tubal rank and group sparsity regularization.}
		\label{fig:relationship}
	\end{figure}
Theorem \ref{Thm:rank} establishes that the tensor tubal rank $\rank_t\left({\mathcal L}\right)$ is equivalent to the optimal values of problem $(P_0)$.
However, due to the non-convex and non-smooth nature of $\|{\mathcal Z}\|_{F,0}$, algorithms based on it exhibit unstable numerical performance. Consequently, we demonstrate in Theorem \ref{Thm:cap} that problem $(P_0)$ and problem $(P_\psi)$ have the same global minimizers and optimal values. 

Optimizing $\|{\mathcal Z}\|_{F,1}^\psi$ is significantly more efficient than optimizing tensor nuclear norm, as it eliminates the need for large-scale SVD and the dimensionality of $\mathcal{Z}$ is considerably smaller than that of $\mathcal{L}$. Furthermore, in algorithms based on $\|{\mathcal Z}\|_{F,1}^\psi$, the size of $r$ can adaptively decrease until it approximates $\rank_t({\mathcal L})$, thereby obviating the need for additional techniques to adjust $r$. This is in contrast to tensor decomposition methods discussed in \cite{ZLLZ18,YZ22,YZH23}, which requires adjusting $r$.
By leveraging the ability to adaptively adjust $r$ and the fact that $\rank_t({\mathcal L})$ is much smaller, models based on $\|{\mathcal Z}\|_{F,1}^\psi$ are not only highly efficient but also exhibit superior performance.
\end{remark}

Motivated by Theorems \ref{Thm:rank} and \ref{Thm:cap}, we transform problem \eqref{Q:lr} into the following formulation:
\begin{equation}\label{Q:lrd}
	\begin{array}{cl}
		\min\limits_{\D, \Z, \E_2}& \lambda_4\left\|\Z\right\|_{F,1}^\psi +\lambda_5\left\|\E_2\right\|_{11\phi}  \\
		\mbox{\rm s.t.}& \mC = \D*\Z^T + \E_2,~\D^T*\D=\I,
	\end{array}
\end{equation}
where $\E_2\in \R^{n_1\times n_2 \times b}$, $\D\in \R^{n_1\times r \times b}$, $\Z\in \R^{n_2\times r \times b}$.

\begin{remark}
	In this paper, we choose $\psi$ as the CapLp function, specifically $\psi(x) = \min\{x^p/\nu^p, 1\}$ for some $0<p < 1$ and $\nu>0$.
\end{remark}

\subsection{The proposed LTD}
Taking into account the two subsections above, the proposed LTD can thus be formulated as:
\begin{equation}\label{Q:SU-LRTR}
	\begin{array}{cl}
		\min\limits_{\mC, B, \E_1, \D, \Z, \E_2}&  \frac{\lambda_1}{2}\left\|B\right\|^2+\lambda_2 \left\|\E_1\right\|_{11\phi}+\frac{\lambda_3}{2}\left\|\mH-\mC\times_3 B-\E_1\right\|^2+\lambda_4\left\|\Z\right\|_{F,1}^\psi +\lambda_5\left\|\E_2\right\|_{11\phi}   \\
		\mbox{\rm s.t.}& \mC = \D*\Z^T + \E_2,~B\in\bB,~\mC\in \bC,~\D\in\bD,
	\end{array}
\end{equation}
where $\bB = \left\lbrace B\mid B\ge0 \right\rbrace $, $\bC = \left\lbrace \mC\mid \left\|\mC(i,j,:)\right\|=1,~\forall i\in[n_1],~ j\in [n_2] \right\rbrace $, and $\bD = \left\lbrace \D\mid \D^T*\D=\I \right\rbrace $.

\begin{remark}
	From \eqref{Q:SU-LRTR}, we can see that the spectral anomaly $\E_1$ and the spatial anomaly $\E_2$ can be simultaneously learned in a unified framework, thereby achieving a comprehensive anomaly detection result by integrating these anomalies.
\end{remark}
\begin{remark}
	In Theorems \ref{Thm:rank} and \ref{Thm:cap}, we establish the relationship between tensor tubal rank and group sparsity regularization. Leveraging this, we employ group sparsity regularization in LRTR to characterize the spatial low-rankness of tensor $\mC$. Although SVD computation is required for $\D \in \R^{n_1\times r  \times b}$, it is limited to small-scale matrices, as our algorithm adaptively reduces the size of $\D$ (see Algorithm \ref{Alg:Rank}). Consequently, the final $r$ is much smaller than $\min\{n_1, n_2\}$ (see Section \ref{sec:ra}). Thus, the proposed group sparsity regularization is markedly more efficient than nuclear function regularization.
\end{remark}

\subsection{Spectral-spatial fusion}

To effectively highlight anomalous targets, we design an adaptive fusion architecture that integrates two complementary detection maps, as illustrated in Figure \ref{fig:fusion}. First, we compute the spectral anomaly detection map, given by $T_1=\sqrt{\sum_{k=1}^{n_3}|\mathcal{E}_1^*(:, :, k)|^2}$, and the spatial anomaly detection map, given by $T_2=\sqrt{\sum_{k=1}^{b}|\mathcal{E}_2^*(:, :, k)|^2}$. Then, we obtain the final detection result as follows:
\begin{equation}\label{opt:T}
	T = \operatorname{IGF}\left(T_1 \circ T_2\right)\quad \text{or} \quad  T = \operatorname{IGF}(\operatorname{IGF}(\operatorname{IGF}\left(T_1 \circ T_2\right), T_1), T_2)
\end{equation}
where $ \circ $ represents the Hadamard product operator, and $ \operatorname{IGF} $ is a guided image filter \cite{HST10}.
\begin{figure}[!htbp]
	\centering
	\includegraphics[width=0.7\linewidth]{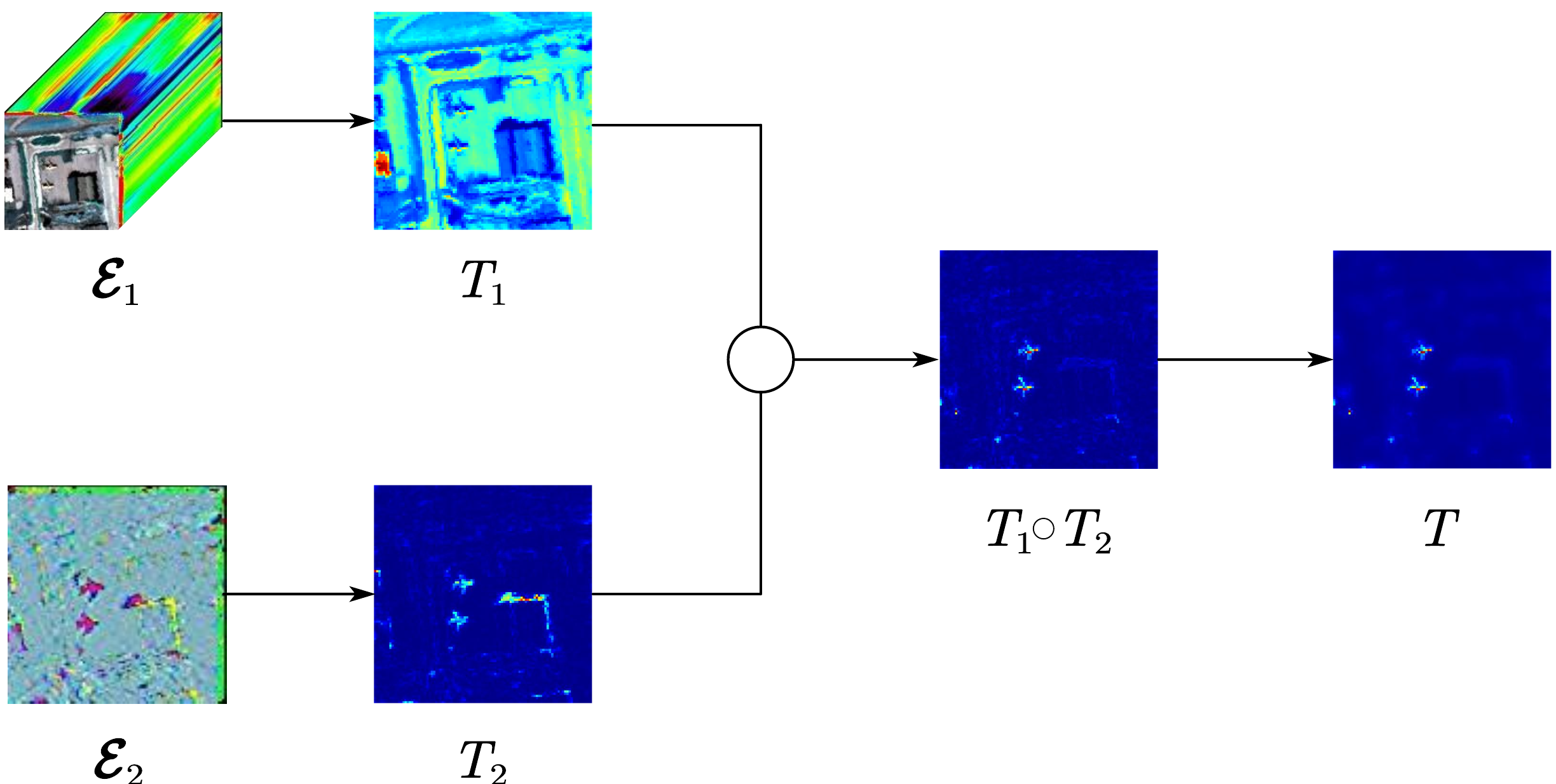}
	\caption{Flow chart of spectral-spatial fusion.}
	\label{fig:fusion}
\end{figure}

\begin{remark}
	From Figure \ref{fig:fusion}, it can be seen that $T_1$ and $T_2$ have complementary characteristics. Specifically, the noise points and anomalous targets are complementary in their positions in $T_1$ and $T_2$. Therefore, we take the Hadamard product of $T_1$ and $T_2$. Although $T_1 \circ T_2$ still contains slight noise, considering that the guided image filter can effectively remove noise and enhance details, we apply the guided image filter to $T_1 \circ T_2$, resulting in an image with reduced noise and clearer anomalous targets.
\end{remark}

\subsection{Optimization of LTD}
By employing the half-quadratic splitting technique \cite{KF09}, we convert the constrained problem \eqref{Q:SU-LRTR} into the subsequent unconstrained formulation:
\begin{equation}\label{Q-un:SU-LRTR}
	\begin{array}{cl}
		\min\limits_{\mC, B, \E_1, \D, \Z, \E_2}&  F\left(\mC, B, \E_1, \D, \Z, \E_2\right)=\frac{\lambda_1}{2}\left\|B\right\|^2+\lambda_2 \left\|\E_1\right\|_{11\phi}+\frac{\lambda_3}{2}\left\|\mH-\mC\times_3 B-\E_1\right\|^2+\lambda_4\left\|\Z\right\|_{F,1}^\psi \\
		&+\lambda_5\left\|\E_2\right\|_{11\phi}+\frac{\lambda_6}{2}\left\|\mC - \D*\Z^T - \E_2\right\|^2+\delta_{\bB}\left(B\right)+\delta_{\bC}\left(\mC\right)+\delta_{\bD}\left(\D\right).
	\end{array}
\end{equation}
Within the framework of the PAM algorithm \cite{ABRS10}, we alternatively update each variable:
\begin{equation*}
	\left\{\begin{array}{l}
		\mC^{t+1}=\arg\min_{\mC}~F\left(\mC, B^t, \E_1^t, \D^t, \Z^t, \E_2^t\right)+\frac{\rho_1}{2}\left\|\mC-\mC^t\right\|^2; \\
		
		B^{t+1}=\arg\min_{B}~F\left(\mC^{t+1}, B, \E_1^t, \D^t, \Z^t, \E_2^t\right)+\frac{\rho_2}{2}\left\|B-B^t\right\|^2; \\
		
		\E_1^{t+1}=\arg\min_{\E_1}~F\left(\mC^{t+1}, B^{t+1}, \E_1, \D^t, \Z^t, \E_2^t\right)+\frac{\rho_3}{2}\left\|\E_1-\E_1^t\right\|^2; \\
		
	\D^{t+1}=\arg\min_{\D}~F\left(\mC^{t+1}, B^{t+1}, \E_1^{t+1}, \D, \Z^t, \E_2^t\right)+\frac{\rho_4}{2}\left\|\D-\D^t\right\|^2; \\
	
	\Z^{t+1}=\arg\min_{\Z}~F\left(\mC^{t+1}, B^{t+1}, \E_1^{t+1}, \D^{t+1}, \Z, \E_2^t\right)+\frac{\rho_5}{2}\left\|\Z-\Z^t\right\|^2; \\
	
	\E_2^{t+1}=\arg\min_{\E_2}~F\left(\mC^{t+1}, B^{t+1}, \E_1^{t+1}, \D^{t+1}, \Z^{t+1}, \E_2\right)+\frac{\rho_6}{2}\left\|\E_2-\E_2^t\right\|^2,
	\end{array}\right.
\end{equation*}
where $\rho_i,~i\in[6]$ are six positive constants, and $t$ denotes the iteration number. Next, we provide details for updating the subproblems related to $\mC$, $B$, $\E_1$, $\D$, $\Z$, and $\E_2$, respectively.

\subsubsection{Computing $\mC^{t+1}$ and $B^{t+1}$} 
The objective function with respect to $\mC$ and $B$ can be written as
\begin{numcases}{}
\min\limits_{\mC} f\left(\mC, B^t\right)+\delta_{\bC}\left(\mC\right)+\frac{\rho_1}{2}\left\|\mC-\mC^t\right\|^2;  \label{O:C} \\
\min\limits_{B} f\left(\mC^{t+1}, B\right)+\delta_{\bB}\left(B\right)+\frac{\rho_2}{2}\left\|B-B^t\right\|^2, \label{O:B}
\end{numcases}
where $f\left(\mC, B\right)=\frac{\lambda_1}{2}\|B\|^2+\frac{\lambda_3}{2}\|\mH-\mC\times_3 B-\E_1^t\|^2+\frac{\lambda_6}{2}\|\mC - \D^t*\Z^{t^T} - \E_2^t\|^2$. However, \eqref{O:C} and \eqref{O:B} may have no closed-form solutions. Similarly to \cite{FZC20,YZCQ22}, we update $ \mC $ by solving the following subproblem
\begin{equation}\label{O:Ce}
	\begin{array}{rl}
		\mathop{\arg\min}\limits_{\mC}&   \delta_{\bC}\left(\mC\right)+\left\langle \nabla_{\mC}f\left(\mC^t, B^t\right), \mC-\mC^t \right\rangle + \frac{l_{\mC}^t}{2}\left\|\mC-\mC^t\right\|^2+\frac{\rho_1}{2}\left\|\mC-\mC^t\right\|^2\\
		=\mathop{\arg\min}\limits_{\mC}& \delta_{\bC}\left(\mC\right)+\frac{l_{\mC}^t+\rho_1}{2}\left\|\mC-\hat{\mC}^{t+1}\right\|^2\\
		=\mathop{\arg\max}\limits_{\mC}&\delta_{\bC}\left(\mC\right)+\left\langle\mC, \hat{\mC}^{t+1} \right\rangle,
	\end{array}
\end{equation}
where $\nabla_{\mC}f(\mC^t, B^t)=\lambda_3(\mC^t\times_3 B^t+\E_1^t-\mH)\times_3B^{t^T}+\lambda_6(\mC^t - \D^t*\Z^{t^T} - \E_2^t)$, $l_{\mC}^t=\lambda_3\|B^t\|_2^2+\lambda_6$, and 
$$
\hat{\mC}^{t+1}=\frac{l_{\mC}^t\mC^t+\rho_1\mC^t-\nabla_{\mC}f\left(\mC^t, B^t\right)}{l_{\mC}^t+\rho_1}.
$$
By applying the Cauchy-Schwarz inequality, the optimal solution of \eqref{O:Ce} is given by
\begin{equation}\label{opt:C}
	\mC^{t+1}(i,j,:) = \frac{\hat{\mC}^{t+1}(i,j,:)}{\left\|\hat{\mC}^{t+1}(i,j,:)\right\|}, \quad i\in [n_1],~j\in[n_2].
\end{equation}

Likewise, we can update $ B $ by solving the following subproblem
\begin{equation}\label{O:Be}
	\begin{array}{rl}
		\mathop{\arg\min}\limits_{B}& \delta_{\bB}\left(B\right)+\left\langle \nabla_{B}f\left(\mC^{t+1}, B^t\right), B-B^t \right\rangle + \frac{l_{B}^t}{2}\left\|B-B^t\right\|^2+\frac{\rho_2}{2}\left\|B-B^t\right\|^2
		\\
		=\mathop{\arg\min}\limits_{B}& \delta_{\bB}\left(B\right)+\frac{l_{B}^t+\rho_2}{2}\left\|B-\hat{B}^{t+1}\right\|^2,
	\end{array}
\end{equation}
where $\nabla_{B}f(\mC^{t+1}, B^t)=\lambda_1B^t + \lambda_3(B^tC_{(3)}^{t+1}+{E_1}^t_{(3)}-H_{(3)})C_{(3)}^{{t+1}^T}$, $l_{\mC}^t=\lambda_1 + \lambda_3\|C_{(3)}^{t+1}\|_2^2$, and 
$$
\hat{B}^{t+1}=\frac{l_{B}^tB^t+\rho_2B^t-\nabla_{B}f\left(\mC^{t+1}, B^t\right)}{l_{B}^t+\rho_2}.
$$
It is not difficult to see that the optimal solution of \eqref{O:Be} is given by
\begin{equation}\label{opt:B}
	B^{t+1}=\max\left\lbrace 0,  \hat{B}^{t+1}\right\rbrace.
\end{equation}

\subsubsection{Computing $\E_1^{t+1}$} 
The objective function with respect to $\E_1$ can be written as
\begin{equation}\label{O:E1}
	\begin{array}{rl}
		\mathop{\arg\min}\limits_{\E_1}&  \lambda_2 \left\|\E_1\right\|_{11\phi}+\frac{\lambda_3}{2}\left\|\mH-\mC^{t+1}\times_3 B^{t+1}-\E_1\right\|^2+\frac{\rho_3}{2}\left\|\E_1-\E_1^t\right\|^2\\
		=\mathop{\arg\min}\limits_{\E_1}& \hat{\lambda}_2 \left\|\E_1\right\|_{11\phi}+\frac{1}{2}\left\|\E_1-\hat{\E}_1^{t+1}\right\|^2,
	\end{array}
\end{equation}
where $\hat{\lambda}_2= \lambda_2/(\lambda_3+\rho_3)$ and
$$ \hat{\E}_1^{t+1}= \frac{\lambda_3\left(\mH-\mC^{t+1}\times_3 B^{t+1}\right)+\rho_3\E_1^t}{\lambda_3+\rho_3}. $$
By \cite[Appendix A.1]{PC21}, the optimal solution of \eqref{O:E1} is given by
\begin{equation}\label{opt:E1}
	\E_1^{t+1}\left(i, j,:\right)= \begin{cases}\max\left\lbrace 0,~ 1-\hat{\lambda}_2/e_1\right\rbrace \hat{\E}_1^{t+1}\left(i, j,:\right), & e_1 \leq 1+\hat{\lambda}_2; \\ \hat{\E}_1^{t+1}\left(i, j,:\right), & e_1>1+\hat{\lambda}_2,\end{cases} \quad i\in[n_1],~j\in[n_2],
\end{equation}
where $e_1 = \|\hat{\E}_1^{t+1}\left(i, j,:\right)\|$.

\subsubsection{Computing $\D^{t+1}$} 
The objective function with respect to $\D$ can be written as
\begin{equation}\label{O:D}
	\begin{array}{rl}
		\mathop{\arg\min}\limits_{\D}&  \frac{\lambda_6}{2}\left\|\mC^{t+1} - \D*\Z^{t^T} - \E_2^t\right\|^2+\frac{\rho_4}{2}\left\|\D-\D^t\right\|^2+\delta_{\bD}\left(\D\right)\\
		=\mathop{\arg\max}\limits_{\D}& \left\langle \D, \lambda_6\left(\mC^{t+1}-\E_2^t\right)*\Z^t+\rho_4\D^t\right\rangle+\delta_{\bD}\left(\D\right).
	\end{array}
\end{equation}
To solve the above problem, we introduce the following lemma.
\begin{lemma}\label{lem:or}
	Supposing T-SVD of $\G \in \R^{n_1\times n_2 \times b}$ is $\mathcal{U}*\mS*\mathcal{V}^T$, one has
	\begin{equation}
		\mathop{\arg\max}\limits_{\D^T*\D = \I} ~\left\langle \D,\G\right\rangle = \mathcal{U}*\mathcal{V}^T.
	\end{equation}
\end{lemma}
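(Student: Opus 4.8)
The plan is to pass to the Fourier domain along the third mode, where the t-product, the orthogonality constraint, and the T-SVD all decouple into independent matrix problems, one per frequency slice. Recall that $\X*\Y$ corresponds to the frontal-slicewise matrix product $\overline{(\X*\Y)}^{(k)} = \bar{X}^{(k)}\bar{Y}^{(k)}$, that the conjugate transpose satisfies $\overline{(\X^T)}^{(k)} = (\bar{X}^{(k)})^*$, and that the (unnormalized) DFT scales norms by $b$, so by Parseval's identity $\left\langle \D, \G\right\rangle = \frac{1}{b}\sum_{k=1}^{b}\operatorname{Re}\left\langle \bar{D}^{(k)}, \bar{G}^{(k)}\right\rangle$ (the real part being harmless since $\D,\G$ are real). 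Under this correspondence the constraint $\D^T*\D = \I$ is equivalent to $(\bar{D}^{(k)})^*\bar{D}^{(k)} = I$ for every $k\in[b]$, i.e. each Fourier slice of $\D$ has orthonormal columns, and taking the thin T-SVD $\G = \U*\mS*\V^T$ (with $\U,\V$ restricted to their first $\min\{n_1,n_2\}$ lateral slices, so that $\U*\V^T$ is conformable of size $n_1\times n_2\times b$) says precisely that $\bar{G}^{(k)} = \bar{U}^{(k)}\bar{S}^{(k)}(\bar{V}^{(k)})^*$ is the ordinary matrix SVD of each frontal slice.

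First I would use these identities to split the problem across $k$: maximizing $\left\langle \D, \G\right\rangle$ over $\D^T*\D=\I$ reduces, for each $k$ independently, to the matrix orthogonal Procrustes problem $\max\{\operatorname{Re}\left\langle \bar{D}^{(k)}, \bar{G}^{(k)}\right\rangle : (\bar{D}^{(k)})^*\bar{D}^{(k)} = I\}$. Setting $W = (\bar{V}^{(k)})^*(\bar{D}^{(k)})^*\bar{U}^{(k)}$ and using f-diagonality of $\bar{S}^{(k)}$, the objective becomes $\operatorname{Re}\operatorname{tr}(W\bar{S}^{(k)}) = \sum_i \bar{S}^{(k)}_{ii}\,\operatorname{Re}(W_{ii})$, where $W_{ii} = \left\langle \bar{D}^{(k)}\bar{v}^{(k)}_i, \bar{u}^{(k)}_i\right\rangle$. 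Since $\bar{D}^{(k)}$ has orthonormal columns its operator norm is $1$, so $\|\bar{D}^{(k)}\bar{v}^{(k)}_i\|\le 1$, and Cauchy--Schwarz gives $|W_{ii}|\le 1$. Hence the objective is bounded by $\sum_i \bar{S}^{(k)}_{ii}$, with equality attained exactly when $W_{ii}=1$ on the support of the positive singular values, i.e. at $\bar{D}^{(k)} = \bar{U}^{(k)}(\bar{V}^{(k)})^*$.

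Finally I would reassemble the slicewise maximizers: the identity $\bar{D}^{(k)} = \bar{U}^{(k)}(\bar{V}^{(k)})^*$ for all $k$ is exactly the Fourier-domain statement of $\D = \U*\V^T$, so the inverse DFT yields the claimed maximizer, and the optimal value $\frac{1}{b}\sum_k \sum_i \bar{S}^{(k)}_{ii}$ equals $\left\langle \U*\V^T, \G\right\rangle$. The step requiring care is real-valuedness: a priori each per-slice optimizer lives in $\mathbb{C}$, and one must verify that $\U*\V^T$ is a genuine real tensor. This follows from the conjugate symmetry of the Fourier slices of a real tensor---$\bar{G}^{(k)}$ and $\bar{G}^{(b-k+2)}$ are complex conjugates, and the T-SVD can be chosen to respect this symmetry---so the optimal slices inherit the same symmetry and the inverse DFT returns a real $\D$. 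I expect this conjugate-symmetry bookkeeping, together with correctly tracking the $\operatorname{Re}$ in Parseval's identity and the conformability of the thin factors, to be the only genuinely delicate points; the slicewise trace bound is otherwise routine.
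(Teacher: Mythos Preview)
Your approach is correct and matches the paper's: pass to the Fourier domain via the DFT along the third mode, decouple the objective and the constraint into per-slice complex orthogonal Procrustes problems, and reassemble via the inverse DFT. The paper simply cites the classical Procrustes result \cite{Sch66} rather than deriving the trace bound explicitly, and it does not comment on the conjugate-symmetry/real-valuedness or the thin-SVD conformability issues you flag---your treatment is in fact more careful on those points.
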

\begin{proof}
	See Appendix \ref{app:lem_or} for the proof.
\end{proof}
By using Lemma \ref{lem:or}, the optimal solution of \eqref{O:D} is given by
\begin{equation}\label{opt:D}
	\D^{t+1}=\U_\D*\V_\D^T,
\end{equation}
where $\U_\D$ and $\V_\D$ are obtained by the T-SVD decomposition: $\lambda_6(\mC^{t+1}-\E_2^t)*\Z^t+\rho_4\D^t = \U_\D*\mS_\D*\V_\D^T$.

\subsubsection{Computing $\Z^{t+1}$} 
The objective function with respect to $\Z$ can be written as
\begin{equation}\label{O:Z}
	\begin{array}{rl}
		\mathop{\arg\min}\limits_{\Z}&\lambda_4\left\|\Z\right\|_{F,1}^\psi +\frac{\lambda_6}{2}\left\|\mC^{t+1} - \D^{t+1}*\Z^T - \E_2^t\right\|^2+\frac{\rho_5}{2}\left\|\Z-\Z^t\right\|^2\\
		=\mathop{\arg\min}\limits_{\Z}&\hat{\lambda}_4\left\|\Z\right\|_{F,1}^\psi +\frac{1}{2}\left\|\Z-\hat{\Z}^{t+1}\right\|^2,
	\end{array}
\end{equation}
where $\hat{\lambda}_4 = \lambda_4/(\lambda_6+\rho_5)$ and
$$
\hat{\Z}^{t+1} = \frac{\lambda_6\left(\mC^{t+1} - \E_2^t\right)^T*\D^{t+1}+\rho_5\Z^t}{\lambda_6+\rho_5}.
$$
By \cite[Appendix A]{PC21} and \cite{MS12}, the optimal solution of \eqref{O:Z} is given by
\begin{equation}\label{opt:Z}
	\Z^{t+1}\left(:,j,:\right)= \begin{cases}\operatorname{Prox}_{\hat{\lambda}_4}\psi\left(\|\hat{\Z}^{t+1}\left(:,j,:\right)\|\right)\frac{\hat{\Z}^{t+1}\left(:,j,:\right)}{\|\hat{\Z}^{t+1}\left(:,j,:\right)\|}, & \|\hat{\Z}^{t+1}\left(:,j,:\right)\| \neq 0; \\ 0, & \|\hat{\Z}^{t+1}\left(:,j,:\right)\|=0,\end{cases}
	\quad j \in [r],
\end{equation}
where
$$
\operatorname{Prox}_{\hat{\lambda}_4}\psi\left(z\right) = \begin{cases}u_1, & \text {if}\; \hat{\lambda}_4\psi\left(u_1\right)+\frac{1}{2}\left(u_1-z\right)^2 \leq \hat{\lambda}_4\psi\left(u_2\right)+\frac{1}{2}\left(u_2-z\right)^2; \\ u_2, & \text{otherwise}.\end{cases}
$$
Here $u_1=\min\{\operatorname{prox}_{\hat{\lambda}_4}\|\cdot\|^p(z), v\}$, $u_2=\max \{z, v\}$, and
$$
\operatorname{prox}_{\hat{\lambda}_4}\|\cdot\|^p(z)= \begin{cases}0, & \text { if }z<\pi_2; \\ \left\{0, \pi_1\right\}, & \text { if }z=\pi_2; \\ \pi_{\star}, & \text { if }z>\pi_2,\end{cases}
$$
where $\pi_1=(2 \hat{\lambda}_4(1-p))^{1/(2-p)}$, $\pi_2=\pi_1+\hat{\lambda}_4 p \pi_1^{p-1}$, and $\pi_{\star} \in\left(\pi_1,z\right)$ is the solution of $g(\pi)=\pi+\hat{\lambda}_4 p \pi^{p-1}-z=0$ with $\pi>0$.
\subsubsection{Computing $\E_2^{t+1}$} 
The objective function with respect to $\E_2$ can be written as
\begin{equation}\label{O:E2}
	\begin{array}{rl}
		\mathop{\arg\min}\limits_{\E_2}&   \lambda_5\left\|\E_2\right\|_{11\phi}+\frac{\lambda_6}{2}\left\|\mC^{t+1} - \D^{t+1}*\Z^{{t+1}^T} - \E_2\right\|^2+\frac{\rho_6}{2}\left\|\E_2-\E_2^t\right\|^2\\
		=\mathop{\arg\min}\limits_{\E_2}&\hat{\lambda}_5\left\|\E_2\right\|_{11\phi}+\frac{1}{2}\left\|\E_2-\hat{\E}_2^{t+1}\right\|^2,
	\end{array}
\end{equation}
where $\hat{\lambda}_5=\lambda_5/(\lambda_6+\rho_6)$ and 
$$
\hat{\E}_2^{t+1}=\frac{\lambda_6\left(\mC^{t+1} - \D^{t+1}*\Z^{{t+1}^T}\right)+\rho_6\E_2^t}{\lambda_6+\rho_6}.
$$
By \cite[Appendix A.1]{PC21}, the optimal solution of \eqref{O:E1} is given by
\begin{equation}\label{opt:E2}
	\E_2^{t+1}\left(i, j,:\right)= \begin{cases}\max\left\lbrace 0,~ 1-\hat{\lambda}_5/e_2\right\rbrace \hat{\E}_2^{t+1}\left(i, j,:\right), & e_2 \leq 1+\hat{\lambda}_5; \\ \hat{\E}_2^{t+1}\left(i, j,:\right), & e_2>1+\hat{\lambda}_5,\end{cases} \quad i\in[n_1],~j\in[n_2],
\end{equation}
where $e_2 = \|\hat{\E}_2^{t+1}\left(i, j,:\right)\|$.

The entire procedure of the proposed method is summarized in Algorithm \ref{Alg:SPA}.
\begin{algorithm}[!htbp]
	\caption{The PAM-based solver for the proposed LTD model.}\label{Alg:SPA}
	\KwIn{HSI data $\mH$, regularization parameters $\left\lbrace\lambda_i\right\rbrace_{i=1}^6$ and proximal parameters $\left\lbrace\rho_i\right\rbrace_{i=1}^6$.}
	\KwOut{Detection map $T$.}
	$t\leftarrow 0$;
	
	\While{not converged}
	{
		Update $\mC^{t+1}$ via \eqref{opt:C};
		
		Update $B^{t+1}$ via \eqref{opt:B};
		
		Update $\E_1^{t+1}$ via \eqref{opt:E1};
		
		Update $\D^{t+1}$ via \eqref{opt:D};
		
		Update $\Z^{t+1}$ via \eqref{opt:Z};
		
		Update $\E_2^{t+1}$ via \eqref{opt:E2};
		
		
		$t\leftarrow t+1$\;
	}
	Obtain the optimal anomalies $\E_1^*$, $\E_2^*$;
	
	Compute the detection map $T$ by \eqref{opt:T}.
\end{algorithm}

\subsection{Convergence analysis}
In this subsection, we rigorously establish the global convergence guarantee of Algorithm \ref{Alg:SPA}. Prior to proving the convergence of the proposed method, we first introduce several pivotal lemmas that are essential for our subsequent analysis. 
\begin{lemma}(sufficient decrease condition)\label{lem:sdc}
Assume that $ \W^t:=(\mC^t, B^t, \E_1^t, \D^t, \Z^t, \E_2^t) $ is generated by Algorithm \ref{Alg:SPA}. Then, $ \W^t $ satisfies the following properties:
\begin{itemize}
	\item[(1)] $F(\W^{t+1})+\rho\|\W^{t+1}-\W^t\|^2 \le F(\W^t)$, where $ \rho = \min_{i\in [6]} \{\rho_i/2\} $.
	\item[(2)] $\lim_{t\to\infty}\|\W^{t+1}-\W^t\| = 0 $.
	\item[(3)] $\W^t$ is bounded if either $\Z^t$ or $\E_2^t$ is bounded.
\end{itemize}
\end{lemma}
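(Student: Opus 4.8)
The plan is to handle the three claims in order, since (2) and (3) both rest on the descent estimate established in (1).

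\textbf{Part (1).} The strategy is to account for the decrease contributed by each of the six block updates separately and then chain them. For the four blocks $\E_1$, $\D$, $\Z$, $\E_2$ the new iterate is an \emph{exact} minimizer of its proximal subproblem, so comparing the subproblem value at the new iterate against its value at the (feasible) old iterate immediately gives, for instance,
$$F(\mC^{t+1},B^{t+1},\E_1^{t+1},\D^t,\Z^t,\E_2^t)+\tfrac{\rho_3}{2}\|\E_1^{t+1}-\E_1^t\|^2 \le F(\mC^{t+1},B^{t+1},\E_1^t,\D^t,\Z^t,\E_2^t),$$
and analogously for $\D,\Z,\E_2$ with constants $\rho_4,\rho_5,\rho_6$ (for $\D$ the subproblem is solved exactly by Lemma \ref{lem:or}, and $\delta_{\bD}$ is kept exact). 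For $\mC$ and $B$ the subproblems are instead solved by proximal linearization, so here I would combine the minimizing inequality for the linearized composite model (smooth part linearized, $\delta_{\bC}$ resp. $\delta_{\bB}$ kept exact) with the descent lemma applied to $f(\cdot,B^t)$ resp. $f(\mC^{t+1},\cdot)$, whose gradients are Lipschitz with the stated constants $l_\mC^t=\lambda_3\|B^t\|_2^2+\lambda_6$ and $l_B^t=\lambda_1+\lambda_3\|C^{t+1}_{(3)}\|_2^2$. The two $\tfrac{l}{2}\|\cdot\|^2$ terms cancel, leaving a net decrease of exactly $\tfrac{\rho_1}{2}\|\mC^{t+1}-\mC^t\|^2$ resp. $\tfrac{\rho_2}{2}\|B^{t+1}-B^t\|^2$. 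Chaining the six inequalities telescopes the mixed iterates to give $F(\W^{t+1})+\sum_{i=1}^6\tfrac{\rho_i}{2}\|\cdot\|^2\le F(\W^t)$, and bounding $\sum_i\tfrac{\rho_i}{2}\|\cdot\|^2\ge\rho\|\W^{t+1}-\W^t\|^2$ with $\rho=\min_i\rho_i/2$ yields (1).

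\textbf{Part (2).} I would first observe that $F\ge 0$, since every term (the Frobenius norms, the saturating penalties $\|\cdot\|_{11\phi}$ and $\|\cdot\|_{F,1}^\psi$ with $\phi,\psi\ge 0$, and the three indicators) is nonnegative, so $F$ is bounded below. Summing the inequality from (1) over $t=0,\dots,T$ telescopes the objective values, giving $\rho\sum_{t=0}^{T}\|\W^{t+1}-\W^t\|^2\le F(\W^0)-F(\W^{T+1})\le F(\W^0)$. Letting $T\to\infty$ shows the series converges, hence $\|\W^{t+1}-\W^t\|\to 0$.

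\textbf{Part (3).} The key fact is that $F(\W^t)\le F(\W^0)$ for all $t$, so every term of $F$ remains bounded along the iterates. Two blocks are bounded for free by their feasible sets: $\mC^t\in\bC$ forces $\|\mC^t\|^2=n_1n_2$, and $\D^t\in\bD$, i.e. $\D^{tT}*\D^t=\I$, forces $\|\D^t\|^2=r$ constant. Next, $\tfrac{\lambda_1}{2}\|B^t\|^2\le F(\W^0)$ bounds $B^t$; then $\mC^t\times_3 B^t$ is bounded, and combined with the bound on $\tfrac{\lambda_3}{2}\|\mH-\mC^t\times_3 B^t-\E_1^t\|^2$ a triangle inequality bounds $\E_1^t$. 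It remains to bound $\Z^t$ and $\E_2^t$, and here I would exploit the partial isometry $\D^{tT}*\D^t=\I$, which via $\langle\D^t*\Z^{tT},\D^t*\Z^{tT}\rangle=\langle\Z^{tT},\D^{tT}*\D^t*\Z^{tT}\rangle$ yields $\|\D^t*\Z^{tT}\|=\|\Z^t\|$. The only term coupling this pair, $\tfrac{\lambda_6}{2}\|\mC^t-\D^t*\Z^{tT}-\E_2^t\|^2$, is bounded, so $\D^t*\Z^{tT}+\E_2^t$ is bounded; assuming either $\Z^t$ or $\E_2^t$ is bounded then forces the other (using $\|\D^t*\Z^{tT}\|=\|\Z^t\|$), and all six blocks are bounded.

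\textbf{Main obstacle.} The crux is precisely the boundedness of $\Z^t$ and $\E_2^t$. Because $\|\cdot\|_{F,1}^\psi$ and $\|\cdot\|_{11\phi}$ saturate (CapLp and CapL1 are bounded functions), they provide no coercivity, and the single quadratic coupling term cannot separate $\Z$ and $\E_2$, which could in principle diverge in mutually canceling directions. This is exactly why the lemma's conclusion is conditional on one of $\Z^t,\E_2^t$ being bounded; the remaining technical point is to confirm the isometry identity $\|\D*\Z^T\|=\|\Z\|$ from $\D^T*\D=\I$ so that the hypothesis transfers cleanly between the two blocks.
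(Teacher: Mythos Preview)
Your proposal is correct and follows essentially the same approach as the paper's own proof: the paper likewise uses the descent lemma plus the linearized minimizing inequality for the $\mC$ and $B$ blocks, the exact-minimizer comparison for the other four blocks, telescopes and uses $F\ge 0$ for part (2), and in part (3) reads off boundedness of each block from the boundedness of the individual summands of $F$. Your treatment of part (3) is in fact slightly more explicit than the paper's, which simply asserts that boundedness of $\|\mC^t-\D^t*\Z^{tT}-\E_2^t\|$ together with the hypothesis yields boundedness of both $\Z^t$ and $\E_2^t$; your use of the isometry $\|\D*\Z^T\|=\|\Z\|$ (valid because $\D^T*\D=\I$ forces each Fourier slice $\bar D^{(k)}$ to have orthonormal columns) makes transparent why boundedness of one block forces boundedness of the other.
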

\begin{proof}
$(1)$ From the Lipschitz continuity of $\nabla_{\mC}f(\mC, B^t)$ about $\mC$, it holds that
\begin{equation}\label{CA:C-1}
	f\left(\mC^{t+1}, B^t\right)\le f\left(\mC^t, B^t\right)+\left\langle \nabla_{\mC}f\left(\mC^t, B^t\right),  \mC^{t+1}-\mC^t\right\rangle + \frac{l_{\mC}^t}{2}\left\|\mC^{t+1}-\mC^t\right\|^2.
\end{equation}
Given that $ \mC^{t+1} $ is the minimizer of \eqref{O:Ce}, we derive
\begin{equation}\label{CA:C-2}
\delta_{\bC}\left(\mC^{t+1}\right)+\left\langle \nabla_{\mC}f\left(\mC^t, B^t\right), \mC^{t+1}-\mC^t \right\rangle + \frac{l_{\mC}^t}{2}\left\|\mC^{t+1}-\mC^t\right\|^2+\frac{\rho_1}{2}\left\|\mC^{t+1}-\mC^t\right\|^2
\le\delta_{\bC}\left(\mC^t\right).
\end{equation}
By summing \eqref{CA:C-1} and \eqref{CA:C-2}, one has
\begin{equation}\label{CA:C}
	F\left(\mC^{t+1}, B^t, \E_1^t, \D^t, \Z^t, \E_2^t\right)+\frac{\rho_1}{2}\left\|\mC^{t+1}-\mC^t\right\|^2\le F\left(\mC^t, B^t, \E_1^t, \D^t, \Z^t, \E_2^t\right).
\end{equation}
Similarly, for $B$, we have
\begin{equation}\label{CA:B}
	F\left(\mC^{t+1}, B^{t+1}, \E_1^t, \D^t, \Z^t, \E_2^t\right)+\frac{\rho_2}{2}\left\|B^{t+1}-B^t\right\|^2\le F\left(\mC^{t+1}, B^t, \E_1^t, \D^t, \Z^t, \E_2^t\right).
\end{equation}
Given that $ \E_1^{t+1} $, $ \D^{t+1} $, $ \Z^{t+1} $ and $ \E_2^{t+1} $ are the minimizers of \eqref{O:E1}, \eqref{O:D}, \eqref{O:Z}, and \eqref{O:E2}, respectively, we can deduce that
\begin{equation}\label{CA:O}
\begin{aligned}
	F\left(\mC^{t+1}, B^{t+1}, \E_1^{t+1}, \D^t, \Z^t, \E_2^t\right)+\frac{\rho_3}{2}\left\|\E_1^{t+1}-\E_1^t\right\|^2&\le F\left(\mC^{t+1}, B^{t+1}, \E_1^t, \D^t, \Z^t, \E_2^t\right),\\
	F\left(\mC^{t+1}, B^{t+1}, \E_1^{t+1}, \D^{t+1}, \Z^t, \E_2^t\right)+\frac{\rho_4}{2}\left\|\D^{t+1}-\D^t\right\|^2&\le F\left(\mC^{t+1}, B^{t+1}, \E_1^{t+1}, \D^t, \Z^t, \E_2^t\right),\\
	F\left(\mC^{t+1}, B^{t+1}, \E_1^{t+1}, \D^{t+1}, \Z^{t+1}, \E_2^t\right)+\frac{\rho_5}{2}\left\|\Z^{t+1}-\Z^t\right\|^2&\le F\left(\mC^{t+1}, B^{t+1}, \E_1^{t+1}, \D^{t+1}, \Z^t, \E_2^t\right),\\
	F\left(\mC^{t+1}, B^{t+1}, \E_1^{t+1}, \D^{t+1}, \Z^{t+1}, \E_2^{t+1}\right)+\frac{\rho_6}{2}\left\|\E_2^{t+1}-\E_2^t\right\|^2&\le F\left(\mC^{t+1}, B^{t+1}, \E_1^{t+1}, \D^{t+1}, \Z^{t+1}, \E_2^t\right).
\end{aligned}
\end{equation}
Combining \eqref{CA:C}, \eqref{CA:B} and \eqref{CA:O}, we have
\begin{equation}\label{CA:D}
	F\left(\W^{t+1}\right)+\rho\left\|\W^{t+1}-\W^t\right\|^2 \le F\left(\W^t\right),
\end{equation}
where $ \rho = \min_{i\in [6]} \{\rho_i/2\} $. Thus, we complete the proof of statement $(1)$.

$(2)$ Summing up \eqref{CA:D} over $t = 0, 1, \ldots, \infty$, it gives
\begin{equation}
	\rho\sum_{t=0}^{\infty}\left\|\W^{t+1}-\W^t\right\|^2 \le F\left(\W^0\right) - F\left(\W^{\infty}\right)<\infty,
\end{equation}
where the last inequality uses $F\left(\W^{\infty}\right)\ge0$. Thus, $\lim_{t\to\infty}\left\|\W^{t+1}-\W^t\right\| = 0 $.

$(3)$ According to \eqref{CA:D}, we have
\begin{equation*}
	F\left(\W^{t+1}\right) \le F\left(\W^t\right) \le \ldots \le F\left(\W^0\right)<\infty.
\end{equation*}
Thus, $ F(\W^t) $ is bounded. Combining this with
\begin{equation}\label{def:F}
	\begin{aligned}
		F\left(\W^t\right)=&\frac{\lambda_1}{2}\left\|B^t\right\|^2+\lambda_2 \left\|\E_1^t\right\|_{11\phi}+\frac{\lambda_3}{2}\left\|\mH-\mC^t\times_3 B^t-\E_1^t\right\|^2+\lambda_4\left\|\Z^t\right\|_{F,1}^\psi \\
		&+\lambda_5\left\|\E_2^t\right\|_{11\phi}+\frac{\lambda_6}{2}\left\|\mC^t - \D^t*{\Z^t}^T - \E_2^t\right\|^2+\delta_{\bB}\left(B^t\right)+\delta_{\bC}\left(\mC^t\right)+\delta_{\bD}\left(\D^t\right), 
	\end{aligned}
\end{equation}
we can deduce each term in \eqref{def:F} is bounded. From the boundedness of $\|B^t\|$, $\delta_{\bC}(\mC^t)$ and $\delta_{\bD}(\D^t)$, we can deduce $B^t$, $\mC^t$ and $\D^t$ are bounded. Combining this with the boundedness of $\|\mH - \mC^t \times_3 B^t - \E_1^t\|$ and $\|\mC^t - \D^t * {\Z^t}^T - \E_2^t\|$, and the fact that either $\Z^t$ or $\E_2^t$ is bounded, we obtain that $\Z^t$, $\E_1^t$, and $\E_2^t$ are bounded. Therefore, we can conclude that $\W^t$ is bounded.
\end{proof}

\begin{lemma}(relative error condition)\label{lem:rec}
	Let the sequence $\{\W^t\}$ be generated by Algorithm \ref{Alg:SPA}. Suppose that either $\Z^t$ or $\E_2^t$ is bounded. Then, there exists $\varpi>0$ such that
	\begin{equation}
		\left\| V^{t+1}\right\| \le \varpi\left\|\W^{t+1}-\W^t\right\|
	\end{equation}
	for any $ V^{t+1} \in \partial F(\W^{t+1})$.
\end{lemma}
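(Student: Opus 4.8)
The plan is to exhibit, block by block, an explicit element of the partial subdifferential whose norm is controlled by $\|\W^{t+1}-\W^t\|$, and then to assemble these into a single $V^{t+1}\in\partial F(\W^{t+1})$. Write $F=\Phi+G$, where $\Phi(\W)=\frac{\lambda_1}{2}\|B\|^2+\frac{\lambda_3}{2}\|\mH-\mC\times_3 B-\E_1\|^2+\frac{\lambda_6}{2}\|\mC-\D*\Z^T-\E_2\|^2$ is the smooth coupling part and $G(\W)=\lambda_2\|\E_1\|_{11\phi}+\lambda_4\|\Z\|_{F,1}^\psi+\lambda_5\|\E_2\|_{11\phi}+\delta_{\bB}(B)+\delta_{\bC}(\mC)+\delta_{\bD}(\D)$ collects the nonsmooth terms. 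Since $\Phi$ is continuously differentiable and $G$ is separable across the six blocks, the limiting subdifferential factorizes blockwise, $\partial F(\W)=\prod_{x}\left(\nabla_{x}\Phi(\W)+\partial_{x}G(\W)\right)$, so it suffices to produce one subgradient per block.

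First I would dispatch the two easy blocks $\E_1$ and $\E_2$, whose updates \eqref{O:E1} and \eqref{O:E2} are solved exactly and involve only companion variables already refreshed to iteration $t+1$. Their stationarity conditions read $0\in\nabla_{\E_1}\Phi(\W^{t+1})+\lambda_2\partial\|\E_1^{t+1}\|_{11\phi}+\rho_3(\E_1^{t+1}-\E_1^t)$ and the analogue for $\E_2$, so $V_{\E_1}^{t+1}:=-\rho_3(\E_1^{t+1}-\E_1^t)\in\partial_{\E_1}F(\W^{t+1})$ with $\|V_{\E_1}^{t+1}\|=\rho_3\|\E_1^{t+1}-\E_1^t\|$, and likewise $\|V_{\E_2}^{t+1}\|=\rho_6\|\E_2^{t+1}-\E_2^t\|$. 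These blocks carry no gradient-mismatch term.

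Next, for $\mC$, $B$, $\D$ and $\Z$ the subproblem evaluates $\nabla_{x}\Phi$ (or, for $\mC$ and $B$, its proximal linearization) at a point $\W_x^{t+1}$ that still carries stale companion blocks. Rearranging each stationarity condition and adding $\nabla_{x}\Phi(\W^{t+1})$ yields a block subgradient $V_{x}^{t+1}=\nabla_{x}\Phi(\W^{t+1})-\nabla_{x}\Phi(\W_x^{t+1})-c_x^t\,(x^{t+1}-x^t)\in\partial_x F(\W^{t+1})$. Concretely, \eqref{O:Ce} linearizes at $\W^t$, giving $V_{\mC}^{t+1}=\nabla_{\mC}\Phi(\W^{t+1})-\nabla_{\mC}\Phi(\W^{t})-(l_{\mC}^t+\rho_1)(\mC^{t+1}-\mC^t)$; the $B$-update \eqref{O:Be} mismatches only in $(B,\E_1)$, the $\D$-update \eqref{O:D} only in $(\Z,\E_2)$, and the $\Z$-update \eqref{O:Z} only in $\E_2$.

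The crux is to bound each gradient mismatch $\nabla_{x}\Phi(\W^{t+1})-\nabla_{x}\Phi(\W_x^{t+1})$. By Lemma \ref{lem:sdc}(3) the boundedness hypothesis confines the whole sequence $\{\W^t\}$ to a compact set $\K$. Because the coupling enters $\Phi$ only through the bilinear maps $\mC\times_3 B$ and $\D*\Z^T$, $\Phi$ is a degree-four polynomial whose gradient is Lipschitz on $\K$ with some constant $L$; this is exactly where compactness is indispensable, as $\nabla\Phi$ fails to be globally Lipschitz. Each mismatch is then at most $L\|\W^{t+1}-\W^t\|$, while the linearization coefficients stay uniformly bounded on $\K$ since $l_{\mC}^t=\lambda_3\|B^t\|_2^2+\lambda_6$ and $l_{B}^t=\lambda_1+\lambda_3\|C_{(3)}^{t+1}\|_2^2$. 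Collecting $\|V_x^{t+1}\|\le c_x\|\W^{t+1}-\W^t\|$ over the six blocks and concatenating gives $\|V^{t+1}\|\le\sum_x\|V_x^{t+1}\|\le\varpi\|\W^{t+1}-\W^t\|$ with $\varpi:=\sum_x c_x$. I expect the principal obstacle to be exactly this non-global Lipschitzness of $\nabla\Phi$ from the bilinear couplings, resolved by the compactness in Lemma \ref{lem:sdc}(3); the remainder is careful bookkeeping of which companion blocks stay stale in each subproblem.
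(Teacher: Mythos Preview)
Your approach is correct and essentially coincides with the paper's own proof: both construct a specific element of $\partial F(\W^{t+1})$ block by block from the first-order optimality conditions of the six subproblems, express each block as a gradient-mismatch term plus a proximal increment, and then invoke boundedness of $\{\W^t\}$ (from Lemma~\ref{lem:sdc}(3)) to obtain a uniform Lipschitz bound on the smooth coupling. Your bookkeeping of which companion blocks remain stale in each subproblem, and your explicit remark that $l_{\mC}^t$ and $l_B^t$ are bounded on the compact set, are slightly more detailed than the paper's write-up but follow the same line; note that, as in the paper, what is actually established is the \emph{existence} of such a $V^{t+1}$, not the bound for every subgradient.
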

\begin{proof}
	By the first-order necessary optimality conditions in \eqref{O:Ce}, \eqref{O:Be}, \eqref{O:E1}, \eqref{O:D}, \eqref{O:Z}, and \eqref{O:E2}, we have
	\begin{equation}\label{REC-1}
		\left\{\begin{array}{l}
			0 \in \partial \delta_{\bC}\left(\mC^{t+1}\right)+ \nabla_{\mC}G_1\left(\mC^t, B^t,\E_1^t,\D^t,\Z^t,\E_2^t\right)+\left(l_{\mC}^t+\rho_1\right)\left(\mC^{t+1}-\mC^t\right);\\
			0 \in  \partial \delta_{\bB}\left(\B^{t+1}\right)+ \nabla_{\B}G_1\left(\mC^{t+1}, B^t,\E_1^t,\D^t,\Z^t,\E_2^t\right)+\left(l_{\B}^t+\rho_2\right)\left(\B^{t+1}-B^t\right);\\
			0 \in \lambda_2\partial\left\|\E_1^{t+1}\right\|_{11\phi}+ \lambda_3\left(-\mH+\mC^{t+1}\times_3 B^{t+1}+\E_1^{t+1}\right)+\rho_3\left(\E_1^{t+1}-\E_1^t\right);\\
			0 \in \partial\delta_{\bD}\left(\D^{t+1}\right)+ \nabla_{\D}G_2\left(\D^{t+1},\Z^t,\E_2^t\right)+\rho_4\left(\D^{t+1}-\D^t\right);\\
			0 \in \lambda_4\partial\left\|\Z^{t+1}\right\|_{F,1} + \nabla_{\Z}G_2\left(\D^{t+1},\Z^{t+1},\E_2^t\right)+\rho_5\left(\Z^{t+1}-\Z^t\right); \\
			0 \in \lambda_5\partial\left\|\E_2^{t+1}\right\|_{11\phi} + \lambda_6\left(-\mC^{t+1} + \D^{t+1}*\Z^{{t+1}^T} + \E_2^{t+1} \right)+\rho_6\left(\E_2^{t+1}-\E_2^t\right),
		\end{array}\right.
	\end{equation}
	where $G_1(\mC, B,\E_1,\D,\Z,\E_2)=\frac{\lambda_1}{2}\|B\|^2+\frac{\lambda_3}{2}\|\mH-\mC\times_3 B-\E_1\|^2+\frac{\lambda_6}{2}\|\mC - \D*\Z^T - \E_2\|^2$ and $G_2(\D,\Z,\E_2)=\frac{\lambda_6}{2}\|\mC^{t+1} - \D*\Z^T - \E_2\|^2$. Denoting $V_1^{t+1}$, $V_2^{t+1}$, $V_3^{t+1}$, $V_4^{t+1}$, $V_5^{t+1}$ and $V_6^{t+1}$ as follows:
\begin{equation}\label{REC-2}
	\left\{\begin{array}{l}
		V_1^{t+1}= \nabla_{\mC}G_1\left(\W^{t+1}\right)-\nabla_{\mC}G_1\left(\W^t\right)-\left(l_{\mC}^t+\rho_1\right)\left(\mC^{t+1}-\mC^t\right);\\
		V_2^{t+1}=  \nabla_{\B}G_1\left(\W^{t+1}\right)- \nabla_{\B}G_1\left(\mC^{t+1}, B^t,\E_1^t,\D^t,\Z^t,\E_2^t\right)-\left(l_{\B}^t+\rho_2\right)\left(\B^{t+1}-B^t\right);\\
		V_3^{t+1}= -\rho_3\left(\E_1^{t+1}-\E_1^t\right);\\
		V_4^{t+1}= \nabla_{\D}G_2\left(\D^{t+1},\Z^{t+1},\E_2^{t+1}\right)-\nabla_{\D}G_2\left(\D^{t+1},\Z^t,\E_2^t\right)-\rho_4\left(\D^{t+1}-\D^t\right);\\
		V_5^{t+1}= \nabla_{\Z}G_2\left(\D^{t+1},\Z^{t+1},\E_2^{t+1}\right)-\nabla_{\Z}G_2\left(\D^{t+1},\Z^{t+1},\E_2^t\right)-\rho_5\left(\Z^{t+1}-\Z^t\right); \\
		V_6^{t+1}=-\rho_6\left(\E_2^{t+1}-\E_2^t\right).
	\end{array}\right.
\end{equation}
Combining \eqref{REC-1}, \eqref{REC-2} and recalling the definition of $F(\W)$, one has
$$
V^{t+1}=\left(V_1^{t+1},V_2^{t+1},V_3^{t+1},V_4^{t+1},V_5^{t+1},V_6^{t+1}\right)  \in \partial F\left(\W^{t+1}\right).
$$
It is evident that $G_1$ and $G_2$ are Lipschitz continuous on any bounded set. From Lemma \ref{lem:sdc}(3), we know that $\W^t$ is bounded. Hence, we obtain
\begin{equation}
	\left\| V^{t+1}\right\| \le \varpi\left\|\W^{t+1}-\W^t\right\|,
\end{equation}
where $ \varpi = \max_{i\in [6]}\{\rho_i+l_g\} $ and $ l_g $ is the maximum Lipschitz constant of $G_1$ and $G_2$.
\end{proof}

\begin{lemma}\label{lem:KL}
	The function $F(\W)$ is a Kurdyka-Łojasiewicz (KL) function.
\end{lemma}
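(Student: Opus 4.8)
The plan is to prove that $F$ is semi-algebraic and then invoke the standard fact that every proper lower semicontinuous semi-algebraic function satisfies the Kurdyka--{\L}ojasiewicz property at each point of its domain (see \cite{ABRS10}). Since $F$ is a finite sum of terms together with three indicator functions, it is proper and lower semicontinuous on the feasible set; and because the class of semi-algebraic functions is closed under finite sums, it suffices to verify that each of the nine summands in the definition \eqref{Q-un:SU-LRTR} of $F$ is semi-algebraic.

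First I would dispatch the smooth polynomial pieces together with the indicator functions. The term $\frac{\lambda_1}{2}\|B\|^2$ is a polynomial in the entries of $B$, and the two coupling terms $\frac{\lambda_3}{2}\|\mH-\mC\times_3 B-\E_1\|^2$ and $\frac{\lambda_6}{2}\|\mC-\D*\Z^T-\E_2\|^2$ are polynomials in the entries of their arguments, since the mode-3 product is bilinear by Definition \ref{def:tmp} and the t-product $\D*\Z^T$ is bilinear through $\operatorname{bcirc}$ (Definition \ref{def:t-product}); hence all three are semi-algebraic. For the indicators, the indicator of a semi-algebraic set is semi-algebraic, and each of $\bB=\{B\mid B\ge0\}$ (a polyhedron), $\bC=\{\mC\mid \sum_k\mC(i,j,k)^2=1\}$ (cut out by finitely many polynomial equations), and $\bD=\{\D\mid\D^T*\D=\I\}$ (cut out by equating the bilinear entries of $\D^T*\D$ to those of $\I$) is semi-algebraic.

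Next I would treat the nonsmooth penalties $\|\E_1\|_{11\phi}=\sum_{i,j}\phi(\|\E_1(i,j,:)\|)$, the analogous $\E_2$ term with $\phi=\phi^{CapL1}=\min\{\cdot,1\}$, and $\|\Z\|_{F,1}^\psi=\sum_j\psi(\|\Z(:,j,:)\|)$ with $\psi(x)=\min\{x^p/\nu^p,1\}$. Each tube norm $\|\E_1(i,j,:)\|$ and $\|\Z(:,j,:)\|$ has a semi-algebraic graph, being the square root of a sum of squares. The outer map $\phi(x)=\min\{x,1\}$ is piecewise linear, hence semi-algebraic, so its composition with the tube norm, summed over $i,j$, is semi-algebraic. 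For $\psi$, the map $x\mapsto x^p$ has a semi-algebraic graph when $p\in\mathbb{Q}$ (writing $p=p'/q$, the graph is $\{(x,y):x\ge0,\ y\ge0,\ y^{q}=x^{p'}\}$), so $\psi$ and the corresponding term are semi-algebraic; combining all nine pieces shows $F$ is semi-algebraic and therefore KL.

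The main obstacle I anticipate is the case of irrational $p$: when $p\notin\mathbb{Q}$, $x\mapsto x^p$ is no longer semi-algebraic, and I would instead observe that it is definable in the o-minimal structure $\mathbb{R}_{\exp}$ (log--exp), for which the same KL conclusion holds, so the argument must be phrased at the level of o-minimal definability rather than pure semi-algebraicity. A secondary care point is the rigorous confirmation, from the $\operatorname{bcirc}$ formula, that the entries of the t-product $\D*\Z^T$ (and of $\D^T*\D$) are genuinely polynomial, so that the coupling term and the constraint set $\bD$ indeed belong to the semi-algebraic class; this is routine but should be made explicit to keep the term-by-term verification airtight.
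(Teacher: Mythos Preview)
Your proposal is correct and follows essentially the same approach as the paper: verify term by term that each summand of $F$ is semi-algebraic (polynomial coupling terms, indicator functions of semi-algebraic sets, and the $\phi$/$\psi$ penalties built from norms, minima, and finite sums), then invoke the standard result that proper lower semicontinuous semi-algebraic functions satisfy the KL property. Your treatment is in fact more careful than the paper's on the $\|\Z\|_{F,1}^\psi$ term, since you flag the irrational-$p$ obstruction and propose the o-minimal (log--exp definable) fix, whereas the paper simply asserts semi-algebraicity of this term without isolating the $x\mapsto x^p$ map.
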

\begin{proof}
	According to \cite{BST14}, the Frobenius norm functions $\frac{\lambda_1}{2}\|B\|^2$, $\frac{\lambda_3}{2}\|\mH-\mC\times_3 B-\E_1\|^2$, and $ \frac{\lambda_6}{2}\|\mC - \D*\Z^T - \E_2\|^2 $ are semi-algebraic functions. The terms $\lambda_2 \|\E_1\|_{11\phi}$ and $ \lambda_5\|\E_2\|_{11\phi}$ are also semi-algebraic functions because they are composed of semi-algebraic operations, including the Frobenius norm, minimum, and finite summation. Similarly, $\lambda_4\|\Z\|_{F,1}^\psi$, which involves only the Frobenius norm and finite summation, is semi-algebraic. The functions $\delta_{\bB}(B)$, $ \delta_{\bC}(\mC) $, and $\delta_{\bD}(\D)$ are semi-algebraic because they are indicator functions with semi-algebraic sets \cite{BST14}. 
	Hence, $F(\W)$ is semi-algebraic because it is a finite sum of semi-algebraic functions. Additionally, since $F(\W)$ is a proper continuous function, it follows from \cite[Theorem 3]{BST14} that $F(\W)$ is a KL function. This completes the proof.
\end{proof}

Finally, we provide a theoretical guarantee for the convergence of Algorithm \ref{Alg:SPA}.

\begin{theorem}
	Consider the sequence $\{\W^t\}$ obtained by Algorithm \ref{Alg:SPA}. Assuming that either $\Z^t$ or $\E_2^t$ is bounded, the sequence $\{\W^t\}$ converges to a critical point of $F(\W)$.
\end{theorem}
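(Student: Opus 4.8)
The plan is to invoke the abstract convergence framework for descent methods on Kurdyka--Łojasiewicz (KL) functions developed in \cite{ABRS10,BST14}, since Lemmas~\ref{lem:sdc}, \ref{lem:rec}, and \ref{lem:KL} furnish precisely the three ingredients it requires: a sufficient decrease property, a relative-error (subgradient lower-bound) property, and the KL property of $F$. First I would record that, under the standing assumption that either $\Z^t$ or $\E_2^t$ is bounded, Lemma~\ref{lem:sdc}(3) guarantees $\{\W^t\}$ is bounded, so by the Bolzano--Weierstrass theorem the set $\omega(\W^0)$ of accumulation points is nonempty and compact. Moreover, Lemma~\ref{lem:sdc}(1) shows $\{F(\W^t)\}$ is nonincreasing and bounded below by $0$, hence convergent to some $F^\ast$; a lower/upper semicontinuity argument (the iterates remain feasible, so the indicator terms $\delta_{\bB}$, $\delta_{\bC}$, $\delta_{\bD}$ stay finite at every limit point) then yields that $F \equiv F^\ast$ is finite and constant on $\omega(\W^0)$.

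Next I would show every accumulation point $\W^\ast$ is a critical point of $F$. Fix a subsequence $\W^{t_j}\to\W^\ast$. By Lemma~\ref{lem:sdc}(2) we have $\|\W^{t_j}-\W^{t_j-1}\|\to 0$, so the relative-error bound of Lemma~\ref{lem:rec} gives $\|V^{t_j}\|\le\varpi\|\W^{t_j}-\W^{t_j-1}\|\to 0$ for some $V^{t_j}\in\partial F(\W^{t_j})$. Since $\W^{t_j}\to\W^\ast$, $F(\W^{t_j})\to F(\W^\ast)$, and $V^{t_j}\to 0$, the closedness of the graph of the limiting subdifferential forces $0\in\partial F(\W^\ast)$, that is, $\W^\ast$ is critical.

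The main work---and the step I expect to be the principal obstacle---is establishing that $\{\W^t\}$ has finite length, $\sum_{t}\|\W^{t+1}-\W^t\|<\infty$, from which the Cauchy property and convergence follow. Here I would use the uniformized KL property on the compact set $\omega(\W^0)$ (valid because $F$ is constant there, by Lemma~\ref{lem:KL} together with \cite[Lemma~6]{BST14}): there exist $\eta>0$, a neighborhood of $\omega(\W^0)$, and a concave desingularizing function $\varphi$ such that $\varphi'\!\left(F(\W^t)-F^\ast\right)\,\mathrm{dist}\!\left(0,\partial F(\W^t)\right)\ge 1$ for all large $t$. Combining this with the sufficient decrease of Lemma~\ref{lem:sdc}(1), the relative-error bound of Lemma~\ref{lem:rec}, the concavity estimate $\varphi(a)-\varphi(b)\ge\varphi'(a)(a-b)$, and the inequality $2\sqrt{ab}\le a+b$ produces the telescoping bound
\begin{equation*}
	2\left\|\W^{t+1}-\W^t\right\| \le \left\|\W^t-\W^{t-1}\right\| + \frac{\varpi}{\rho}\left[\varphi\!\left(F(\W^t)-F^\ast\right)-\varphi\!\left(F(\W^{t+1})-F^\ast\right)\right],
\end{equation*}
which, summed over $t$, telescopes to a finite total length.

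Consequently $\{\W^t\}$ is Cauchy and converges to some $\W^\ast$, which by the second paragraph is a critical point of $F$. The delicate points to handle carefully are the uniformization of the KL inequality over the entire cluster set and the verification that the chosen neighborhood eventually captures all iterates (using $\|\W^{t+1}-\W^t\|\to 0$ and $\mathrm{dist}(\W^t,\omega(\W^0))\to 0$); once these are in place, the remaining derivation is the standard \cite{ABRS10,BST14} bookkeeping.
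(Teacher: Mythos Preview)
Your proposal is correct and follows essentially the same approach as the paper: both rely on Lemma~\ref{lem:sdc}(3) for boundedness and then invoke the abstract KL convergence framework (the paper cites \cite[Theorem~2.9]{ABS13}, you cite the equivalent results in \cite{ABRS10,BST14}) via the three ingredients supplied by Lemmas~\ref{lem:sdc}, \ref{lem:rec}, and \ref{lem:KL}. The paper's proof is much terser, simply pointing to the abstract theorem, whereas you spell out the finite-length argument in detail, but the underlying strategy is identical.
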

\begin{proof}
	We begin by noting that Lemma \ref{lem:sdc}(3) establishes the boundedness of the sequence $\{\W^t\}$. Given this boundedness, the Bolzano-Weierstrass theorem ensures the existence of a convergent subsequence. Furthermore, by exploiting the continuity of $F(\W)$, along with the results derived from Lemmas \ref{lem:sdc}, \ref{lem:rec}, \ref{lem:KL}, we can rigorously establish the desired conclusion as articulated in \cite[Theorem 2.9]{ABS13}.
\end{proof}

\subsection{Rank reduction strategy with validation mechanism}
In this subsection, we propose a rank reduction strategy to decrease $r$, which is related to the dimensions of $\D \in \mathbb{R}^{n_1 \times r \times b} $ and $\Z \in \mathbb{R}^{n_2 \times r \times b}$, thereby reducing the complexity of Algorithm \ref{Alg:SPA}.  Unlike most decomposition methods that set a small initial $r^0$, we set $r^0 = \min\{n_1, n_2\}$ and leverage the algorithm's inherent capability to adaptively reduce $r^t$. This approach is initially supported by Theorem \ref{Thm:rank}. Furthermore, we present the following theorem to substantiate our proposed rank reduction strategy.
\begin{theorem}\label{Thm:rc}
	There exists $t^\#\in\mathbb{N}$ such that $\Gamma(\Z^t)=\Gamma(\Z^{t+1}) $ for each $t\ge t^\#$, where $\Gamma(\Z)=\left\lbrace j \mid\|\Z(:,j,:)\| \neq 0, \, j=1, \ldots, r\right\rbrace$.
\end{theorem}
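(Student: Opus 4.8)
The plan is to combine two ingredients: a uniform \emph{gap} property of the $\Z$-update map, which forces every nonzero tube of $\Z^t$ to have norm bounded below by a positive constant independent of $t$, together with the asymptotic regularity $\lim_{t\to\infty}\|\W^{t+1}-\W^t\|=0$ established in Lemma \ref{lem:sdc}(2). Once both are available, a short contradiction argument shows that membership in $\Gamma$ can no longer change after the successive differences drop below the gap.

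First I would extract the gap property from the closed-form update \eqref{opt:Z}. Since the prescribed direction $\hat{\Z}^{t+1}(:,j,:)/\|\hat{\Z}^{t+1}(:,j,:)\|$ is a unit tube, the norm of the $j$-th output tube equals the scalar proximal value, i.e. $\|\Z^{t+1}(:,j,:)\|=\operatorname{Prox}_{\hat{\lambda}_4}\psi(\|\hat{\Z}^{t+1}(:,j,:)\|)$ when $\|\hat{\Z}^{t+1}(:,j,:)\|\neq 0$, and $0$ otherwise. It therefore suffices to analyze the scalar map $\operatorname{Prox}_{\hat{\lambda}_4}\psi$. From the explicit formula for $\operatorname{prox}_{\hat{\lambda}_4}\|\cdot\|^p$ one sees that its nonzero values occur only for $z\ge\pi_2$ and are bounded below by $\pi_1=(2\hat{\lambda}_4(1-p))^{1/(2-p)}>0$; passing to the capped function $\psi(x)=\min\{x^p/\nu^p,1\}$, the two candidate minimizers $u_1$ and $u_2$ are each either $0$ or at least $\min\{\pi_1,\nu\}$. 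Setting $\tau:=\min\{\pi_1,\nu\}$, which depends only on the fixed constants $\hat{\lambda}_4=\lambda_4/(\lambda_6+\rho_5)$, $p$ and $\nu$, I obtain
$$
\|\Z^{t}(:,j,:)\|\in\{0\}\cup[\tau,\infty),\qquad \forall\,t\ge 1,\ j\in[r].
$$

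Next, since $\Z^t$ is a block of $\W^t$, Lemma \ref{lem:sdc}(2) yields $\|\Z^{t+1}-\Z^t\|\le\|\W^{t+1}-\W^t\|\to 0$, so there is $t^\#\in\mathbb{N}$ with $\|\Z^{t+1}(:,j,:)-\Z^t(:,j,:)\|\le\|\Z^{t+1}-\Z^t\|<\tau$ for every $t\ge t^\#$ and every $j$. Fixing such a $t$ and an index $j$, suppose the membership of $j$ in $\Gamma$ changes: if $j\in\Gamma(\Z^t)\setminus\Gamma(\Z^{t+1})$ then $\Z^{t+1}(:,j,:)=0$ while $\|\Z^t(:,j,:)\|\ge\tau$, so $\|\Z^{t+1}(:,j,:)-\Z^t(:,j,:)\|\ge\tau$, a contradiction; the reverse inclusion is symmetric, using the gap lower bound on $\Z^{t+1}$. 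Hence $\Gamma(\Z^{t+1})=\Gamma(\Z^t)$ for all $t\ge t^\#$, which is the claim.

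The main obstacle is the rigorous verification of the gap constant $\tau$ for the CapLp proximal operator: one must check that the comparison defining $\operatorname{Prox}_{\hat{\lambda}_4}\psi$ never selects a nonzero value smaller than $\min\{\pi_1,\nu\}$, in particular ruling out small positive minimizers arising from the interplay between the $\ell_p$ branch on $[0,\nu]$ and the saturated branch on $[\nu,\infty)$. This is a finite, parameter-only case analysis, and crucially $\tau$ must be shown \emph{independent of} $t$ — which holds because $\hat{\lambda}_4$, $p$ and $\nu$ do not vary across iterations — so that a single threshold $t^\#$ serves simultaneously for all subsequent steps.
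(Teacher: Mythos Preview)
Your proposal is correct and follows essentially the same route as the paper's proof: both combine the asymptotic regularity $\|\Z^{t+1}-\Z^t\|\to 0$ from Lemma~\ref{lem:sdc}(2) with the gap property that nonzero tubes of $\Z^t$ have norm at least $\tau=\min\{(2\hat{\lambda}_4(1-p))^{1/(2-p)},\nu\}$, and then argue by contradiction. Your write-up is in fact more explicit than the paper's on the point you flag as the ``main obstacle'': the paper simply asserts the gap bound by reference to \eqref{opt:Z}, whereas you spell out that it comes from analyzing the scalar proximal map and that $\tau$ is iteration-independent.
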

\begin{proof}
	By Lemma \ref{lem:sdc} that $\lim_{t\to\infty} \|\Z^{t+1}-\Z^t\| =0$, there exists $t^\#\in\mathbb{N}$ such that $\|\Z^{t+1}-\Z^t\| < \min\{(2\hat{\lambda}_4(1-p))^{1/(2-p)}, \nu\}$ for each $t\ge t^\#$. Proving by contradiction, we assume that there exists $t\ge t^\#$ such that $\Gamma(\Z^t)\neq\Gamma(\Z^{t+1}) $. Then there exists $j \in r$ such that (a) $\Z^t(:,j,:)\neq0$  and $\Z^{t+1}(:,j,:)=0$, or (b) $\Z^t(:,j,:)=0$  and $\Z^{t+1}(:,j,:)\neq0$. Hence by \eqref{opt:Z}, we have  for both cases that
	$$
	\left\|\Z^{t+1}-\Z^t\right\|\ge\left\|\Z^{t+1}(:,j,:)-\Z^t(:,j,:)\right\|\ge \min\{(2\hat{\lambda}_4(1-p))^{1/(2-p)}, \nu\},
	$$
	which yields a contradiction. The proof is complete.
\end{proof}

Based on the aforementioned theorem, when $t\ge t^\#$, we can remove the zero lateral slices of $\Z^t$ and the corresponding lateral slices in $\D^t$. Consequently, the dimensions of $\D^t$ and $\Z^t$ can be reduced to  $\mathbb{R}^{n_1 \times |\Gamma(\Z^\#)| \times b}$ and $\mathbb{R}^{n_2 \times |\Gamma(\Z^\#)| \times b}$, respectively, when $t\ge t^\#$. 
  However, when $t < t^\#$, the computational cost remains high due to the large value of $\min\{n_1, n_2\}$, and $t^\#$ is uncertain and potentially very large.

  Therefore, we initiate the reduction of $r$ from $t = 1$, as illustrated in Step 21 of Algorithm \ref{Alg:Rank}. However, this approach introduces a potential issue: when $\Z^{t+1}(:,j,:) = 0$, it does not necessarily imply that $\Z^{t+2}(:,j,:) = 0$ for $t+1 <t^\#$. Consequently, removing  $\Z^{t+1}(:,j,:)$ may be unjustified. To address this, we incorporate a verification step in Steps 16-20 of Algorithm \ref{Alg:Rank} to assess the validity of removing $\Z^{t+1}(:,j,:)$. If  $\Z_{sub}^{t+1}(:,j,:) = 0$, we deem it reasonable to remove $\Z^{t+1}(:,j,:)$; conversely, if  $\Z_{sub}^{t+1}(:,j,:) \neq 0$, we consider it unreasonable to remove $\Z^{t+1}(:,j,:)$ and reintegrate  $\Z_{sub}^{t+1}(:,j,:)$ back into $\Z^{t+1}$. This procedure is detailed in Algorithm \ref{Alg:Rank}.

\begin{algorithm}[!htbp]
	\caption{Rank reduction with validation mechanism strategy for Algorithm \ref{Alg:SPA}.}\label{Alg:Rank}
	\KwIn{HSI data $\mH$, regularization parameters $\left\lbrace\lambda_i\right\rbrace_{i=1}^6$ and proximal parameters $\left\lbrace\rho_i\right\rbrace_{i=1}^6$.}
	\KwOut{Detection map $T$.}
	$t\leftarrow 0$;
	
	\While{not converged}
	{
		Update $\mC^{t+1}$, $B^{t+1}$, $\E_1^{t+1}$, $\D^{t+1}$, $\Z^{t+1}$, and $\E_2^{t+1}$ using Algorithm \ref{Alg:SPA};
		
		\If{t $\ge$ 2 \& $\mathbb{S}\neq \emptyset$}{
		Compute
		$\Z_{sub}^{t+1}=\operatorname{Prox}_{\hat\lambda_4}\psi\left( \lambda_6\left(\mC^{t+1} - \E_2^{t+1}\right)^T*\D_{sub}^t(:,\mathbb{S},:)/(\lambda_6+\rho_5)\right)$;
		
		Identify the top 5 lateral slices of $\Z_{sub}^{t+1}$ with the highest Frobenius norms, or all if fewer than 5, and denote this set as $\mathbb{S}_1$;
		
		Let $\Z^{t+1}=[\Z^{t+1}, \Z_{sub}^{t+1}(:,\mathbb{S}_1,:)]$ and $\D^{t+1}=[\D^{t+1}, \D_{sub}^t(:,\mathbb{S}_1,:)]$;
		}
		
		Let $\mathbb{S} = \{j \mid \|\Z^{t+1}(:,j,:)\|=0\}$, and define $\D_{sub}^{t+1} = \D^{t+1}(:,\mathbb{S},:)$. Then, remove the lateral slices at positions in $\mathbb{S}$ from $\D^{t+1}$ and $\Z^{t+1}$;
		
		$t\leftarrow t+1$\;
	}
	Obtain the optimal anomalies $\E_1^*$, $\E_2^*$;
	
	Compute the detection map $T$ by \eqref{opt:T}.
\end{algorithm}

\begin{remark}
	In Algorithm \ref{Alg:Rank}:
	
	(1) Step 21 significantly reduces the value of $r$, thereby decreasing the algorithm's complexity. Meanwhile, Steps 16-20 ensure optimal solution quality by providing a mechanism to correct any erroneous reductions in $r$. This interplay between the steps enhances the algorithm's efficiency while maintaining high-quality results. 
	
	(2) In Step 21, if $ |\mathbb{S}| = size(\Z^{t+1}, 2) $, executing this step would result in $\D^{t+1}$ and $\Z^{t+1}$ becoming empty. We consider this to be unreasonable, and therefore, this step is not executed in such cases.
	
	(3) In Step 18, to prevent $r$ from increasing too rapidly, we impose a limit of a maximum increase of $5$ per step.
\end{remark}

\subsection{Complexity analysis}
Here we analyze the computational complexity of Algorithm \ref{Alg:Rank}. At each iteration, updating $\mC$ involves performing the tensor-matrix product and t-product, with a computational complexity of $ \mathcal{O}\left(n_1n_2(n_3+r^t)b\right) $ when the iterates $\D^t$ and $\Z^t$ contain $ r^t $ nonzero lateral slices. 
For updating $B$ and $\E_1$, the primary computational cost is associated with the matrix (tensor-matrix) product, it costs $\mathcal{O}\left(n_1n_2n_3b\right)$.
Updating $\D$ requires a computational cost of $\mathcal{O}\left((n_1+n_2+\log b)n_1r^tb\right)$ to perform the (inverse) DFT, SVD, and matrix product calculations.
The computational cost for updating $ \Z $ and $ \E_2 $ is $\mathcal{O}\left(n_1n_2r^tb+n_1(n_2+r^t)b\log b\right)$ and $\mathcal{O}\left(n_1n_2r^tb+(n_1+n_2)r^tb\log b\right)$, respectively.
In summary, the overall computational complexity of our proposed method is
$ \mathcal{O}\left(n_1n_2(n_3+r^t)b\right) $, assuming that $n_1\approx n_2$ in most cases. It is important to note that $r^t$ decreases with each iteration and eventually becomes much smaller than $\min\{n_1, n_2\}$. The computational complexity of a single SVD operation for the input HSI $\mH$ is $ \mathcal{O}\left(n_1n_2n_3\min\{n_1n_2, n_3\}\right) $ in matrix based methods. Therefore, the proposed method offers a lower computational cost compared to other low rank approximation based HAD methods.

\section{Experiments}
In this section, we present a series of experiments designed to assess the performance of the proposed algorithm. These experiments were conducted on a server equipped with 16 logical CPU cores and 16 GB of memory. All algorithms were implemented using MATLAB 2022a, and no preprocessing was applied to maintain fairness.

For comparison, we employed several established methods in our experiments. These included RX \cite{RY90}, which is statistic based; RPCA \cite{SLL14} and LRASR \cite{XWL16}, which are matrix based; PTA \cite{LLQ22}, TPCA \cite{CYW18}, and TLRSR \cite{WWH23}, which are tensor based; and RGAE \cite{FMM22} and GAED \cite{XAJZ22}, which are deep learning based. To evaluate detection performance, we utilized three widely recognized metrics in addition to visually observing the resulting anomaly maps: the receiver operating characteristic (ROC) curve \cite{Ker08}, the area under the ROC curve (AUC) value \cite{KHSM11}, and the separability map between anomalies and the background.

\subsection{Datasets description}
In this section, the proposed method is evaluated using the Airport-Beach-Urban (ABU) database, a real hyperspectral database, and the MVTec database, a natural image database, as detailed below.

1) Airport-Beach-Urban Dataset \footnote{http://xudongkang.weebly.com/data-sets.html.}: The Airport-Beach-Urban (ABU) dataset includes three distinct scenes: Airport, Beach, and Urban. The Airport and Beach scenes each contain four images, while the Urban scene includes five images. All images were captured using the Airborne Visible/Infrared Imaging Spectrometer (AVIRIS) sensor, producing images with approximately $200$ spectral bands, except for one Beach scene, which was captured by the Reflective Optics System Imaging Spectrometer (ROSIS-03) sensor, producing images with approximately $100$ spectral bands. Detailed acquisition procedures are described in \cite{KZL17}. The sample images, sized $100\times 100$ or $150\times 150$ pixels, include corresponding reference maps. These reference maps were manually labeled using the Environment for Visualizing Images (ENVI) software. Due to space limitations, we selected two images from the Airport and Urban scene datasets for the experiments, as illustrated in Figure \ref{fig:HSIs}, which shows the selected image scenes and their corresponding ground truth maps.

2) MVTec Dataset \footnote{https://www.mvtec.com/company/research/datasets/mvtec-ad.}:
The MVTec dataset \cite{BFSS19}, developed by MVTec Software GmbH, is specifically designed for anomaly detection in a real industrial scenario. This dataset encompasses $15$ different industrial products, which are categorized into texture and object classes. The texture class includes five categories: carpet, grid, leather, tile, and wood. The object class comprises ten categories: bottle, cable, capsule, hazelnut, metal nut, pill, screw, toothbrush, transistor, and zipper. Each product category contains multiple defect types, resulting in a total of $73$ distinct defects within the dataset. The resolution of the images ranges from $700 \times 700$ pixels to $1024 \times 1024$ pixels. Due to space constraints, we selected images of hazelnuts exhibiting all defect types (crack, cut, hole, and print) for our experiments and resized them to $512 \times 512$ pixels. Figure \ref{fig:MVTec} presents the selected images along with their corresponding ground truth maps.


\subsection{Discussion}

\subsubsection{Effects of rank reduction strategy with validation mechanism}\label{sec:ra}

In our model, the large tensor $\mC\in\R^{n_1\times n_2\times b}$ is decomposed into two smaller tensors, $\D\in\R^{n_1\times r\times b}$ and $\Z\in\R^{n_2\times r\times b}$, utilizing the t-product, with the dimensions of $\D$ and $\Z$ determined by the parameter $r$. By leveraging the principle of group sparsity, we propose a rank reduction strategy with validation mechanism in Algorithm \ref{Alg:Rank} to adaptively adjust the value of $r$.

Initially, we depict the variation in $r$ values as a function of iteration count for the MVTec dataset in Figure \ref{fig:r_adj}. From Figure \ref{fig:r_adj}, it is evident that $r$ significantly decreases during the initial stages of the algorithm, iterating at a lower value to effectively find a suboptimal solution. However, as iterations progress, the smaller $r$ becomes insufficient to achieve optimal results. Consequently, the algorithm dynamically adjusts $r$, incrementally restoring some of the previously reduced values to compensate for the excessive reduction. This adjustment gradually increases $r$ and eventually stabilizes in the later stages, consistent with the description in Theorem \ref{Thm:rc}. This process demonstrates the algorithm's effectiveness in adaptively adjusting $r$, thereby achieving a good balance between computational efficiency and decomposition accuracy.
\begin{figure}[htbp]
	\centering
	\includegraphics[width=1\linewidth]{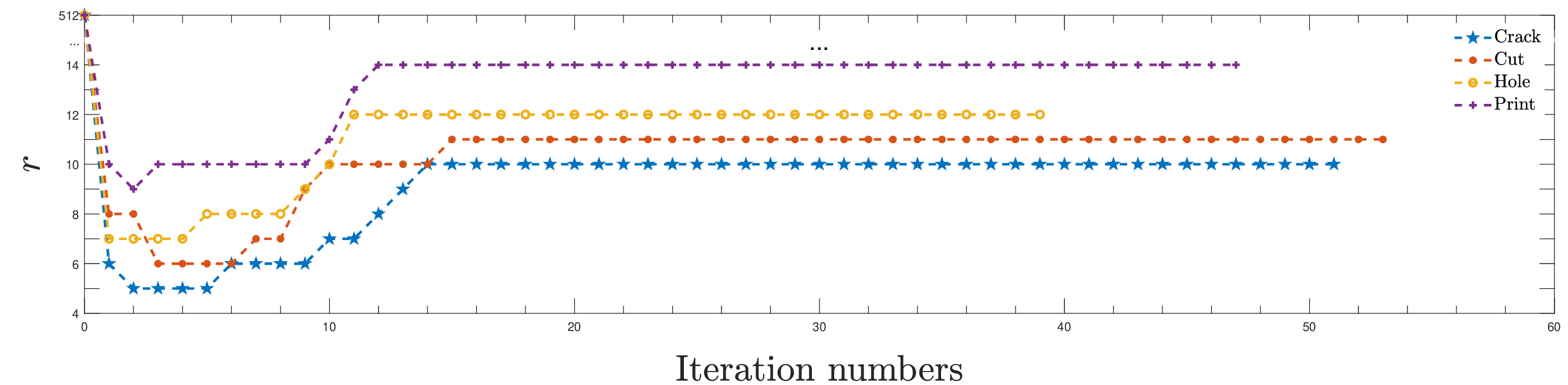}
	\caption{Variation of $r$ values across iterations for Algorithm \ref{Alg:Rank}.}
	\label{fig:r_adj}
\end{figure}

Moreover, Figure \ref{fig:rv} presents a detailed evaluation of AUC, computational time, and $r$ metrics for Algorithm \ref{Alg:SPA}, Algorithm \ref{Alg:Rank}, and Algorithm \ref{Alg:Rank} without validation mechanism (i.e., excluding steps 16-20), across a range of $\lambda_4$ values. From the detection results, the AUC value produced by Algorithm \ref{Alg:Rank} without validation mechanism exhibits significant sensitivity to the parameter $\lambda_4$. This sensitivity arises because an excessively large $\lambda_4$ results in an overly small $r$, leading to suboptimal performance. This issue is prevalent among most algorithms that employ rank reduction operations. In contrast, our proposed Algorithm \ref{Alg:Rank} incorporates a validation mechanism within its rank reduction strategy. This mechanism evaluates the reasonableness of the reduction in $r$. By leveraging this validation mechanism, it mitigates the issue of an excessively large $\lambda_4$ causing $r$ to become too small, thereby achieving more stable results and reducing sensitivity to $\lambda_4$. Algorithm \ref{Alg:SPA}, which lacks a rank reduction operation, also demonstrates minimal sensitivity to the parameter $\lambda_4$. Regarding time consumption, Algorithm \ref{Alg:SPA}, due to the absence of a rank reduction operation, maintains $r$ at a relatively large value of $512$, resulting in substantial time consumption. Conversely, both Algorithm \ref{Alg:Rank} and Algorithm \ref{Alg:Rank} without validation mechanism incorporate a rank reduction operation, which keeps $r$ at a smaller value, thereby significantly reducing time consumption compared to Algorithm \ref{Alg:SPA}. In summary, our proposed Algorithm \ref{Alg:Rank}, which integrates a validation mechanism within its rank reduction strategy, not only achieves high AUC values and exhibits reduced sensitivity to the parameter $\lambda_4$, but also significantly lowers time consumption.
\begin{figure}[htbp]
	\centering
	\begin{subfigure}[b]{1\linewidth}
		\begin{subfigure}[b]{0.33\linewidth}
			\centering
			\includegraphics[width=\linewidth]{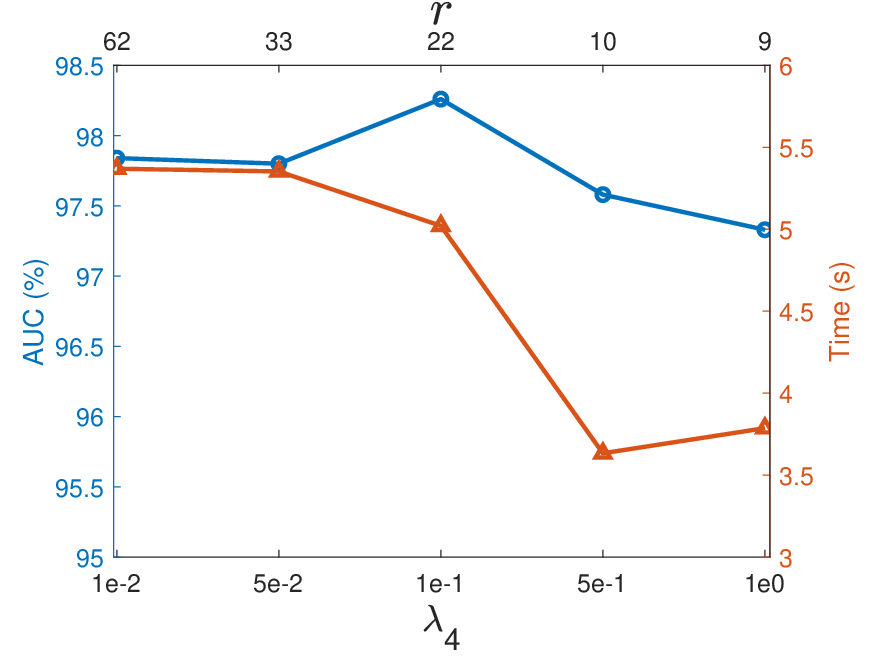}
			\caption{Algorithm \ref{Alg:Rank}}
		\end{subfigure}   	
		\begin{subfigure}[b]{0.33\linewidth}
			\centering
			\includegraphics[width=\linewidth]{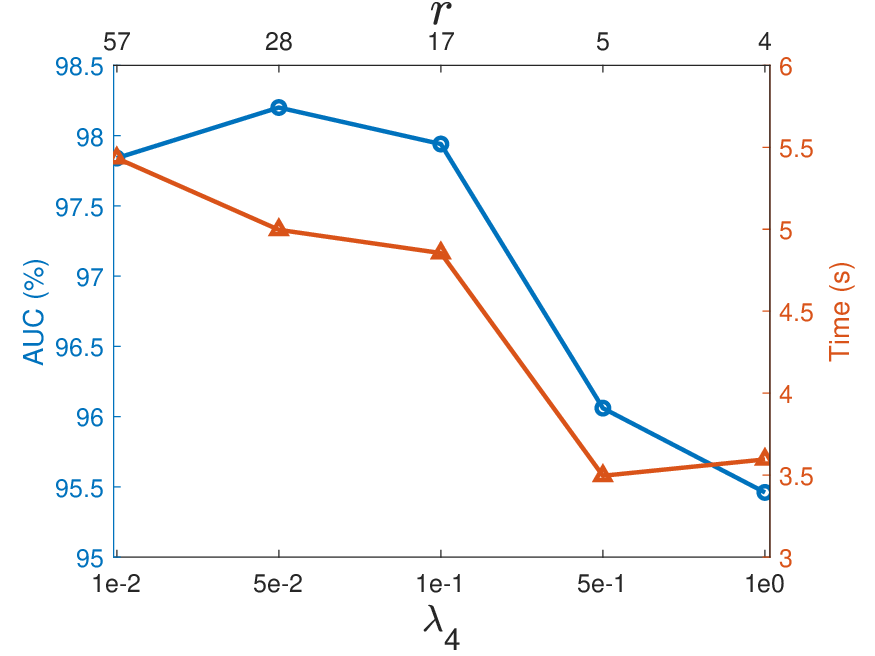}
			\caption{Algorithm \ref{Alg:Rank} without validation mechanism}
		\end{subfigure}
		\begin{subfigure}[b]{0.33\linewidth}
			\centering
			\includegraphics[width=\linewidth]{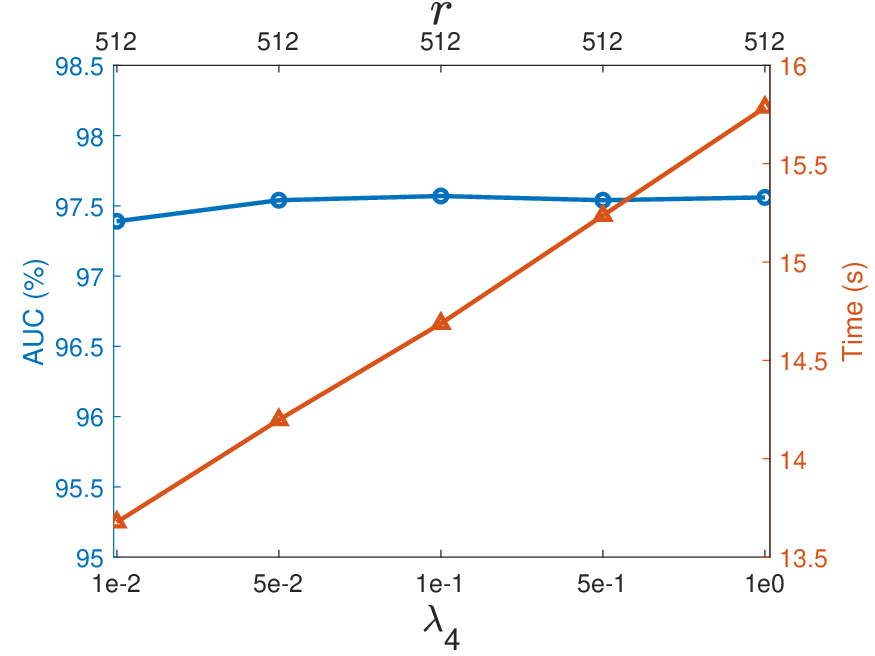}
			\caption{Algorithm \ref{Alg:SPA}}
		\end{subfigure}
	\end{subfigure}
	\vfill
	\caption{Performance analysis of Algorithm \ref{Alg:SPA}, Algorithm \ref{Alg:Rank}, and Algorithm \ref{Alg:Rank} without validation mechanism on AUC values (\%), running time (s), and $r$ metrics across different various values of $\lambda_4$ on the Crack dataset.}
	\label{fig:rv}
\end{figure}

\subsubsection{Effects of spectral-spatial fusion}
Figure \ref{fig:fusion} demonstrates that the anomalies in $T_1$ and $T_2$ exhibit complementary characteristics, suggesting that their fusion can produce a more effective anomaly detection map. To more intuitively illustrate the efficacy of fusing spectral anomaly detection map $T_1$ and spatial anomaly detection map $T_2$, Table \ref{tab:SSF} presents the AUC values for various anomaly detection maps across the ABU dataset. The table indicates that, irrespective of the dataset, the AUC values for the spectral-spatial fusion anomaly detection map $T_1\circ T_2$ consistently surpass those of the spectral anomaly detection map $T_1$ and the spatial anomaly detection map $T_2$. Moreover, the results $T$ derived from the guided image filter further enhance the performance of $T_1\circ T_2$.

\begin{table}[htbp]
	\centering
	\caption{Comparison of AUC values (\%) of different anomaly detection map for ABU dataset.}
	\begin{tabular}{ccccc}
		\toprule
		& Airport1 & Airport2 & Urban1 & Urban2 \\
		\midrule
		$T_1$            & 42.97  & 92.83  & 99.06  & 94.93  \\
		\midrule
		$T_2$            & 96.82  & 99.91  & 99.63  & 94.30  \\
		\midrule
		$T_1\circ T_2$   & 97.58  & 99.95  & 99.92  & 98.29  \\
		\midrule
		$T$              & 99.73  & 99.95  & 99.93  & 98.30  \\
		\bottomrule
	\end{tabular}%
	\label{tab:SSF}%
\end{table}%

\subsubsection{Parameter analysis}

For the proposed model LTD, three main types of parameters, the proximal parameters $ \rho_i, i\in[6] $, the parameter $b$, and the regularization parameters $ \lambda_i, i\in[6] $, affect the performance. Following \cite{LZJ22}, we set the proximal parameters $ \rho_i=1e\text{-}2, i\in[6] $ in all experiments.

The parameter $b$ influences the dimensions of tensor $\mC$ and matrix $B$. We explored values of $b$ ranging from 2 to 38, with increments of 4. Figure \ref{fig:CM} illustrates the AUC values and corresponding running times of LTD for these different values of $b$. As shown in the figure, the AUC values reach their optimum at very small values of $b$. As $b$ increases, the AUC values tend to stabilize. Regarding running times, as $b$ increases, the sizes of tensor $\mC$ and matrix $B$ also increase, resulting in a gradual increase in running times in most cases. Therefore, in our experiments, we set $b$ within the range of $[2,6]$ for all cases.
\begin{figure}[!htbp]
	\centering
	\begin{subfigure}[b]{1\linewidth}
		\begin{subfigure}[b]{0.245\linewidth}
			\centering
			\includegraphics[width=\linewidth]{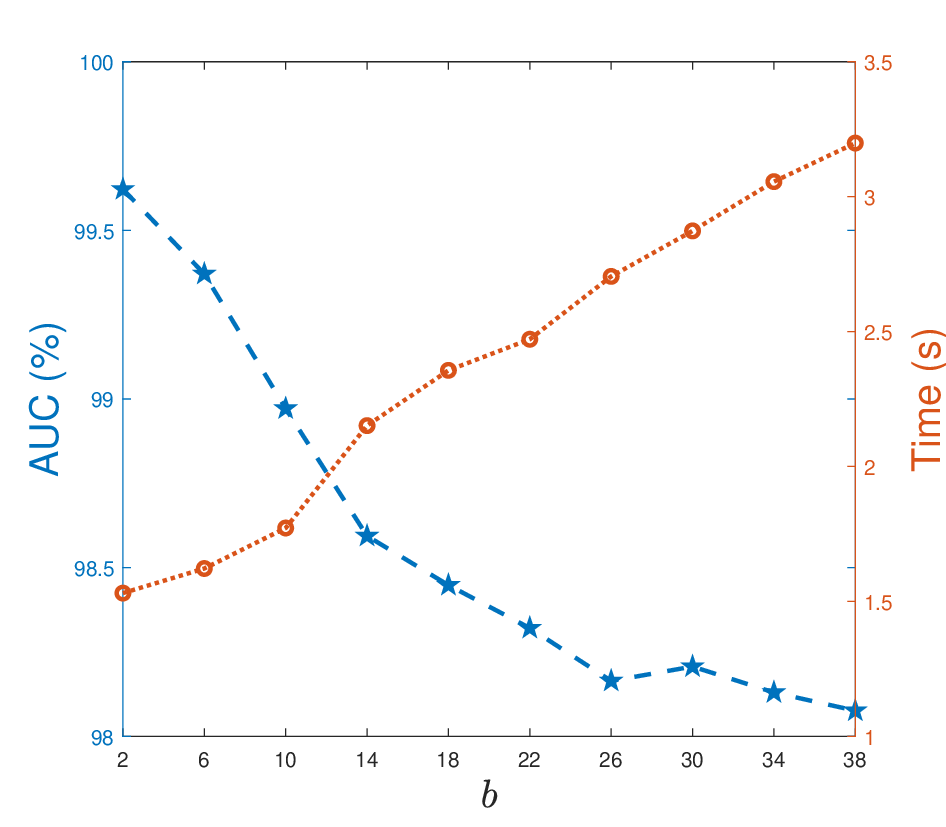}
			\caption{Airport1}
		\end{subfigure}   	
		\begin{subfigure}[b]{0.245\linewidth}
			\centering
			\includegraphics[width=\linewidth]{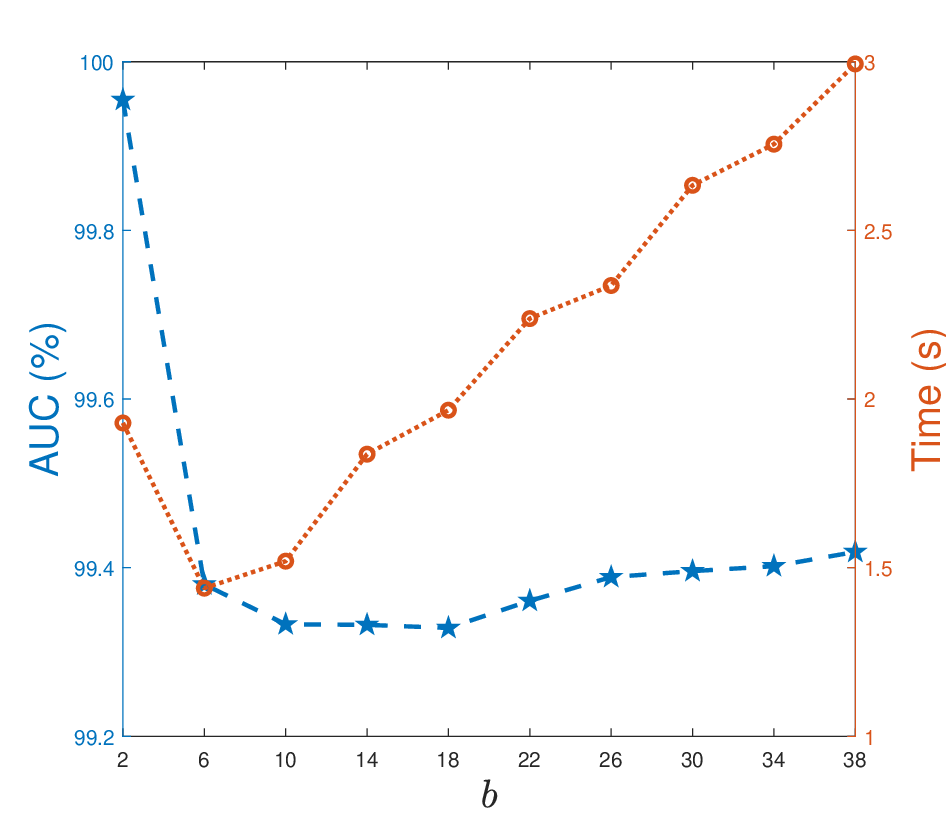}
			\caption{Airport2}
		\end{subfigure}
		\begin{subfigure}[b]{0.245\linewidth}
			\centering
			\includegraphics[width=\linewidth]{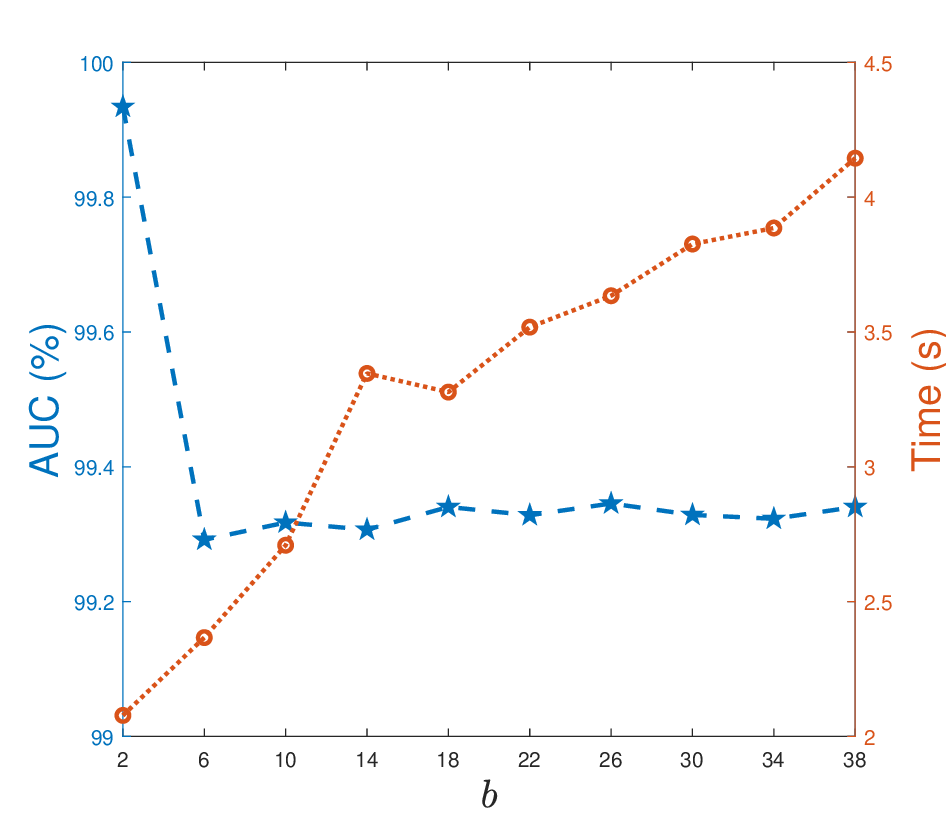}
			\caption{Urban1}
		\end{subfigure}
		\begin{subfigure}[b]{0.245\linewidth}
			\centering
			\includegraphics[width=\linewidth]{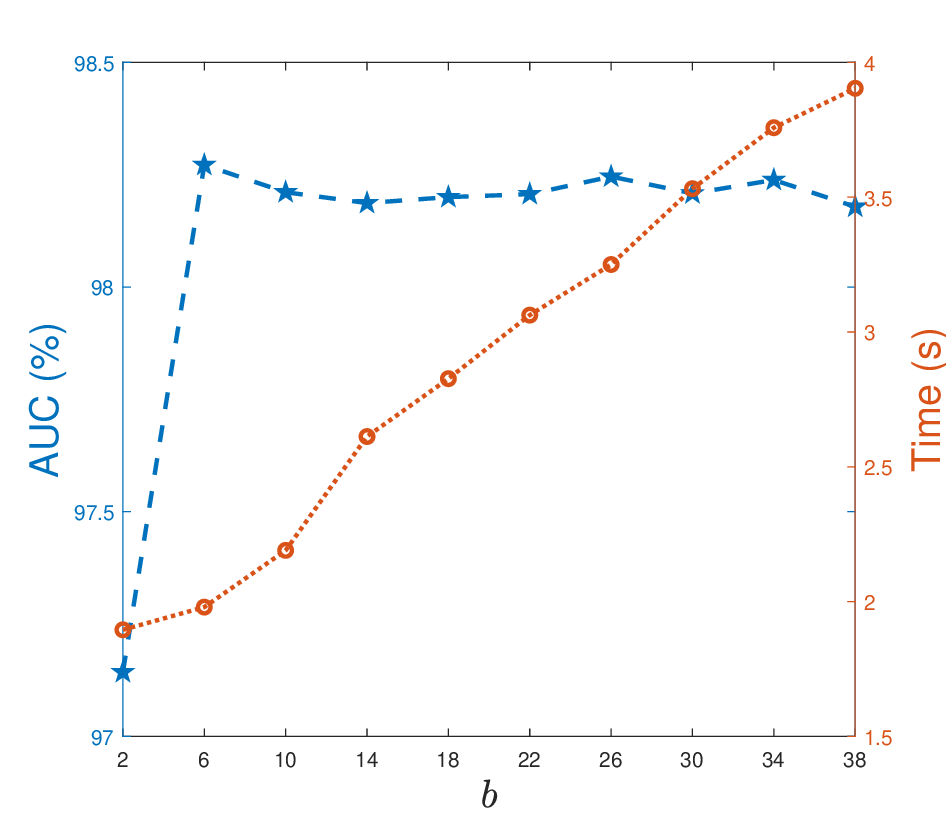}
			\caption{Urban2}
		\end{subfigure}
	\end{subfigure}
	\vfill
	\caption{AUC values (\%) and corresponding running time (s)  of LTD with different $b$ for ABU dataset.}
	\label{fig:CM}
\end{figure}

From Figure \ref{fig:Lambda}, we observe that LTD is not sensitive to the parameter $\lambda_1$ when it is less than or equal to 1. Its primary function is to prevent $B$ from becoming too large; therefore, we set $\lambda_1 = 1e\text{-}2$ in our experiments. For $\lambda_2$, we set it to 5, and for $\lambda_5$, we set it to $1e\text{-}1$, as these values yielded optimal results across all datasets.
Regarding $\lambda_3$, we select the best values from $\{1e\text{-}2, 1e\text{-}1, 5e\text{-}1, 1\}$ based on the AUC values for different datasets, and then fine-tune them to achieve the optimal results. With respect to $\lambda_4$, we found that LTD's performance is not significantly affected when it is set to $1e\text{-}1$ or higher. Therefore, we set $\lambda_4 = 5e\text{-}1$ in our experiments. For the ABU dataset, we observed that the optimal AUC values for $\lambda_3$ and $\lambda_6$ exhibit a multiplicative relationship, with $\lambda_6$ being $\lambda_3/10$. Therefore, we set $\lambda_6 = \lambda_3/10$. Similarly, for the MVTec dataset, we set $\lambda_6 = \lambda_3/100$ to achieve the optimal AUC values.

\begin{figure}[htbp]
	\centering
	\begin{subfigure}[b]{1\linewidth}
		\begin{subfigure}[b]{0.322\linewidth}
			\centering
			\includegraphics[width=\linewidth]{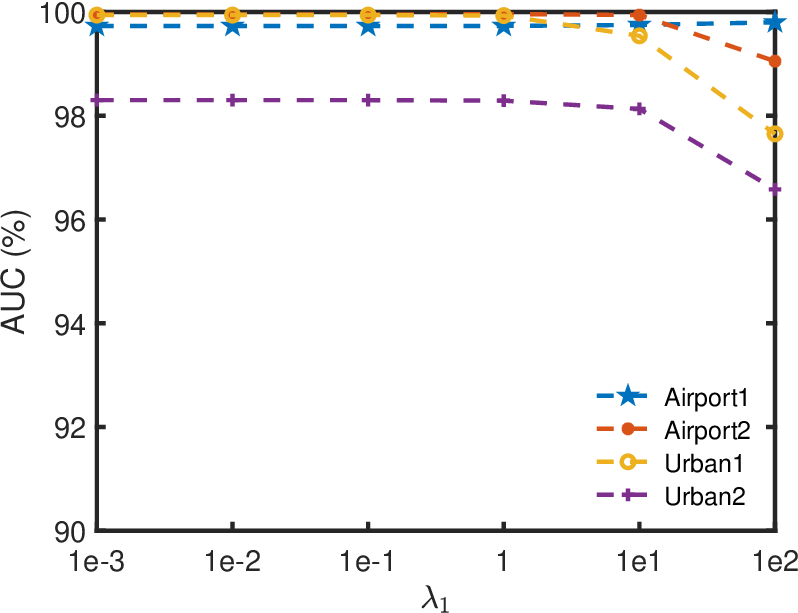}
		\end{subfigure}   	
		\begin{subfigure}[b]{0.322\linewidth}
			\centering
			\includegraphics[width=\linewidth]{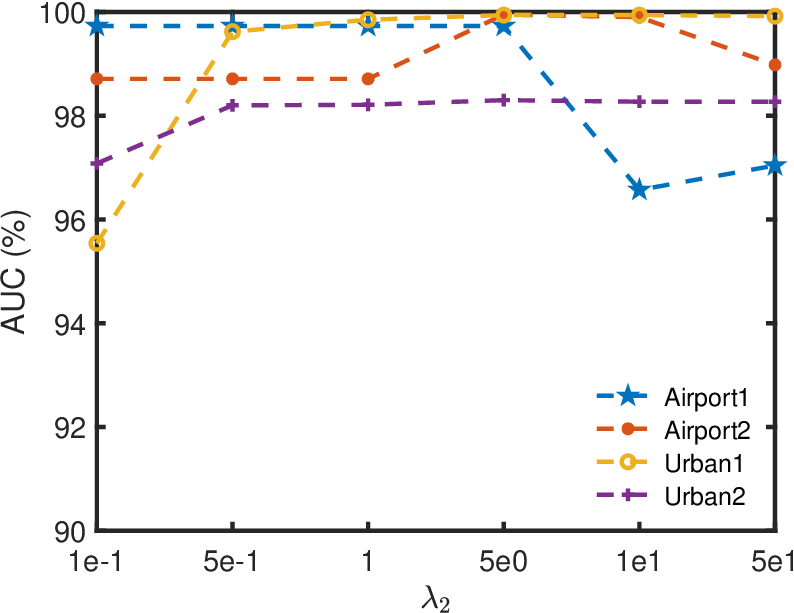}
		\end{subfigure}
		\begin{subfigure}[b]{0.322\linewidth}
			\centering
			\includegraphics[width=\linewidth]{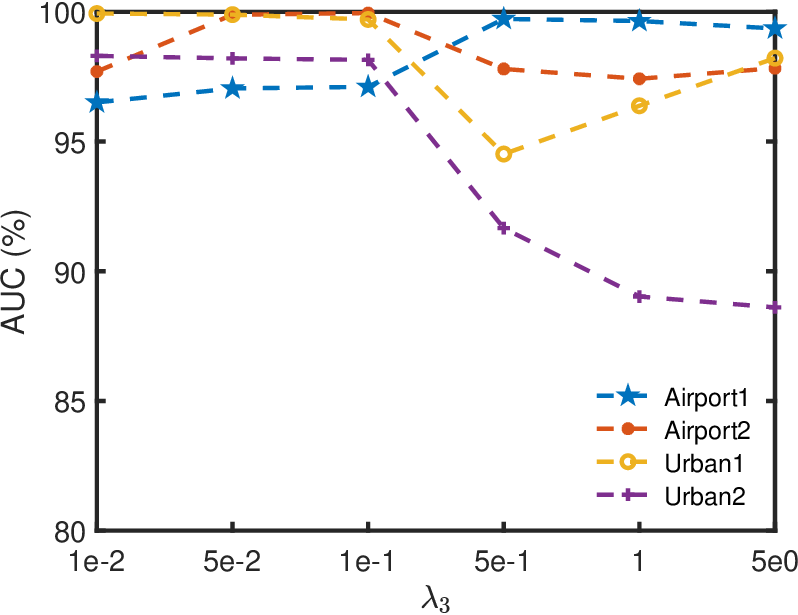}
		\end{subfigure}
		
		\begin{subfigure}[b]{0.322\linewidth}
			\centering
			\includegraphics[width=\linewidth]{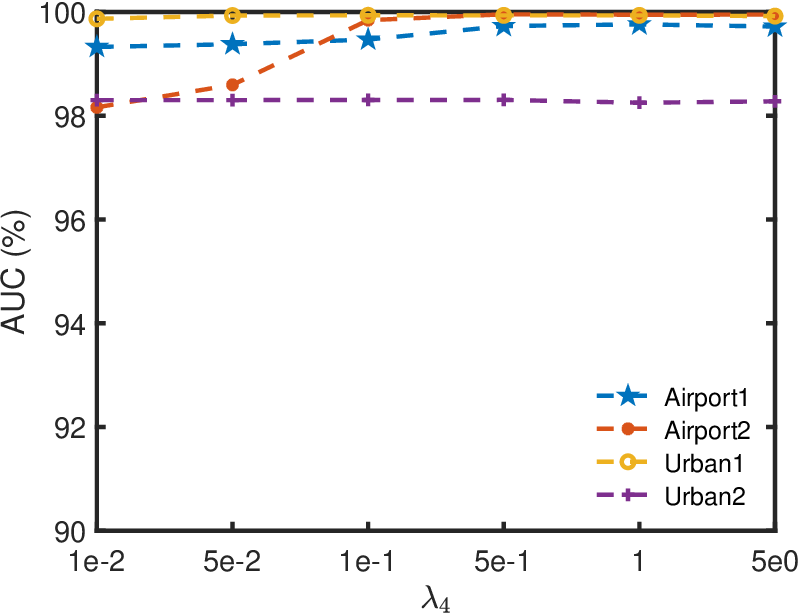}
		\end{subfigure}
		\begin{subfigure}[b]{0.322\linewidth}
			\centering
			\includegraphics[width=\linewidth]{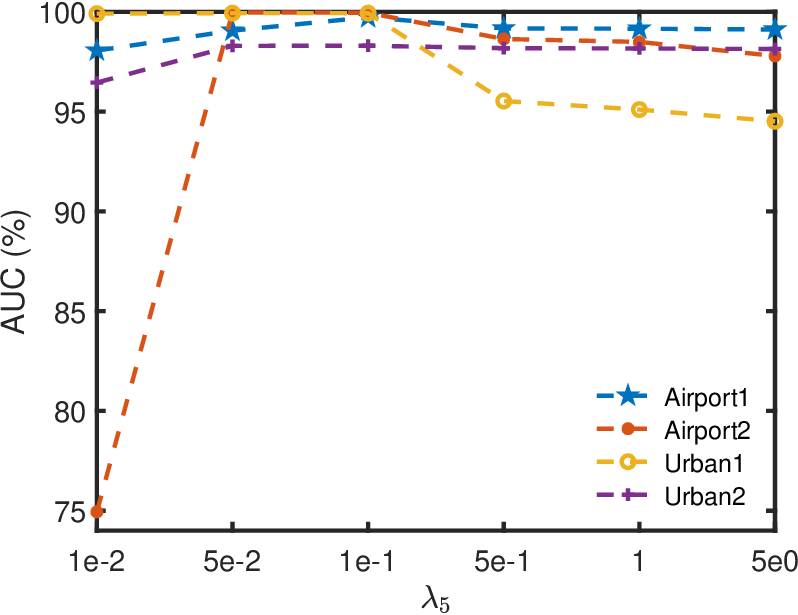}
		\end{subfigure}
		\begin{subfigure}[b]{0.322\linewidth}
			\centering
			\includegraphics[width=\linewidth]{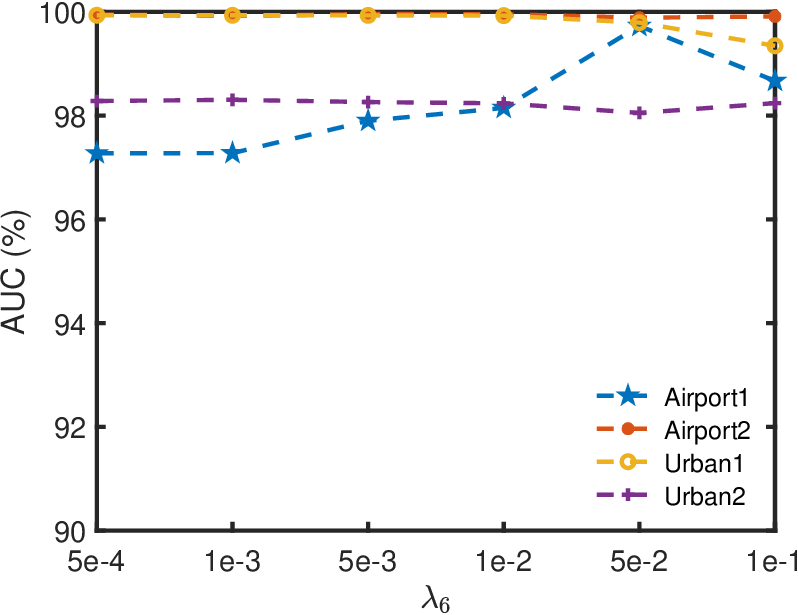}
		\end{subfigure}
	\end{subfigure}
	\vfill
	\caption{AUC values (\%) of LTD with different $\lambda_i~(i\in[6])$ for ABU dataset.}
	\label{fig:Lambda}
\end{figure}


\subsection{Detection performance}

\subsubsection{ABU dataset}
For the ABU dataset, the reference map and detection results in visual are shown in Figures. \ref{fig:HSIs}-\ref{fig:2D}.
The proposed method, LTD, effectively detects all anomalies while including fewer background pixels compared to other methods. Specifically, although the anomaly detection maps produced by RX, RPCA, LRASR, and TPCA contain few background pixels, they fail to detect many anomalies or detect them weakly. Conversely, the anomaly detection maps generated by PTA, TLRSR, RGAE, and GAED can identify most anomalies but include a significant amount of background pixels, such as the red area in the lower left corner of Airport1 and the dashed lines at the top of Airport2. In contrast, the anomaly detection map produced by our proposed LTD method successfully detects nearly all anomalies while containing minimal background pixels.
\begin{figure}[htbp]
	\centering
	\begin{subfigure}[b]{1\linewidth}
		\begin{minipage}{0.245\textwidth}
			\centering
			\includegraphics[width=0.485\textwidth]{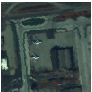}\vspace{0pt}
			\includegraphics[width=0.485\textwidth]{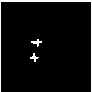}
			\caption{Airport1}
		\end{minipage}
		\begin{minipage}{0.245\textwidth}
			\centering
			\includegraphics[width=0.485\textwidth]{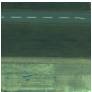}\vspace{0pt}
			\includegraphics[width=0.485\textwidth]{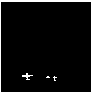}
			\caption{Airport2}
		\end{minipage}
		\begin{minipage}{0.245\textwidth}
			\centering
			\includegraphics[width=0.485\textwidth]{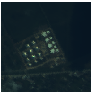}\vspace{0pt}
			\includegraphics[width=0.485\textwidth]{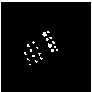}
			\caption{Urban1}
		\end{minipage}
		\begin{minipage}{0.245\textwidth}
			\centering
			\includegraphics[width=0.485\textwidth]{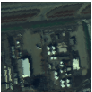}\vspace{0pt}
			\includegraphics[width=0.485\textwidth]{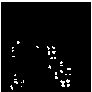}
			\caption{Urban2}
		\end{minipage}
	\end{subfigure}
	\vfill
	\caption{Pseudo-color images and ground-truth maps of ABU dataset.}
	\label{fig:HSIs}
\end{figure}

\begin{figure}[htbp]
	\centering
	\begin{subfigure}[b]{1\linewidth} 
		\begin{subfigure}[b]{1\linewidth}
			\centering
			\includegraphics[width=\linewidth]{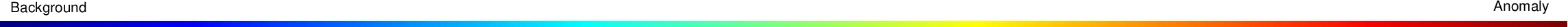}
		\end{subfigure}
		\begin{subfigure}[b]{0.106\linewidth}
			\centering
			\includegraphics[width=\linewidth]{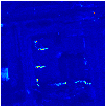}
			\includegraphics[width=\linewidth]{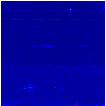}
			\includegraphics[width=\linewidth]{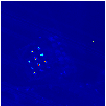}
			\includegraphics[width=\linewidth]{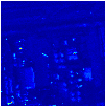}
			\caption*{RX}
		\end{subfigure}
		\begin{subfigure}[b]{0.106\linewidth}
			\centering
			\includegraphics[width=\linewidth]{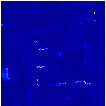}
			\includegraphics[width=\linewidth]{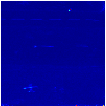}
			\includegraphics[width=\linewidth]{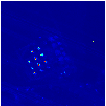}
			\includegraphics[width=\linewidth]{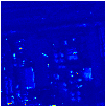}
			\caption*{RPCA}
		\end{subfigure}
		\begin{subfigure}[b]{0.106\linewidth}
			\centering
			\includegraphics[width=\linewidth]{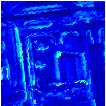}
			\includegraphics[width=\linewidth]{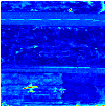}
			\includegraphics[width=\linewidth]{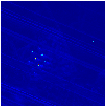}
			\includegraphics[width=\linewidth]{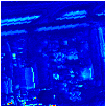}
			\caption*{LRASR}
		\end{subfigure}
		\begin{subfigure}[b]{0.106\linewidth}
			\centering
			\includegraphics[width=\linewidth]{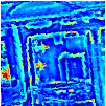}
			\includegraphics[width=\linewidth]{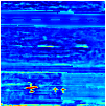}
			\includegraphics[width=\linewidth]{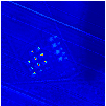}
			\includegraphics[width=\linewidth]{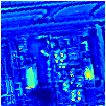}
			\caption*{PTA}
		\end{subfigure}
		\begin{subfigure}[b]{0.106\linewidth}
			\centering
			\includegraphics[width=\linewidth]{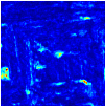}
			\includegraphics[width=\linewidth]{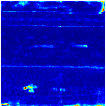}
			\includegraphics[width=\linewidth]{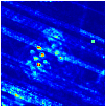}
			\includegraphics[width=\linewidth]{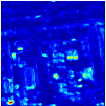}
			\caption*{TPCA}
		\end{subfigure}  	
		\begin{subfigure}[b]{0.106\linewidth}
			\centering
			\includegraphics[width=\linewidth]{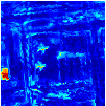}
			\includegraphics[width=\linewidth]{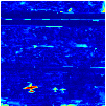}
			\includegraphics[width=\linewidth]{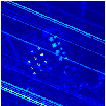}
			\includegraphics[width=\linewidth]{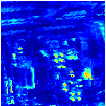}
			\caption*{TLRSR}
		\end{subfigure}
		\begin{subfigure}[b]{0.106\linewidth}
			\centering
			\includegraphics[width=\linewidth]{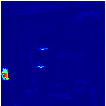}
			\includegraphics[width=\linewidth]{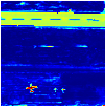}
			\includegraphics[width=\linewidth]{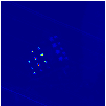}
			\includegraphics[width=\linewidth]{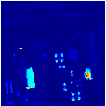}
			\caption*{RGAE}
		\end{subfigure}   	
		\begin{subfigure}[b]{0.106\linewidth}
			\centering
			\includegraphics[width=\linewidth]{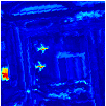}
			\includegraphics[width=\linewidth]{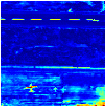}
			\includegraphics[width=\linewidth]{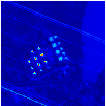}
			\includegraphics[width=\linewidth]{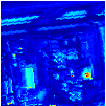}
			\caption*{GAED}
		\end{subfigure}
		\begin{subfigure}[b]{0.106\linewidth}
			\centering
			\includegraphics[width=\linewidth]{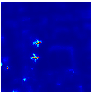}
			\includegraphics[width=\linewidth]{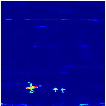}
			\includegraphics[width=\linewidth]{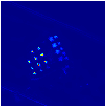}
			\includegraphics[width=\linewidth]{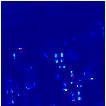}
			\caption*{LTD}
		\end{subfigure}
	\end{subfigure}
	\vfill
	\caption{Target detection results by different methods for ABU dataset.}
	\label{fig:2D}
\end{figure}

Table \ref{tab:AUC_ABU} provides a detailed comparison of the AUC values and running time for each method.
On the AUC metric, the proposed LTD method achieves the best results, consistent with the observations in Figure \ref{fig:2D}. Regarding running time, our method is the second fastest. Although RX exhibits shorter running times, its AUC values are unsatisfactory in most scenarios. In contrast, other methods have longer running times compared to ours, with the deep learning based method RGAE taking approximately 50 seconds, which is about 25 times the running time of our proposed method. Moreover, the AUC values of these methods are not as high as those achieved by the LTD method.


Figure \ref{fig:ROC} provides a detailed comparison of the ROC curves for each method. It shows that our proposed LTD method has a higher detection probability than other methods for most false alarm rates. Specifically, for the Airport dataset, our method maintains a higher detection probability across all false alarm rates.

\begin{table}[htbp]
	\centering
	\caption{Comparison of AUC values (\%) and running time (s) of different methods for ABU dataset.}\resizebox{\linewidth}{!}{
	\begin{tabular}{ccccccccccc}
		\toprule
		Algorithm & Index & RX    & RPCA  & LRASR & PTA   & TPCA  & TLRSR & RGAE  & GAED  & LTD \\
		\midrule
		\multirow{2}{*}{Airport1} & AUC   & 84.04  & 84.28  & 87.70  & 90.96  & 88.11  & 94.58  & 96.40  & 96.77  & \textbf{99.73 } \\
		& Time & 0.05  & 4.62  & 20.06  & 15.77  & 18.63  & 2.50  & 51.88  & 41.10  & 1.66  \\
		\midrule
		\multirow{2}{*}{Airport2} & AUC   & 95.26  & 96.27  & 97.95  & 99.55  & 95.26  & 99.49  & 93.25  & 96.81  & \textbf{99.95 } \\
		& Time & 0.06  & 1.93  & 19.45  & 14.68  & 17.93  & 2.28  & 50.00  & 39.15  & 1.89  \\
		\midrule
		\multirow{2}{*}{Urban1} & AUC   & 99.46  & 99.57  & 87.03  & 97.70  & 95.45  & 95.42  & 99.73  & 99.59  & \textbf{99.93 } \\
		& Time & 0.05  & 3.98  & 22.87  & 16.69  & 17.98  & 2.40  & 53.78  & 39.84  & 2.13  \\
		\midrule
		\multirow{2}{*}{Urban2} & AUC   & 96.92  & 96.58  & 89.13  & 82.58  & 93.00  & 97.11  & 94.90  & 90.18  & \textbf{98.30 } \\
		& Time & 0.06  & 3.85  & 20.30  & 15.99  & 19.32  & 2.49  & 42.26  & 37.99  & 1.89  \\
		\bottomrule
	\end{tabular}}%
	\label{tab:AUC_ABU}%
\end{table}%

\begin{figure}[htbp]
	\centering
	\begin{subfigure}[b]{1\linewidth}
		\begin{subfigure}[b]{1\linewidth}
			\centering
			\includegraphics[width=0.95\linewidth]{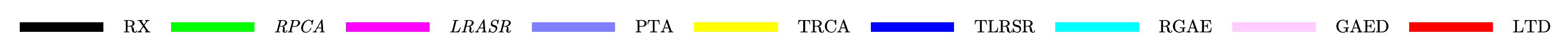}
		\end{subfigure} 
		\begin{subfigure}[b]{0.245\linewidth}
			\centering
			\includegraphics[width=\linewidth]{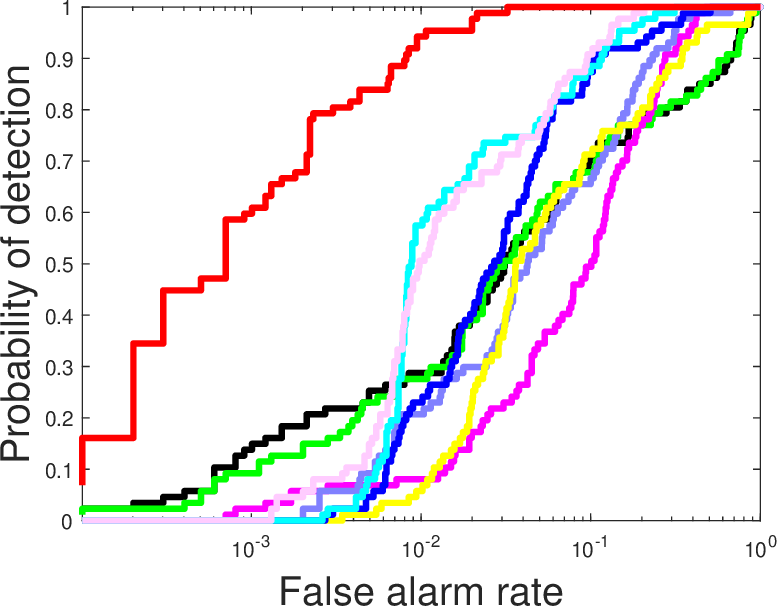}
			\caption{Airport1}
		\end{subfigure}   	
		\begin{subfigure}[b]{0.245\linewidth}
			\centering
			\includegraphics[width=\linewidth]{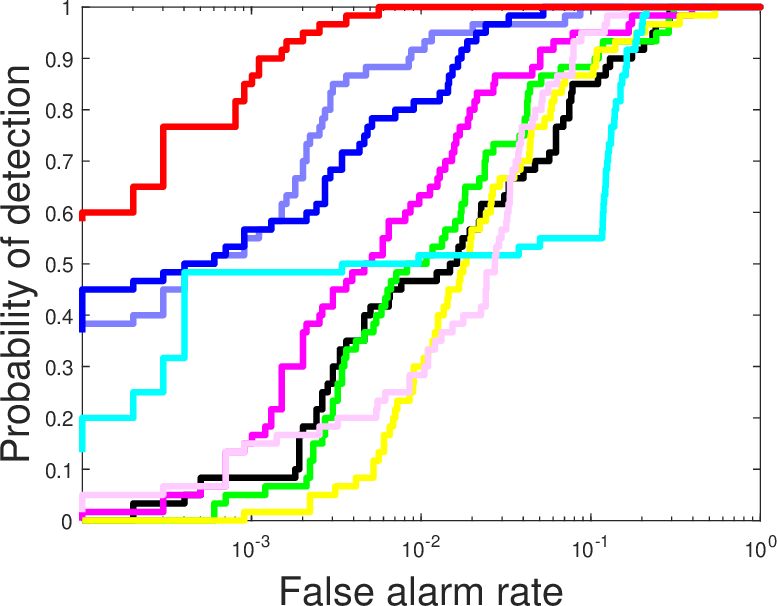}
			\caption{Airport2}
		\end{subfigure}
		\begin{subfigure}[b]{0.245\linewidth}
			\centering
			\includegraphics[width=\linewidth]{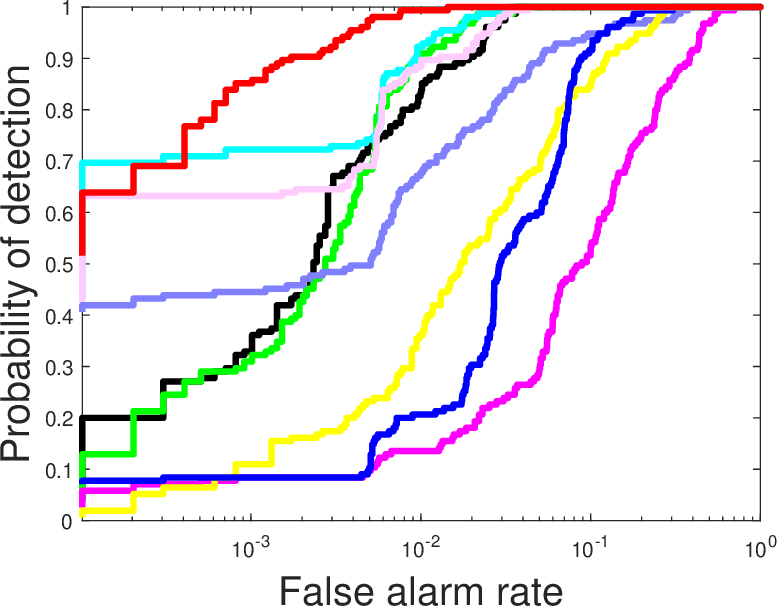}
			\caption{Urban1}
		\end{subfigure}
		\begin{subfigure}[b]{0.245\linewidth}
			\centering
			\includegraphics[width=\linewidth]{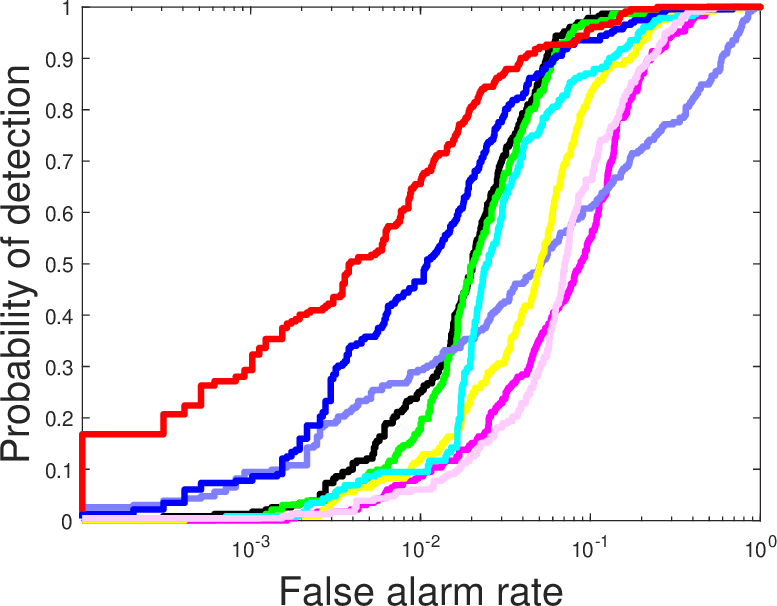}
			\caption{Urban2}
		\end{subfigure}
	\end{subfigure}
	\vfill
	\caption{ROC curves obtained by different methods for ABU dataset.}
	\label{fig:ROC}
\end{figure}

To further evaluate the performance of LTD, we utilize normalized background-anomaly separation maps. A larger distance between the anomaly and background boxes indicates superior separation performance, while a shorter background box reflects more effective suppression of background information. As illustrated in Figure \ref{fig:AB}, the LTD method consistently demonstrates both shorter background boxes and greater separation between anomaly and background boxes across all scenarios. In contrast, some comparative algorithms only exhibit shorter background boxes for specific datasets. For instance, the RX algorithm achieves shorter background boxes solely in the Urban1 dataset but not in others. Similarly, certain algorithms achieve greater separation between anomaly and background boxes only in select datasets. For example, the PTA algorithm shows greater separation exclusively in the Airport2 dataset, while in other datasets, the anomaly and background boxes remain closely spaced.

\begin{figure}[htbp]
	\centering
	\begin{minipage}{0.9\textwidth}
		\begin{subfigure}[b]{0.245\linewidth}
			\centering
			\includegraphics[width=\linewidth]{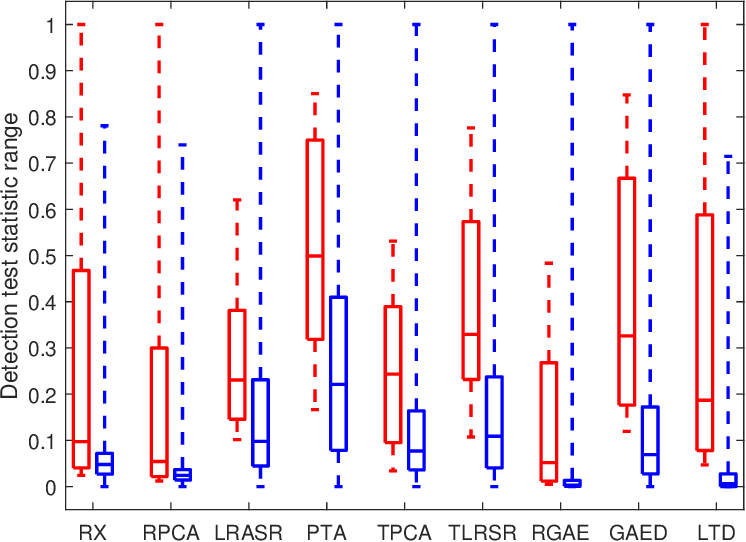}
			\caption{Airport1}
		\end{subfigure}   	
		\begin{subfigure}[b]{0.245\linewidth}
			\centering
			\includegraphics[width=\linewidth]{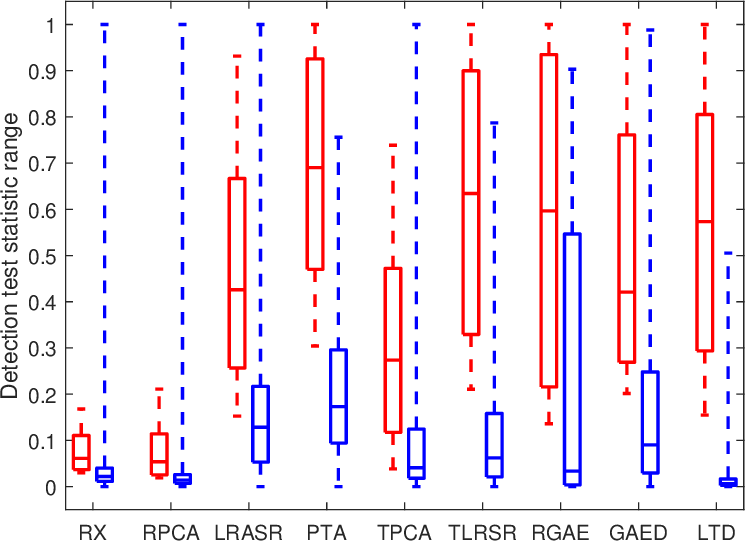}
			\caption{Airport2}
		\end{subfigure}
		\begin{subfigure}[b]{0.245\linewidth}
			\centering
			\includegraphics[width=\linewidth]{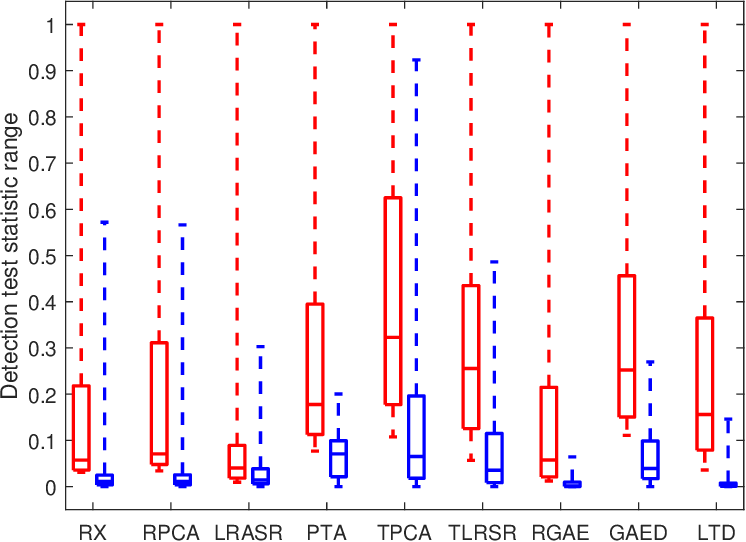}
			\caption{Urban1}
		\end{subfigure}
		\begin{subfigure}[b]{0.245\linewidth}
			\centering
			\includegraphics[width=\linewidth]{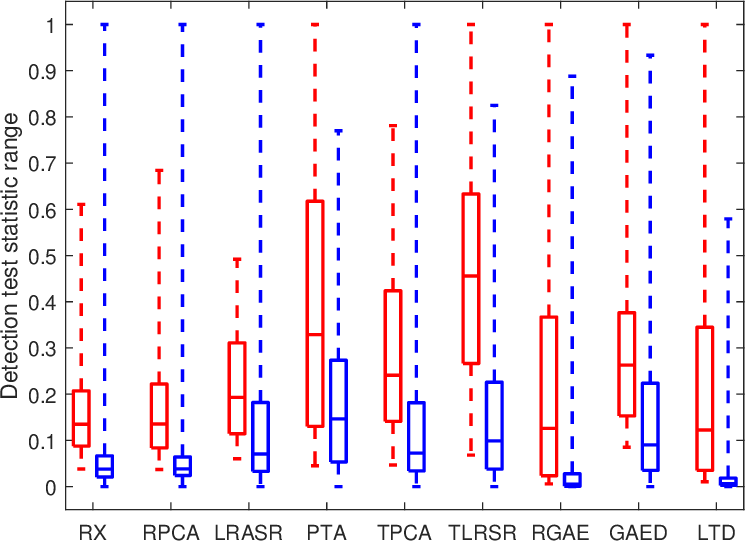}
			\caption{Urban2}
		\end{subfigure}
	\end{minipage}\hfill
	\begin{minipage}{0.1\textwidth}
		\centering
		\includegraphics[width=\linewidth]{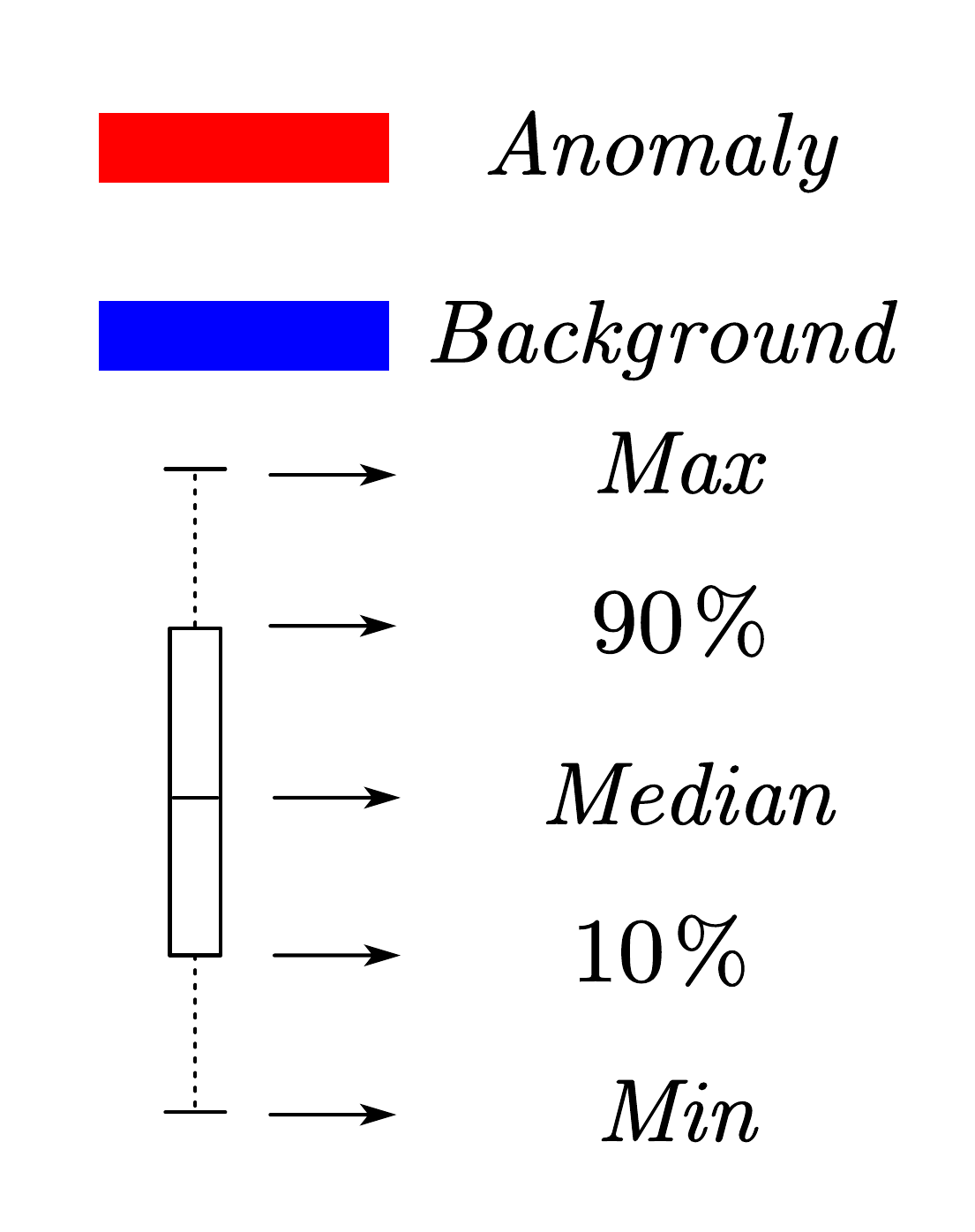}
	\end{minipage}
	\vfill
	\caption{Separability maps of different methods for ABU dataset.}
	\label{fig:AB}
\end{figure}

\subsubsection{MVTec dataset}

For the MVTec dataset, the detection results of the compared methods are illustrated in Figures \ref{fig:MVTec} and \ref{fig:2D:MVTec}. The corresponding AUC values and running time are provided in Table \ref{tab:MVTec}.
One can see that the detection maps produced by the proposed LTD method closely resemble the reference maps, outperforming other methods. Notably, the detection map contains less background noise compared to the RX method, demonstrating the effectiveness of the proposed LTD method in suppressing background.
In addition, as shown in Table \ref{tab:MVTec}, the AUC values of our proposed LTD method show significant improvements over other methods. Specifically, the AUC values are higher by 17.90\%, 9.03\%, 3.66\%, and 2.91\% across four datasets compared to the second-highest values. Regarding running time, our method, which employs a rank reduction strategy with validation mechanism, achieves an average running time of approximately $4$ seconds, significantly lower than that of deep learning based methods.

\begin{figure}[htbp]
	\centering
	\begin{subfigure}[b]{1\linewidth}
		\begin{minipage}{0.245\textwidth}
			\centering
			\includegraphics[width=0.485\textwidth]{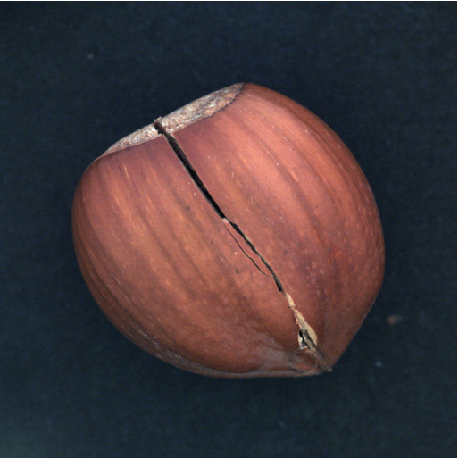}\vspace{0pt}
			\includegraphics[width=0.485\textwidth]{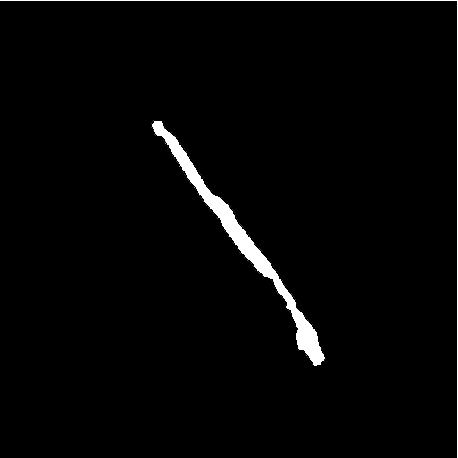}
			\caption{Crack}
		\end{minipage}
		\begin{minipage}{0.245\textwidth}
			\centering
			\includegraphics[width=0.485\textwidth]{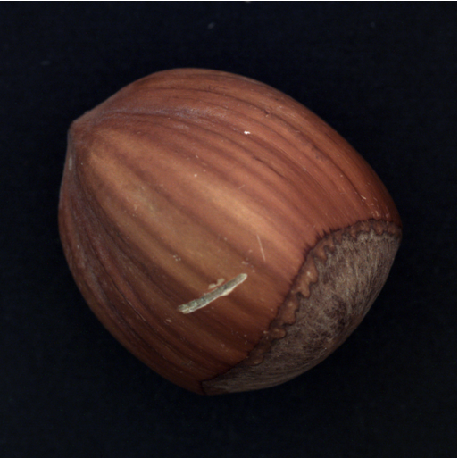}\vspace{0pt}
			\includegraphics[width=0.485\textwidth]{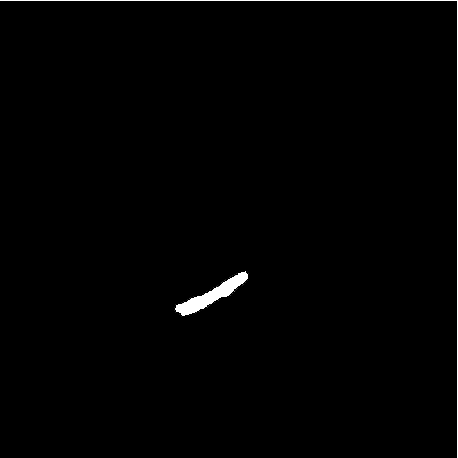}
			\caption{Cut}
		\end{minipage}
		\begin{minipage}{0.245\textwidth}
			\centering
			\includegraphics[width=0.485\textwidth]{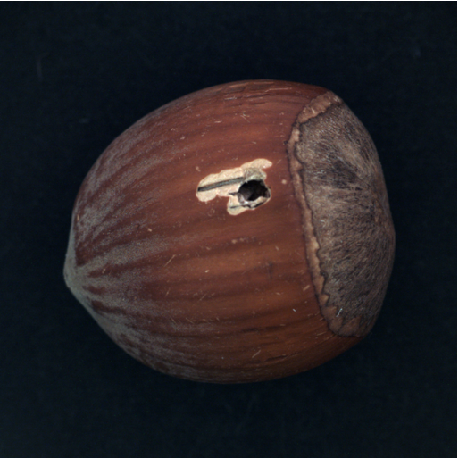}\vspace{0pt}
			\includegraphics[width=0.485\textwidth]{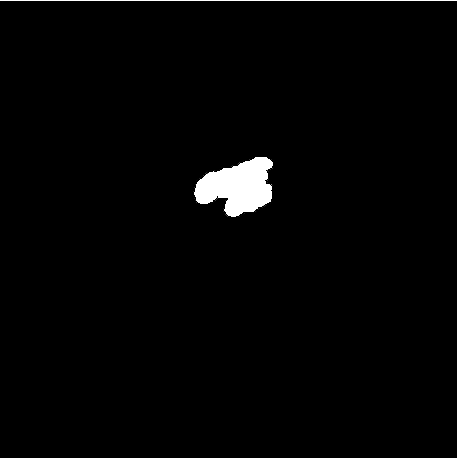}
			\caption{Hole}
		\end{minipage}
		\begin{minipage}{0.245\textwidth}
			\centering
			\includegraphics[width=0.485\textwidth]{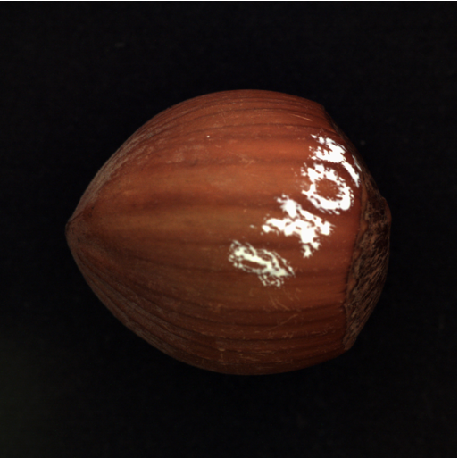}\vspace{0pt}
			\includegraphics[width=0.485\textwidth]{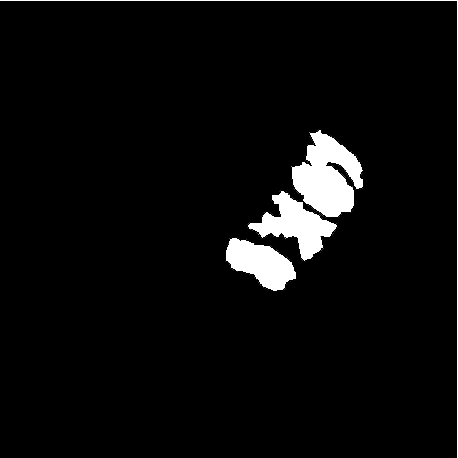}
			\caption{Print}
		\end{minipage}
	\end{subfigure}
	\vfill
	\caption{Color images and ground-truth maps of MVTec dataset.}
	\label{fig:MVTec}
\end{figure}

\begin{figure}[htbp]
	\centering
	\begin{subfigure}[b]{1\linewidth} 
		\begin{subfigure}[b]{1\linewidth}
			\centering
			\includegraphics[width=\linewidth]{AB}
		\end{subfigure}
		\begin{subfigure}[b]{0.106\linewidth}
			\centering
			\includegraphics[width=\linewidth]{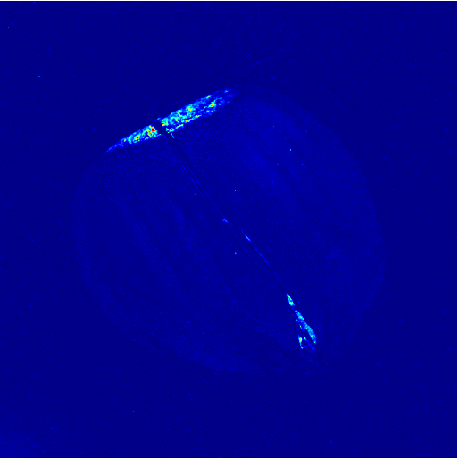}
			\includegraphics[width=\linewidth]{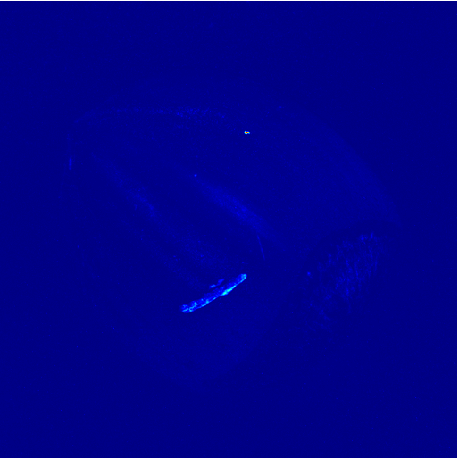}
			\includegraphics[width=\linewidth]{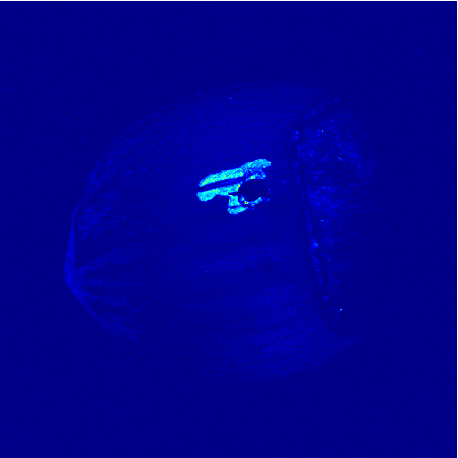}
			\includegraphics[width=\linewidth]{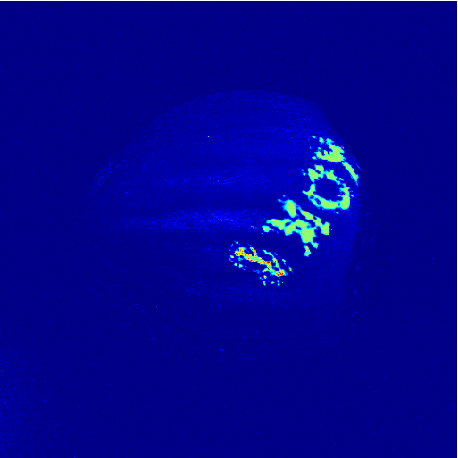}
			\caption*{RX}
		\end{subfigure}
		\begin{subfigure}[b]{0.106\linewidth}
			\centering
			\includegraphics[width=\linewidth]{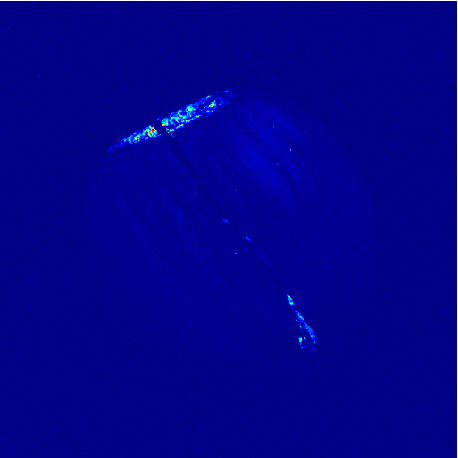}
			\includegraphics[width=\linewidth]{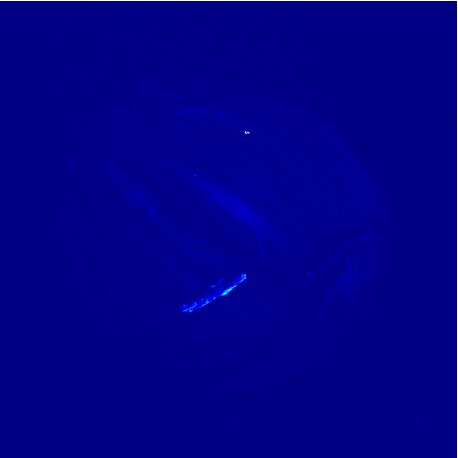}
			\includegraphics[width=\linewidth]{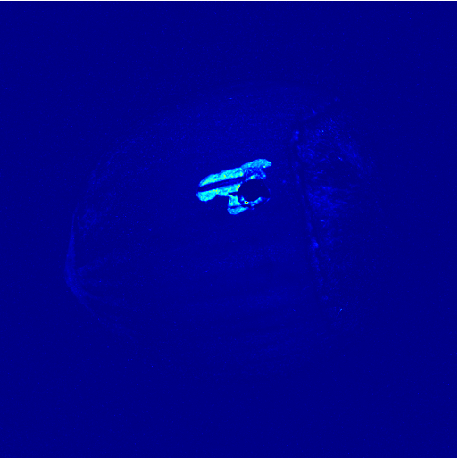}
			\includegraphics[width=\linewidth]{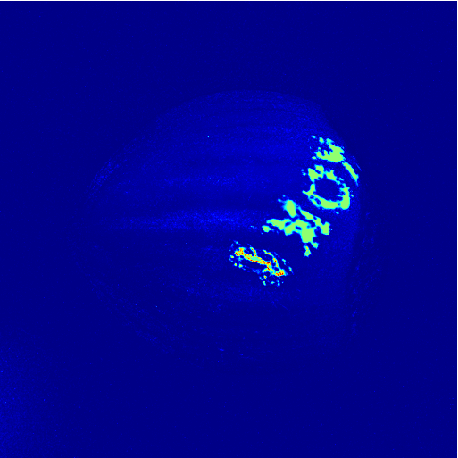}
			\caption*{RPCA}
		\end{subfigure}
		\begin{subfigure}[b]{0.106\linewidth}
			\centering
			\includegraphics[width=\linewidth]{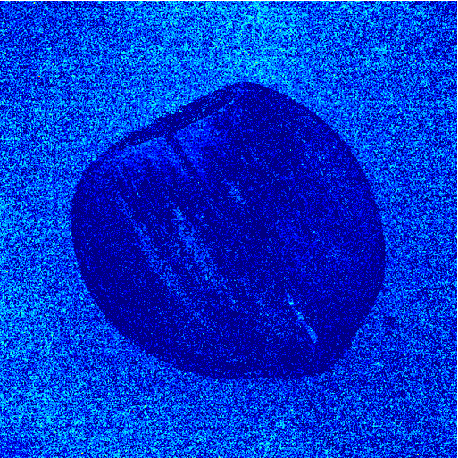}
			\includegraphics[width=\linewidth]{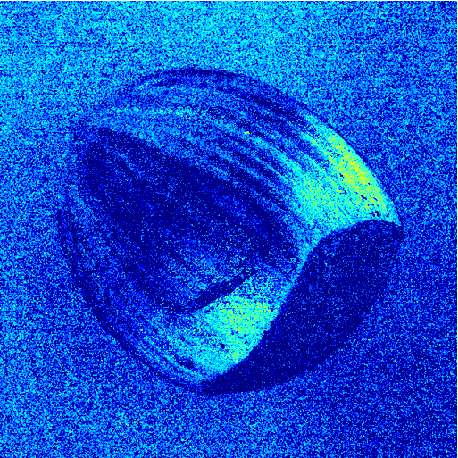}
			\includegraphics[width=\linewidth]{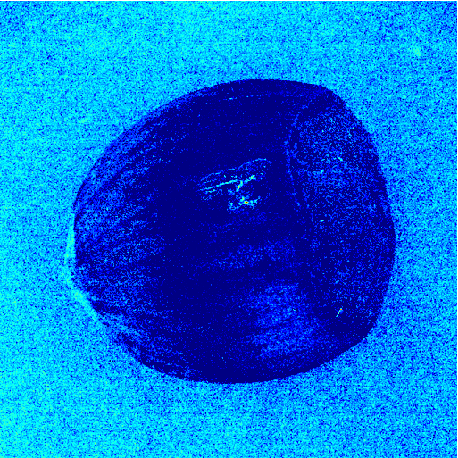}
			\includegraphics[width=\linewidth]{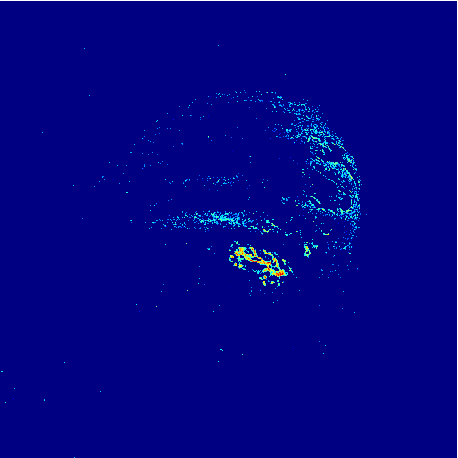}
			\caption*{LRASR}
		\end{subfigure}
		\begin{subfigure}[b]{0.106\linewidth}
			\centering
			\includegraphics[width=\linewidth]{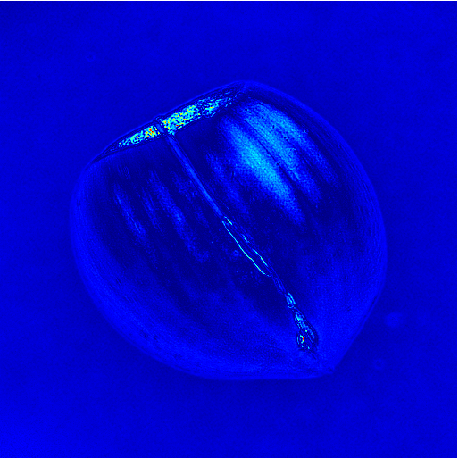}
			\includegraphics[width=\linewidth]{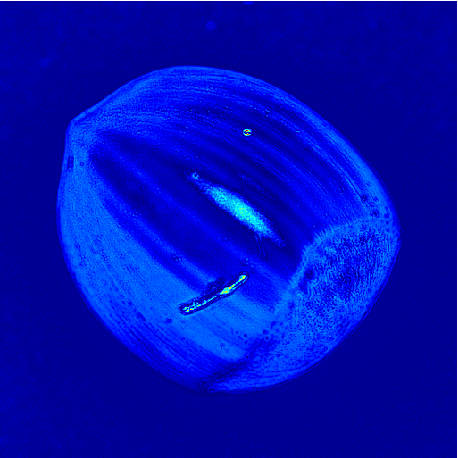}
			\includegraphics[width=\linewidth]{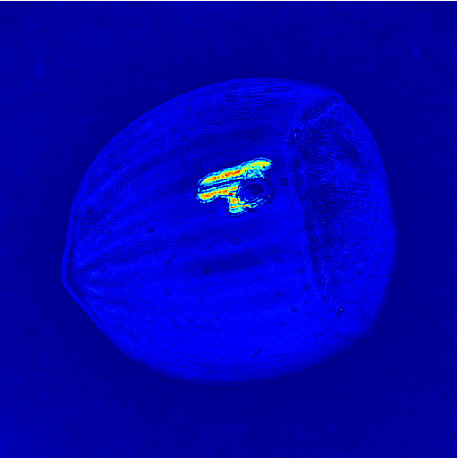}
			\includegraphics[width=\linewidth]{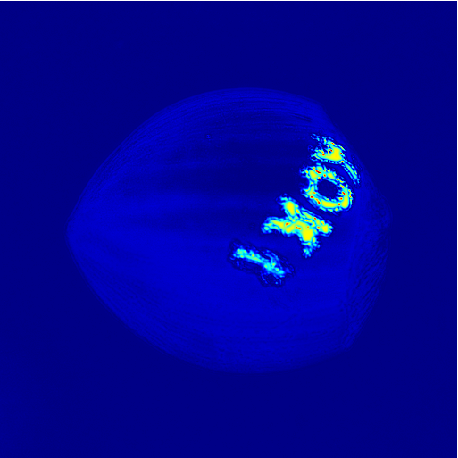}
			\caption*{PTA}
		\end{subfigure}
		\begin{subfigure}[b]{0.106\linewidth}
			\centering
			\includegraphics[width=\linewidth]{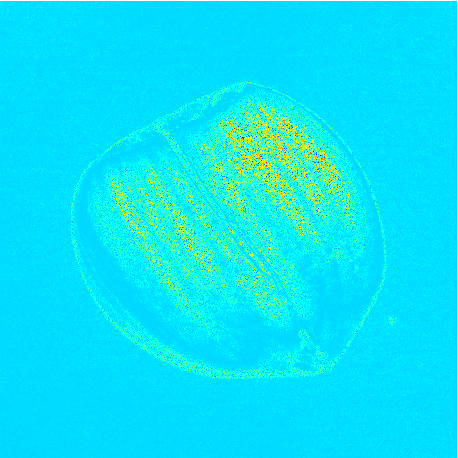}
			\includegraphics[width=\linewidth]{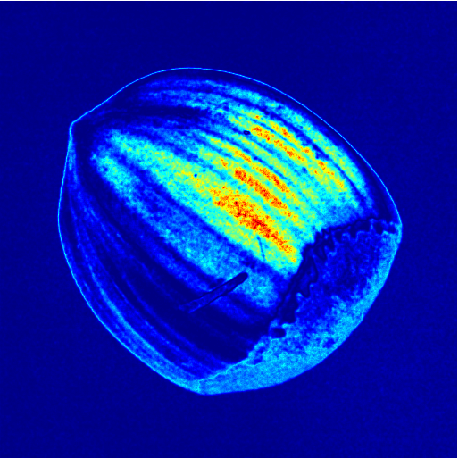}
			\includegraphics[width=\linewidth]{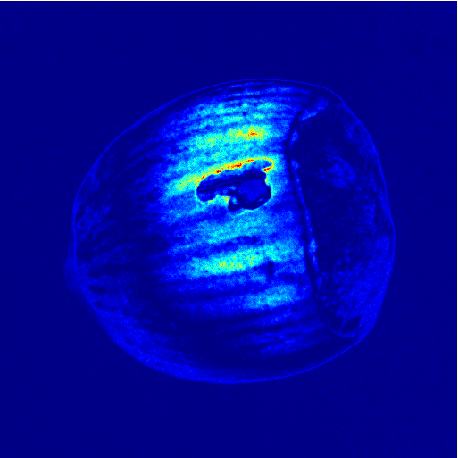}
			\includegraphics[width=\linewidth]{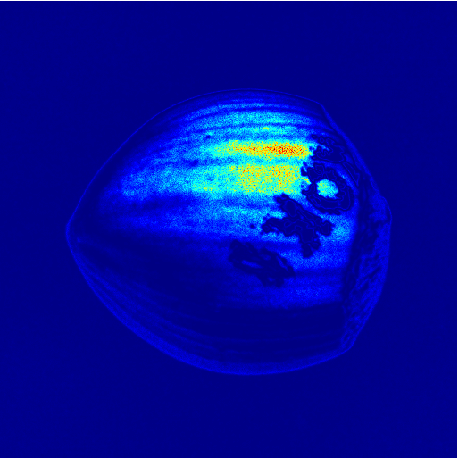}
			\caption*{TPCA}
		\end{subfigure}  	
		\begin{subfigure}[b]{0.106\linewidth}
			\centering
			\includegraphics[width=\linewidth]{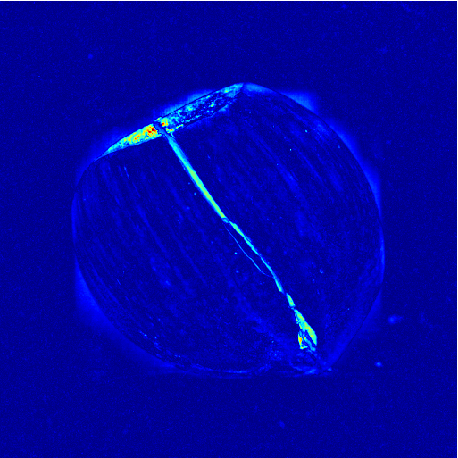}
			\includegraphics[width=\linewidth]{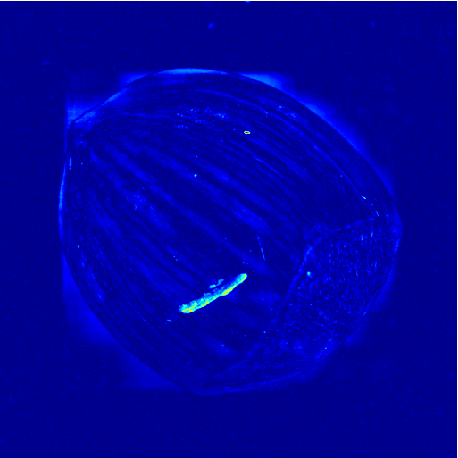}
			\includegraphics[width=\linewidth]{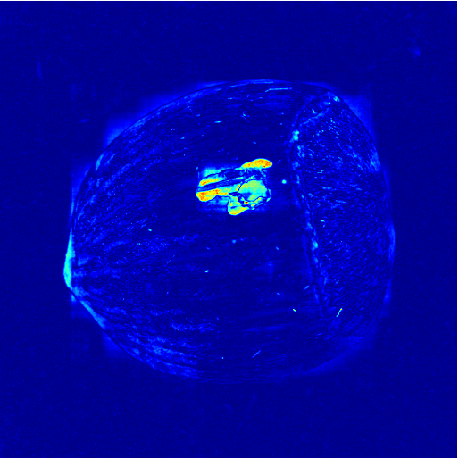}
			\includegraphics[width=\linewidth]{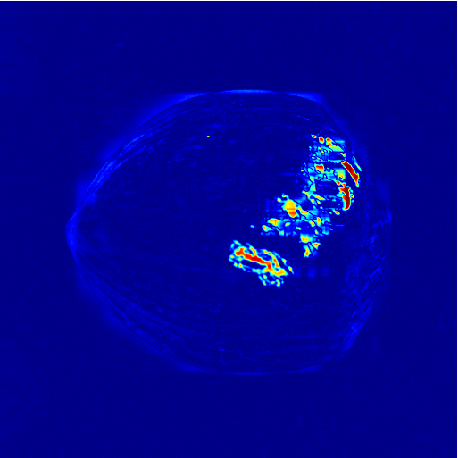}
			\caption*{TLRSR}
		\end{subfigure}
		\begin{subfigure}[b]{0.106\linewidth}
			\centering
			\includegraphics[width=\linewidth]{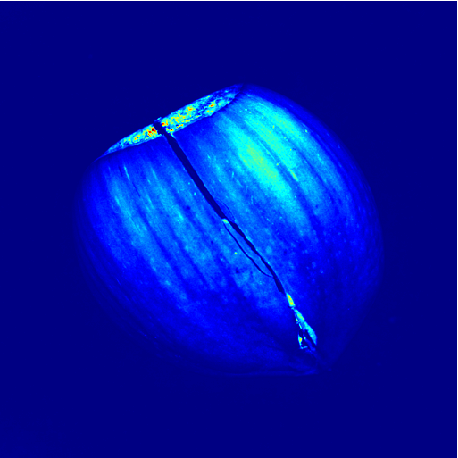}
			\includegraphics[width=\linewidth]{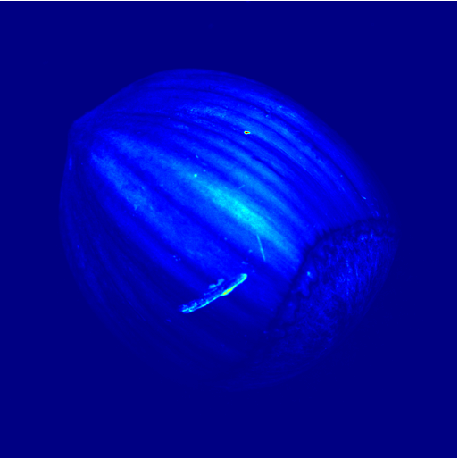}
			\includegraphics[width=\linewidth]{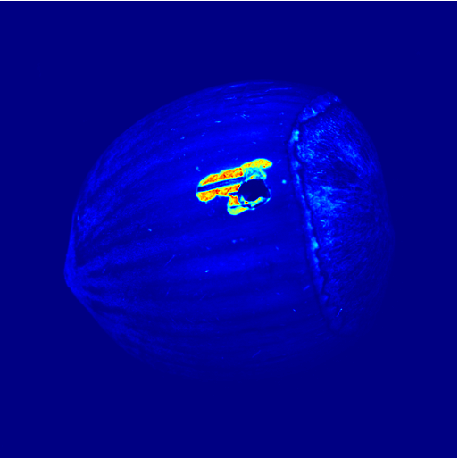}
			\includegraphics[width=\linewidth]{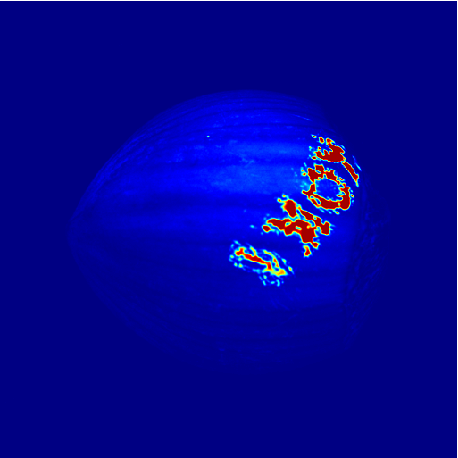}
			\caption*{RGAE}
		\end{subfigure}   	
		\begin{subfigure}[b]{0.106\linewidth}
			\centering
			\includegraphics[width=\linewidth]{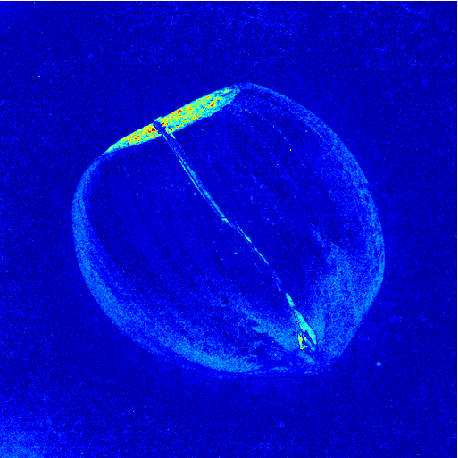}
			\includegraphics[width=\linewidth]{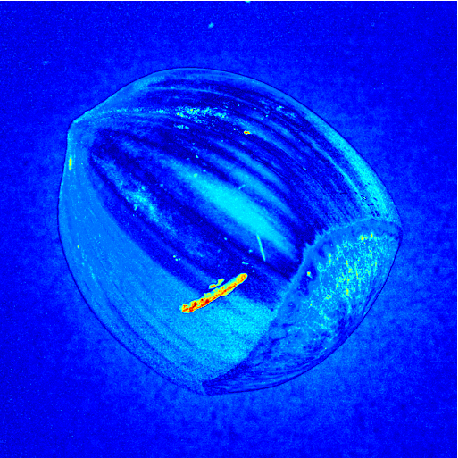}
			\includegraphics[width=\linewidth]{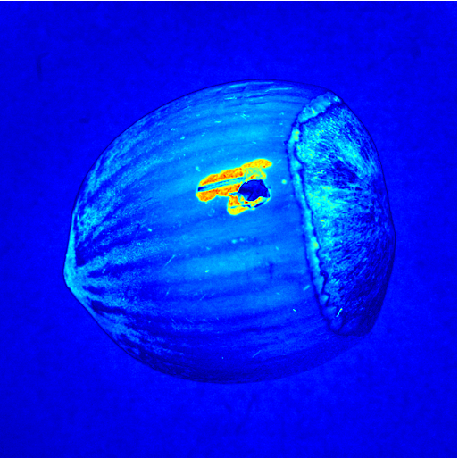}
			\includegraphics[width=\linewidth]{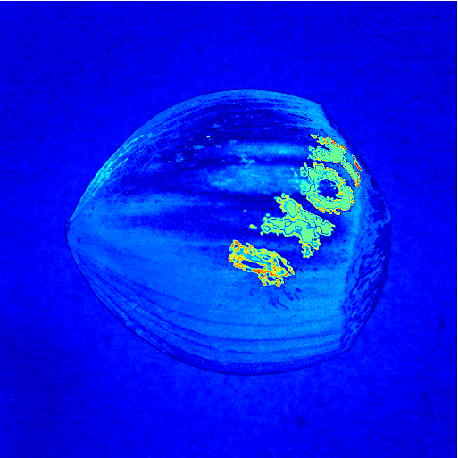}
			\caption*{GAED}
		\end{subfigure}
		\begin{subfigure}[b]{0.106\linewidth}
			\centering
			\includegraphics[width=\linewidth]{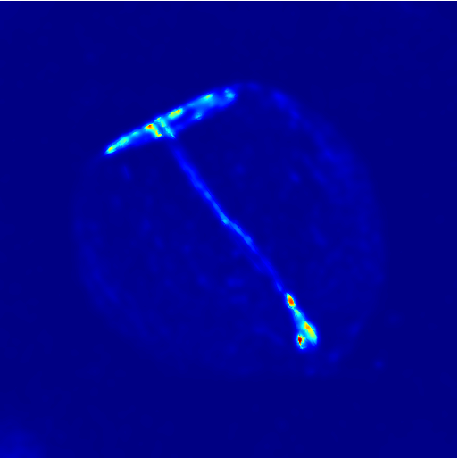}
			\includegraphics[width=\linewidth]{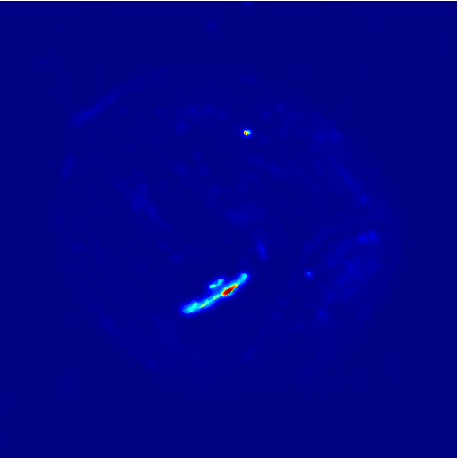}
			\includegraphics[width=\linewidth]{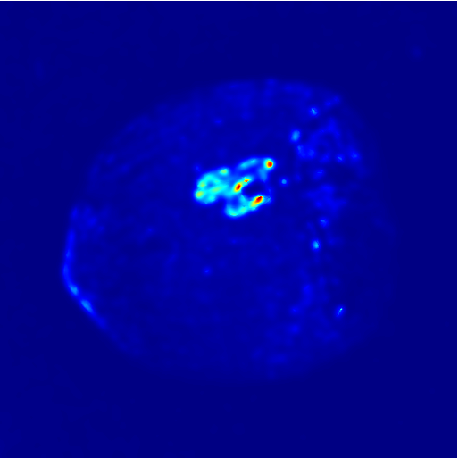}
			\includegraphics[width=\linewidth]{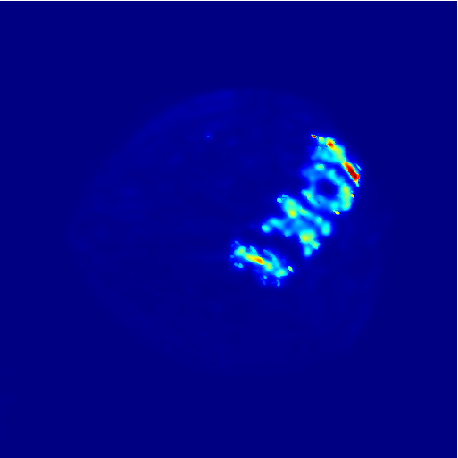}
			\caption*{LTD}
		\end{subfigure}
	\end{subfigure}
	\vfill
	\caption{Target detection results by different methods for MVTec dataset.}
	\label{fig:2D:MVTec}
\end{figure}

\begin{table}[htbp]
	\centering
	\caption{Comparison of AUC values (\%) and running time (s) of different methods for MVTec dataset.}\resizebox{\linewidth}{!}{
	\begin{tabular}{ccccccccccc}
		\toprule
		Algorithm & Index & RX    & RPCA  & LRASR & PTA   & TPCA  & TLRSR & RGAE  & GAED  & LTD \\
		\midrule
		\multirow{2}{*}{Crack} & AUC   & 77.23  & 71.12  & 29.98  & 45.66  & 65.75  & 83.34  & 82.03  & 78.37  & \textbf{98.26 } \\
		& Time & 0.05  & 0.40  & 977.33  & 16.04  & 14.33  & 29.41  & 818.99  & 669.92  & 4.99  \\
		\midrule
		\multirow{2}{*}{Cut} & AUC   & 89.46  & 91.50  & 33.48  & 72.30  & 56.40  & 91.11  & 88.54  & 86.02  & \textbf{99.76 } \\
		& Time & 0.05  & 0.40  & 991.03  & 16.81  & 13.96  & 32.88  & 837.77  & 662.67  & 4.75  \\
		\midrule
		\multirow{2}{*}{Hole} & AUC   & 91.26  & 88.50  & 22.93  & 78.77  & 82.01  & 95.75  & 93.88  & 90.34  & \textbf{99.25 } \\
		& Time & 0.05  & 0.40  & 979.06  & 16.26  & 14.62  & 46.29  & 814.96  & 677.04  & 2.62  \\
		\midrule
		\multirow{2}{*}{Print} & AUC   & 95.70  & 95.60  & 58.68  & 89.03  & 68.48  & 94.21  & 97.05  & 87.42  & \textbf{99.87 } \\
		& Time & 0.04  & 0.40  & 982.01  & 15.31  & 14.00  & 32.08  & 978.39  & 684.22  & 3.44  \\
		\bottomrule
	\end{tabular}}%
	\label{tab:MVTec}%
\end{table}%

Figure \ref{fig:ROC:MVTec} presents a detailed comparison of the ROC curves for each method. The detection probabilities of the proposed LTD method approach 1 when the false alarm rate is approximately $1e\text{-}1$. Notably, our LTD method outperforms others in detection probability when the false alarm rate exceeds $1e\text{-}2$. Figure \ref{fig:AB:MVTec} further compares the methods through box plots, confirming that our approach achieves the superior performance, aligning with the results in Figure \ref{fig:ROC:MVTec}.

The anomaly detection results from MVTec dataset are also consistent with those from ABU dataset and all these demonstrate that our method achieves superior anomaly detection accuracy and operates with enhanced efficiency.

\begin{figure}[htbp]
	\centering
	\begin{subfigure}[b]{1\linewidth}
		\begin{subfigure}[b]{1\linewidth}
			\centering
			\includegraphics[width=0.95\linewidth]{Legend}
		\end{subfigure} 
		\begin{subfigure}[b]{0.245\linewidth}
			\centering
			\includegraphics[width=\linewidth]{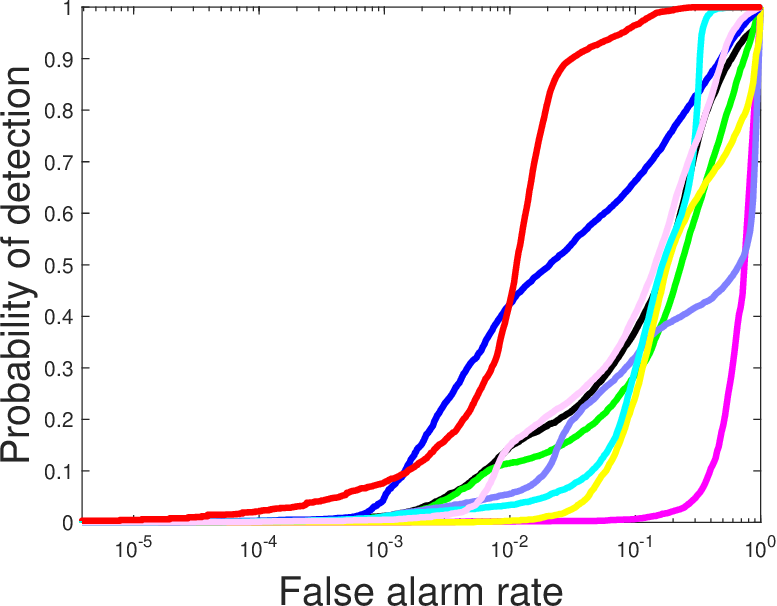}
			\caption{Crack}
		\end{subfigure}   	
		\begin{subfigure}[b]{0.245\linewidth}
			\centering
			\includegraphics[width=\linewidth]{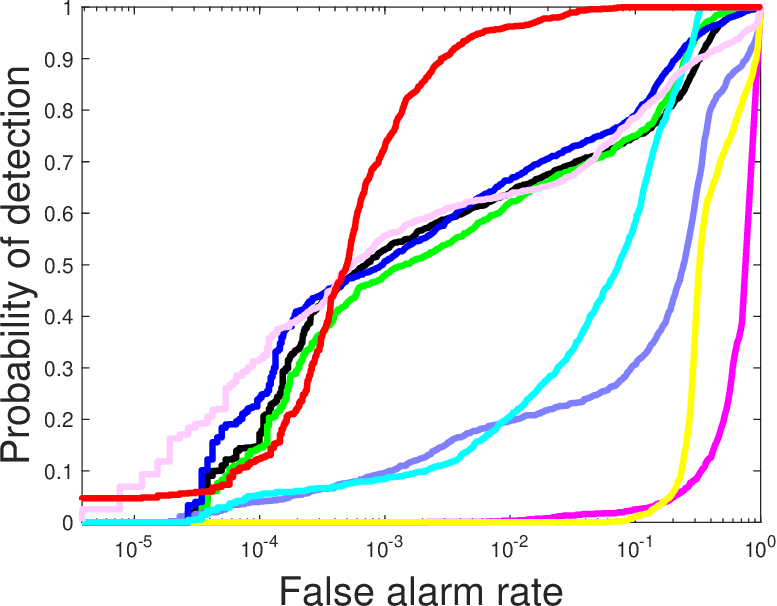}
			\caption{Cut}
		\end{subfigure}
		\begin{subfigure}[b]{0.245\linewidth}
			\centering
			\includegraphics[width=\linewidth]{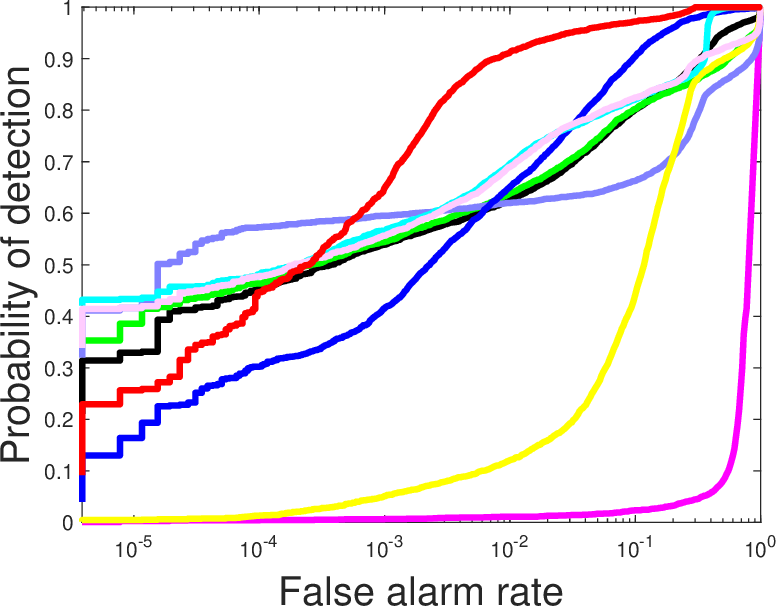}
			\caption{Hole}
		\end{subfigure}
		\begin{subfigure}[b]{0.245\linewidth}
			\centering
			\includegraphics[width=\linewidth]{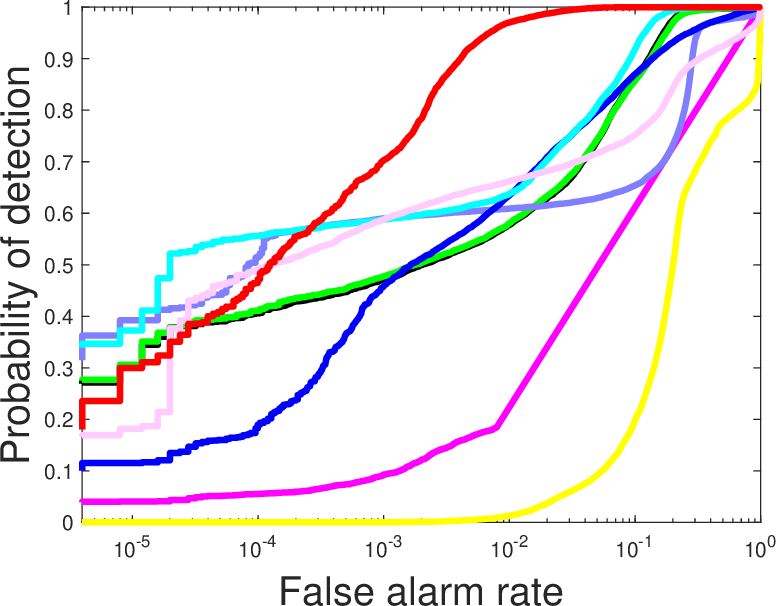}
			\caption{Print}
		\end{subfigure}
	\end{subfigure}
	\vfill
	\caption{ROC curves obtained by different methods for MVTec dataset.}
	\label{fig:ROC:MVTec}
\end{figure}

\begin{figure}[htbp]
	\centering
	\begin{minipage}{0.9\textwidth}
		\begin{subfigure}[b]{0.245\linewidth}
			\centering
			\includegraphics[width=\linewidth]{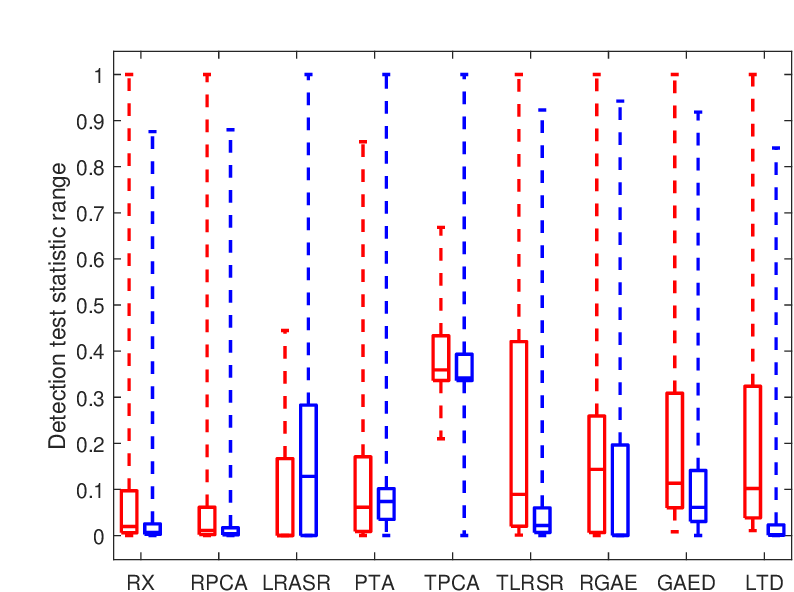}
			\caption{Crack}
		\end{subfigure}   	
		\begin{subfigure}[b]{0.245\linewidth}
			\centering
			\includegraphics[width=\linewidth]{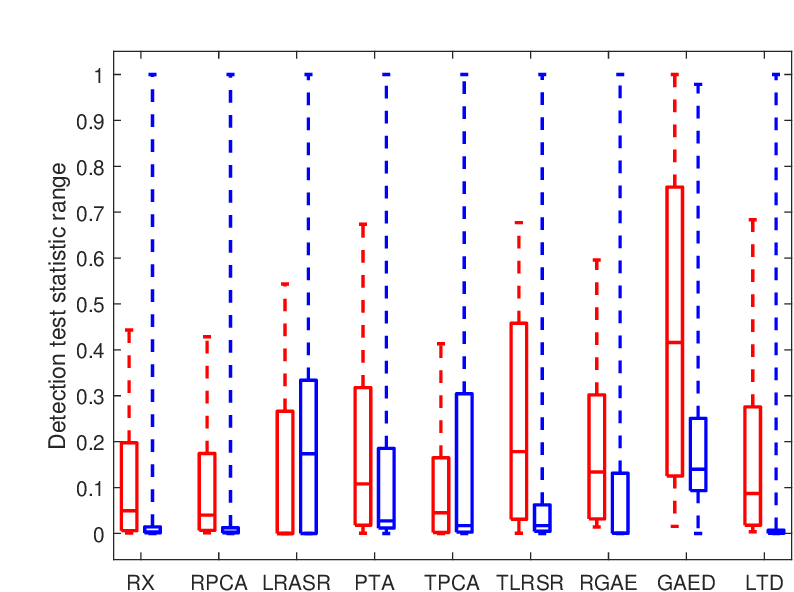}
			\caption{Cut}
		\end{subfigure}
		\begin{subfigure}[b]{0.245\linewidth}
			\centering
			\includegraphics[width=\linewidth]{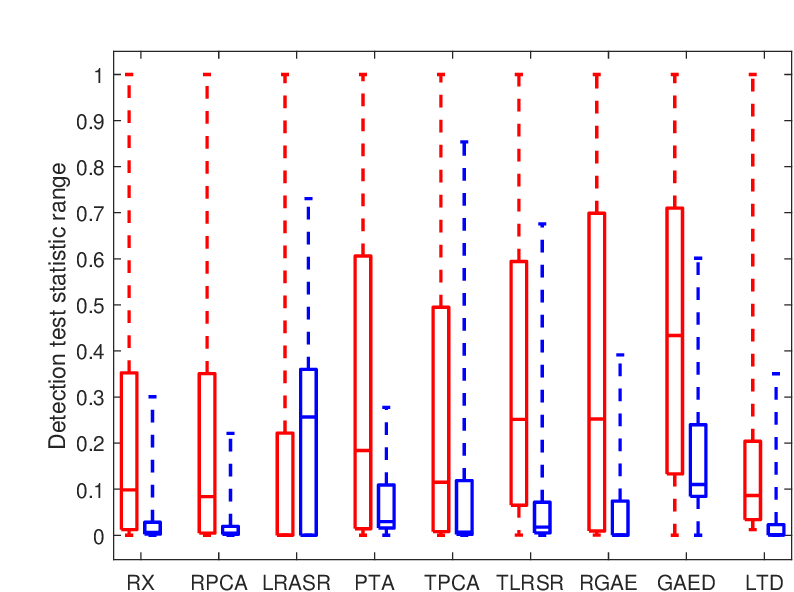}
			\caption{Hole}
		\end{subfigure}
		\begin{subfigure}[b]{0.245\linewidth}
			\centering
			\includegraphics[width=\linewidth]{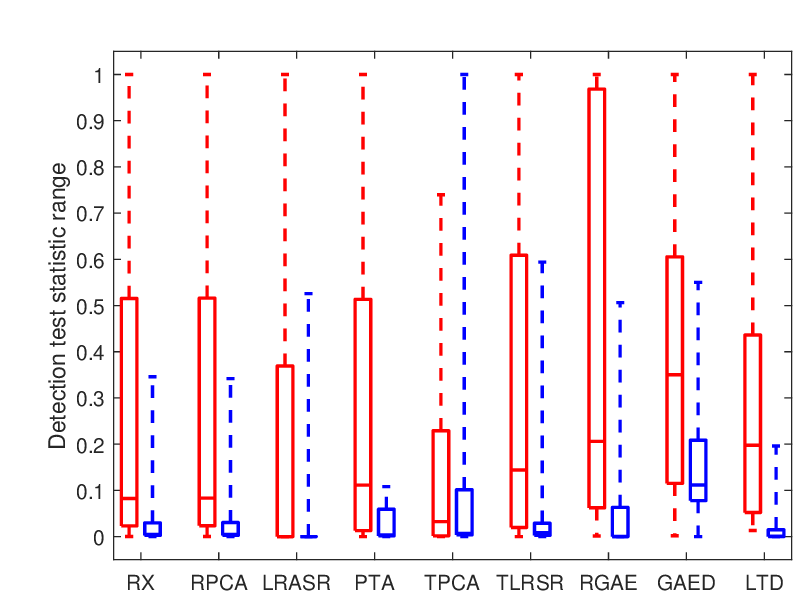}
			\caption{Print}
		\end{subfigure}
	\end{minipage}\hfill
	\begin{minipage}{0.1\textwidth}
		\centering
		\includegraphics[width=\linewidth]{Box}
	\end{minipage}
	\vfill
	\caption{Separability maps of different methods for MVTec dataset.}
	\label{fig:AB:MVTec}
\end{figure}

\section{Conclusion}
In this paper, we propose a novel LTD model for HAD. In our model, we first use NMF to reduce spectral dimension redundancy, which not only overcomes the drawbacks of undesirable statistical and geometrical properties but also enhances computational efficiency, while simultaneously generating spectral anomaly.
We then apply LRTR to capture spatial features, leading to the identification of spatial anomaly. A key innovation in LRTR is the substitution of the tensor tubal rank with group sparsity regularization, which iteratively reduces data size by eliminating groups of all-zero elements, further improving algorithmic efficiency. By integrating spectral anomaly from NMF with spatial anomaly from LRTR, we derive a spectral-spatial anomaly, leveraging the complementarity between the spectral and spatial domains to boost detection performance. Lastly, a guided image filter is applied to the spectral-spatial anomaly, yielding clearer target detection with reduced noise. We also introduce a PAM algorithm to solve the LTD model, proving that the iterative sequence converges to a critical point. Experimental results demonstrate that our method not only delivers superior performance but also operates faster than most state-of-the-art HAD methods.

\section*{Acknowledgments}
The authors thank the anonymous reviewers for their valuable suggestions. 

\section*{Funding}
 National Natural Science Foundation of China (11971159 and 12071399 to Minru Bai; 12021001 and 92473208 to Yu-Hong Dai); Hunan Provincial Key Laboratory of Intelligent Information Processing and Applied Mathematics (Minru Bai); Postgraduate Scientific Research Innovation Project of Hunan Province (CX20240363 to Quan Yu).


\bibliographystyle{abbrvnat}
\bibliography{reference}

\appendix
\numberwithin{lemma}{section}
\numberwithin{defi}{section}
\titleformat{\section}{\bfseries}{Appendix \thesection.}{0.2em}{} 

\section{Preliminaries of tensors}\label{app:pre}
In Appendix \ref{app:pre}, we present a comprehensive set of definitions for the tensor concepts that are utilized consistently throughout this paper.

\begin{defi}(f-diagonal tensor \cite{KM11}).
	If every frontal slice of a tensor constitutes a diagonal matrix, then the tensor is termed $f$-diagonal.
\end{defi}

\begin{defi}(conjugate transpose \cite{KM11}).\label{def:ct}
   The conjugate transpose of a tensor, denoted by $\mathcal{X}^T$, for $\mathcal{X} \in \mathbb{R}^{n_{1} \times n_{2} \times n_{3}}$, is the tensor obtained by taking the conjugate transpose of each frontal slice and then reversing the order of transposed frontal slices 2 through $n_{3}$.
\end{defi}

\begin{defi}(identity tensor \cite{KM11}). \label{def:it}
	The identity tensor, denoted by $ \mathcal{I} \in \mathbb{R}^{n \times n \times n_{3}} $, is the tensor whose first frontal slice is the identity matrix, with all other frontal slices being zeros.
\end{defi}

\begin{defi}(orthogonal tensor \cite{KM11}).
	A tensor $\mathcal{P} \in$ $\mathbb{R}^{n \times n \times n_{3}}$ is orthogonal if it fulfills the condition $\mathcal{P}^T * \mathcal{P}=\mathcal{P} * \mathcal{P}^T=\mathcal{I}.$
\end{defi}

\section{Proof of Theorem \ref{Thm:rank}}\label{app:thm_rank}
Initially, let us delineate the notations to be utilized.
\begin{itemize}
	\item The symbol $\mO$ denotes the tensor wherein all elements are zero;
	\item for the given tensors $\X \in \R^{n_1\times r_1 \times n_3}$ and $\Y \in \R^{n_1\times r_2 \times n_3}$, the tensor $\Z = [\X, \Y] \in \R^{n_1\times (r_1+r_2) \times n_3}$ is defined as the concatenation of $\X$ and $\Y$ along the lateral slices, i.e., $\Z(:,1:r_1,:) = \X$ and $\Z(:,r_1+1:r_1+r_2,:) = \Y$.
\end{itemize}
\begin{proof}
	 Let $\hat{r}:=\|\Z\|_{F,0} $. Without loss of generality, assume that $\Z$ can be partitioned as $\Z = [\Z_1, \mO]$, where $\Z_1 \in \R^{n_2 \times \hat{r} \times b}$ and $\mO \in \R^{n_2 \times (r-\hat{r}) \times b}$. Correspondingly, tensor $\D$ is divided into $\D = [\D_1, \D_2]$, with $\D_1 \in \R^{n_1 \times \hat{r} \times b}$ and $\D_2 \in \R^{n_1 \times (r-\hat{r}) \times b}$. Consequently, $\D * \Z^T = \D_1 * \Z_1^T$.
	Combining this with $\mL = \D * \Z^T$, one has
	$$
	\rank_t\left(\mL\right) = \rank_t\left(\D * \Z^T\right) =\rank_t\left(\D_1 * \Z_1^T\right)\le \rank_t\left( \Z_1\right) \le \hat{r} = \left\|\Z\right\|_{F,0},
	$$
	where the first inequality follows from \cite[Lemma 2]{ZLLZ18}. When $\D = \U(:,1:r,:)$ and $\Z = \V*\left(\mS(1:r,:,:)\right)^T$, equality is achieved. The tensors $\U$, $\mS$, and $\V$ are derived from the T-SVD of $\mL$, represented as $\mL = \U*\mS*\V^T$.
\end{proof}

\section{Proof of Theorem \ref{Thm:cap}}\label{app:thm_cap}
Before proving the theorem, we introduce some notations. 
We denote
$$
\Omega=\left\lbrace \Z \in \R^{n_2\times r \times b} \mid \exists\; \D\; \mbox{\rm s.t.}\; \mL = \D * \Z^T, \D^T*\D=\I \right\rbrace.
$$
We also denote the group support set of $\Z$ as
\[ \Gamma(\Z):=\left\lbrace j \mid\left\|\Z(:,j,:)\right\| \neq 0, \, j=1, \ldots, r\right\rbrace=\Gamma_{1}(\Z) \cup \Gamma_{2}(\Z), \]
\[ \Gamma_{1}\left(\Z\right):=\left\lbrace j \mid \left\|\Z(:,j,:)\right\|<\nu, j \in \Gamma\left(\Z\right)\right\rbrace, \, \Gamma_{2}\left(\Z\right):=\left\lbrace j \mid \left\|\Z(:,j,:)\right\| \geq \nu, j \in \Gamma\left(\Z\right)\right\rbrace. \]
For an integer $s$ with $0 \leq s \leq r$, denote $
Q^s=\left\{\Z \in \mathbb{R}^{n_2\times r \times b}\mid\|\Z\|_{F,0} \leq s\right\}$ and $\operatorname{dist}\left(\Omega, Q^s\right)=\inf_{\Z \in \Omega} \operatorname{dist}\left(\Z, Q^s\right)$. Here $\operatorname{dist}\left(\Z, Q^s\right)=\inf_{\tilde{\Z} \in Q^s}\|\Z-\tilde{\Z}\|$. 

For simplicity, we denote
$$
\left\{\begin{array}{l}
	(P_0) ~ \min \left\{\left\|\Z\right\|_{F,0}: \mL=\D*\Z^T, \D^T*\D=\I\right\}; \\
	(P_\psi)~  \min \left\{\left\|\Z\right\|_{F,1}^\psi: \mL=\D*\Z^T, \D^T*\D=\I\right\}.
\end{array}\right.
$$
Obviously, the optimal value of $(P_0)$ is a positive integer, which we denote as $N$. Then for any $\Z\in \Omega$, one has $\left\|\Z\right\|_{F,0} \ge N$, which implies that $ \operatorname{dist}(\Omega, Q^{N-n})>0 $ for all $n=1,2,\ldots,N$. Up to this point, we can construct $\bar{\nu}$ as follows:
$$
\bar{\nu}=\min\left\lbrace\frac{1}{n} \operatorname{dist}\left(\Omega, Q^{N-n}\right): n=1, \ldots, N\right\rbrace.
$$
\begin{proof}
	$(i)$ Let $(\D^*, \Z^*)$ be a global minimizer of problem $(P_0)$ with 
	$ \left\| \Z^* \right\|_{F,0}=N $. We demonstrate that $(\D^*, \Z^*)$ is also a global minimizer of problem $(P_\psi)$ for any $0<\nu<\bar{\nu}$. Given that the global optimality of $(P_0)$ implies $ \left\| \Z \right\|_{F,0}\ge N $ for $\Z \in \Omega$, we establish this conclusion by considering two cases.
	
	Case 1. $\|\Z\|_{F,0}=N$. It is easy to see that for any $j \in \Gamma(\Z)$,
	\begin{equation*}
		\begin{aligned}
			\left\|\Z(:,j,:)\right\| &\geq \min \left\lbrace \left\|\Z(:,l,:)\right\|>0: l=1, \ldots, r\right\rbrace\\
			&=\operatorname{dist}\left(\Z, Q^{N-1}\right) \geq \operatorname{dist}\left(\Omega, Q^{N-1}\right) \geq \bar{\nu}>\nu,	
		\end{aligned}
	\end{equation*}
	which means that $\|\Z\|_{F,1}^\psi=N=\|\Z^*\|_{F,1}^\psi$.	
	
	Case 2. $\|\Z\|_{F,0}=M>N$. Without loss of generality, assume $\|\Z(:,1,:)\|, \ldots,\|\Z(:,j_1,:)\| \in(0, \nu)$, $\|\Z(:,j_1+1,:)\|, \ldots,\|\Z(:,j_2,:)\| \in[\nu, +\infty)$ and $\left\|\Z(:,j_2+1,:)\right\|=\cdots=\left\|\Z(:,r,:)\right\|=0$. If $M^{\prime}=\Gamma_{2}(\Z)\ge N$, from $\phi(x)>0$ for $x>0$, one has $\|\Z\|_{F,1}^\psi>N$. Now assume $M^{\prime}<N$, we know that 
	$$\frac{1}{N-M^{\prime}} \operatorname{dist}\left(\Omega, Q^{M^{\prime}}\right) \geq \bar{\nu}.$$ 
	Together with
	\begin{equation*}
		\begin{aligned}
		\left\|\Z(:,1,:)\right\|+\cdots+\left\|\Z(:,j_1,:)\right\|
			&\geq \sqrt{\left\|\Z(:,1,:)\right\|^2+\cdots+\left\|\Z(:,j_1,:)\right\|^2} \\
			&\geq \operatorname{dist}\left(\Z, Q^{M^{\prime}}\right) \geq \operatorname{dist}\left(\Omega, Q^{M^{\prime}}\right),	
		\end{aligned}
	\end{equation*}
	we get 
	\begin{equation}
		\begin{aligned}
			\|\Z\|_{F,1}^\psi &= \psi\left(\left\|\Z(:,1,:)\right\|\right)+\cdots+\psi\left(\left\|\Z(:,j_1,:)\right\|\right)+\cdots+\psi\left(\left\|\Z(:,j_2,:)\right\|\right) \\
			&\geq\frac{1}{\nu}\left(\left\|\Z(:,1,:)\right\| +\cdots+\left\|\Z(:,j_1,:)\right\|\right)+M^{\prime} \\
			& \geq \frac{1}{\nu} \operatorname{dist}\left(\Omega, Q^{M^{\prime}}\right)+M^{\prime}  \\ &\geq \frac{1}{\nu}\left(N-M^{\prime}\right) \bar{\nu}+M^{\prime} \\
			&>\frac{1}{\bar{\nu}}\left(N-M^{\prime}\right) \bar{\nu}+M^{\prime}=N.
		\end{aligned}
	\end{equation}
	The aforementioned two cases indicate that $\|\Z\|_{F,1}^\psi \geq N = \|\Z^*\|_{F,1}^\psi$ for all $\Z\in\Omega$. Hence, $(\D^*, \Z^*)$ is also a global minimizer of problem $(P_\psi)$. Furthermore, for each global minimizer $(\D^*, \Z^*)$ of problem $(P_\psi)$, we have $\|\Z^*\|_{F,0}=\|\Z^*\|_{F,1}^\psi $.
	
	$(ii)$ Let $(\D^\#, \Z^\#)$ be a global minimizer of problem $(P_\psi)$ for $0<\nu<\bar{\nu}$. Assume on the contrary $(\D^\#, \Z^\#)$ is not a solution of problem $(P_0)$. Let $(\D^*, \Z^*)$ be a global minimizer of problem $(P_0)$, that is, $ \|\Z^*\|_{F,0}=N $. By $\psi(x) \leq|x|^{0}$, we have $\|\Z^*\|_{F,1}^\psi\le \left\|\Z^*\right\|_{F,0}$. Using similar ways in the proof for Case 2 above, we will obtain $\|\Z^\#\|_{F,1}^\psi>N=\| \Z^*\|_{F,0} \geq \|\Z^*\|_{F,1}^\psi$ for any $0<\nu<\bar{\nu}$. 
	This contradicts the global optimality of $(\D^\#, \Z^\#)$ for problem $(P_\psi)$. Hence $(\D^\#, \Z^\#)$ is a global minimizer of problem $(P_0)$.
	
	Therefore, whenever $0<\nu<\bar{\nu}$, problems $(P_0)$ and $(P_\psi)$ have the same global minimizers and optimal values.
\end{proof}

\section{Proof of Lemma \ref{lem:or}}\label{app:lem_or}

\begin{lemma}\cite{KM11}\label{lem:equ}
	Suppose that $\X \in \mathbb{R}^{n_1 \times r \times n_3}$ and $\Y \in \mathbb{R}^{r \times n_2 \times n_3}$ are two arbitrary tensors. Let $\Z=\X * \Y$. Then, $\Z=\X * \Y$ and ${\bar Z}={\bar X} {\bar Y}$ are equivalent to each other.
\end{lemma}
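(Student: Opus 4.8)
The plan is to exploit the classical fact that the discrete Fourier transform block-diagonalizes block circulant matrices, which is the algebraic engine behind the t-product. Let $F_{n_3}\in\mathbb{C}^{n_3\times n_3}$ denote the unnormalized DFT matrix with entries $(F_{n_3})_{pq}=\omega^{(p-1)(q-1)}$, where $\omega=e^{-2\pi\mathrm{i}/n_3}$, so that $\bar{\X}=\operatorname{fft}(\X,[~],3)$ has frontal slices $\bar{X}^{(k)}=\sum_{j=1}^{n_3}\omega^{(j-1)(k-1)}X^{(j)}$. The first step is to record the block-diagonalization identity
\begin{equation*}
(F_{n_3}\otimes I_{n_1})\operatorname{bcirc}(\X)(F_{n_3}^{-1}\otimes I_r)=\operatorname{blkdiag}\bigl(\bar{X}^{(1)},\ldots,\bar{X}^{(n_3)}\bigr),
\end{equation*}
where $\otimes$ is the Kronecker product and $I_m$ the $m\times m$ identity matrix. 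This follows from the scalar fact that $F_{n_3}$ diagonalizes every $n_3\times n_3$ circulant matrix, lifted to the block setting through the Kronecker structure of $\operatorname{bcirc}(\X)$, since the first block-column $(X^{(1)},\ldots,X^{(n_3)})$ generates the circulant pattern whose DFT is precisely the collection of frontal slices $\bar{X}^{(k)}$.

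Next I would translate the t-product into this framework. By Definition \ref{def:t-product}, $\operatorname{unfold}(\Z)=\operatorname{bcirc}(\X)\operatorname{unfold}(\Y)$, and since $\operatorname{unfold}$ stacks the frontal slices along the first mode, left-multiplication by $F_{n_3}\otimes I_{n_1}$ produces the stacked frontal slices of the mode-$3$ DFT; that is, $(F_{n_3}\otimes I_{n_1})\operatorname{unfold}(\Z)=\operatorname{unfold}(\bar{\Z})$, and likewise for $\X$ and $\Y$. Inserting $(F_{n_3}^{-1}\otimes I_r)(F_{n_3}\otimes I_r)=I_{rn_3}$ and applying the identity above yields
\begin{equation*}
\operatorname{unfold}(\bar{\Z})=(F_{n_3}\otimes I_{n_1})\operatorname{bcirc}(\X)(F_{n_3}^{-1}\otimes I_r)(F_{n_3}\otimes I_r)\operatorname{unfold}(\Y)=\operatorname{blkdiag}\bigl(\bar{X}^{(k)}\bigr)\operatorname{unfold}(\bar{\Y}).
\end{equation*}
Reading off the $k$-th block gives $\bar{Z}^{(k)}=\bar{X}^{(k)}\bar{Y}^{(k)}$ for every $k\in[n_3]$, which is exactly the facewise identity $\bar{Z}=\bar{X}\bar{Y}$.

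Finally, the equivalence is obtained from invertibility: since the mode-$3$ DFT is a linear bijection with inverse $\operatorname{ifft}(\cdot,[~],3)$, every step above is reversible. Thus $\bar{Z}^{(k)}=\bar{X}^{(k)}\bar{Y}^{(k)}$ for all $k$ forces $\operatorname{unfold}(\Z)=\operatorname{bcirc}(\X)\operatorname{unfold}(\Y)$, i.e. $\Z=\X*\Y$, and conversely. I expect the main obstacle to be a clean verification of the block-diagonalization identity, which requires careful index bookkeeping over the circulant wrap-around; this reduces to showing that $F_{n_3}CF_{n_3}^{-1}$ is diagonal for a scalar circulant $C$ and then extending the computation entrywise across the $n_1\times r$ blocks via the Kronecker product.
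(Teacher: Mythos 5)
Your proof is correct, and it is essentially the canonical argument: the paper itself states this lemma as a direct citation to \cite{KM11} without giving a proof, and your route --- the identity $(F_{n_3}\otimes I_{n_1})\operatorname{bcirc}(\X)(F_{n_3}^{-1}\otimes I_{r})=\operatorname{blkdiag}\bigl(\bar{X}^{(1)},\ldots,\bar{X}^{(n_3)}\bigr)$ combined with $(F_{n_3}\otimes I_{n_1})\operatorname{unfold}(\Z)=\operatorname{unfold}(\bar{\Z})$ and invertibility of the mode-3 DFT --- is exactly the standard proof from that reference. Both directions of the equivalence and all dimension bookkeeping check out, so there is nothing to fix.
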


\begin{proof}
	Given that $F_{b} \in \R^{b\times b}$ is the Discrete Fourier Transform matrix, and from the property $ F_{b}^TF_{b} = bI $, we have
	$$ \left\langle \D,\G\right\rangle = \frac{1}{b}\left\langle \D\times_3\left(F_{b}^TF_{b}\right),\G\right\rangle =\frac{1}{b}\left\langle \D\times_3F_{b},  \G\times_3F_{b}\right\rangle = \frac{1}{b}\left\langle \bar{D},  \bar{G}\right\rangle = \frac{1}{b}\sum_{k=1}^{b}\left\langle \bar{D}^{(k)},  \bar{G}^{(k)}\right\rangle. $$
	We obtain from the above equality that
	\begin{equation*}
		\begin{aligned}
			\mathop{\arg\max}\limits_{\D^T*\D = \I} ~\left\langle \D,\G\right\rangle = \mathop{\arg\max}\limits_{\D^T*\D = \I} ~\sum_{k=1}^{b}\left\langle \bar{D}^{(k)},  \bar{G}^{(k)}\right\rangle = \mathop{\arg\max}\limits_{{\bar D}^{(k)^T}{\bar D}^{(k)} = I} ~\sum_{k=1}^{b}\left\langle \bar{D}^{(k)},  \bar{G}^{(k)}\right\rangle,
		\end{aligned}
	\end{equation*}
	where the last equality follows from Lemma \ref{lem:equ}. By orthogonal Procrustes problem  \cite{Sch66}, we obtain
	$$\mathop{\arg\max}\limits_{{\bar D}^{(k)^T}{\bar D}^{(k)} = I} ~\left\langle \bar{D}^{(k)},  \bar{G}^{(k)}\right\rangle = {\bar U}^{(k)}{\bar V}^{(k)^T},$$
	where ${\bar U}^{(k)}$ and $ {\bar V}^{(k)} $ are obtained by the singular value decomposition: $\bar{G}^{(k)} = {\bar U}^{(k)}{\bar S}^{(k)}{\bar V}^{(k)^T} $, which completes the proof of this statement.
\end{proof}

\end{document}